\documentclass{article}

\usepackage{microtype}
\usepackage{graphicx}
\usepackage{subcaption}
\usepackage{contour}
\usepackage{booktabs} 
\usepackage[table,xcdraw]{xcolor}
\usepackage{multirow}

\usepackage{hyperref}

\usepackage[accepted]{icml2026}

\usepackage{amsmath}
\usepackage{amssymb}
\usepackage{mathtools}
\usepackage{amsthm}
\usepackage{bbm}
\usepackage{tikz}
\usepackage[normalem]{ulem}

\newcommand*\circled[1]{\tikz[baseline=(char.base)]{
            \node[shape=circle,draw,inner sep=1pt] (char) {#1};}}

\usepackage[capitalize,noabbrev]{cleveref}

\definecolor{firstbest}{HTML}{EA4335}   
\definecolor{secondbest}{HTML}{34A853}  
\definecolor{thirdbest}{HTML}{4285F4}   
\definecolor{catblue}{HTML}{1f77b4}   
\definecolor{catorange}{HTML}{ff7f0e} 
\definecolor{catgreen}{HTML}{2ca02c}  
\definecolor{catred}{HTML}{d62728}    
\newcommand{\catdot}[1]{%
  \tikz[baseline={(mark.base)}]{\node[circle, draw=black, line width=0.25pt, fill=#1, inner sep=2pt] (mark) {};}%
}

\theoremstyle{plain}
\newtheorem{theorem}{Theorem}[section]
\newtheorem{proposition}[theorem]{Proposition}
\newtheorem{lemma}[theorem]{Lemma}
\newtheorem{corollary}[theorem]{Corollary}
\theoremstyle{definition}
\newtheorem{definition}[theorem]{Definition}

\theoremstyle{remark}
\newtheorem{remark}[theorem]{Remark}

\usepackage[textsize=tiny]{todonotes}

\icmltitlerunning{Plain Transformers are Surprisingly Powerful Link Predictors}

\begin{document}

\twocolumn[
  \icmltitle{Plain Transformers are Surprisingly Powerful Link Predictors}



  \icmlsetsymbol{equal}{*}

  \begin{icmlauthorlist}
    \icmlauthor{Quang Truong}{msu}
    \icmlauthor{Yu Song}{msu}
    \icmlauthor{Donald Loveland}{snap}
    \icmlauthor{Mingxuan Ju}{snap}
    \icmlauthor{Tong Zhao}{snap}
    \icmlauthor{Neil Shah}{snap}
    \icmlauthor{Jiliang Tang}{msu}
  \end{icmlauthorlist}

  \icmlaffiliation{msu}{Department of Computer Science and Engineering, Michigan State University, East Lansing, MI, USA.}
  \icmlaffiliation{snap}{Snap Inc., Bellevue, WA, USA}

  \icmlcorrespondingauthor{Quang Truong}{truongc4@msu.edu}

  \icmlkeywords{Machine Learning, ICML}

  \vskip 0.3in
]



\printAffiliationsAndNotice{}  

\begin{abstract}

Link prediction is a core challenge in graph machine learning, demanding models that capture rich and complex topological dependencies. While Graph Neural Networks (GNNs) are the standard solution, state-of-the-art pipelines often rely on explicit structural heuristics or memory-intensive node embeddings---approaches that struggle to generalize or scale to massive graphs. Emerging Graph Transformers (GTs) offer a potential alternative but often incur significant overhead due to complex structural encodings, hindering their applications to large-scale link prediction. We challenge these sophisticated paradigms with PENCIL, an encoder-only plain Transformer that replaces hand-crafted priors with attention over sampled local subgraphs, retaining the scalability and hardware efficiency of standard Transformers. Through experimental and theoretical analysis, we show that PENCIL extracts richer structural signals than GNNs, implicitly generalizing a broad class of heuristics and subgraph-based expressivity. Empirically, PENCIL outperforms heuristic-informed GNNs and is far more parameter-efficient than ID-embedding--based alternatives, while remaining competitive across diverse benchmarks---even without node features. Our results challenge the prevailing reliance on complex engineering techniques, demonstrating that simple design choices are potentially sufficient to achieve the same capabilities. Our code is publicly available at \url{https://github.com/quang-truong/pencil}.

\end{abstract}

\section{Introduction} \label{sec:introduction}

Link prediction, the task of inferring missing edges between node pairs in graph-structured data, is a fundamental primitive with applications spanning recommendation systems \cite{huang_link_2005} and drug discovery \cite{Abbas2021ApplicationON}. Given its centrality in graph learning, Graph Neural Networks (GNNs) have become the de facto standard for this challenge. However, standard Message-Passing Neural Networks (MPNNs), the primary engine for most GNN-based encoders, are inherently limited by their node-centric aggregation, which often renders structurally distinct node pairs indistinguishable if the individual nodes possess symmetric local neighborhoods \cite{zhang_labeling_2022}. Therefore, state-of-the-art architectures typically integrate diverse structural signals, ranging from local heuristics \cite{wang2024neural, yun_neo-gnns_2021, chamberlain2023graph} and pairwise encodings \cite{li_distance_2020, zhang_link_2018} to global graph heuristics \cite{shomer_lpformer_2024} and learnable per-node embeddings \cite{ma_reconsidering_2025, dong_pure_2024}, through either implicit or explicit architectural modifications.

\begin{figure}[t]
    \centering
    \includegraphics[width=0.925\linewidth]{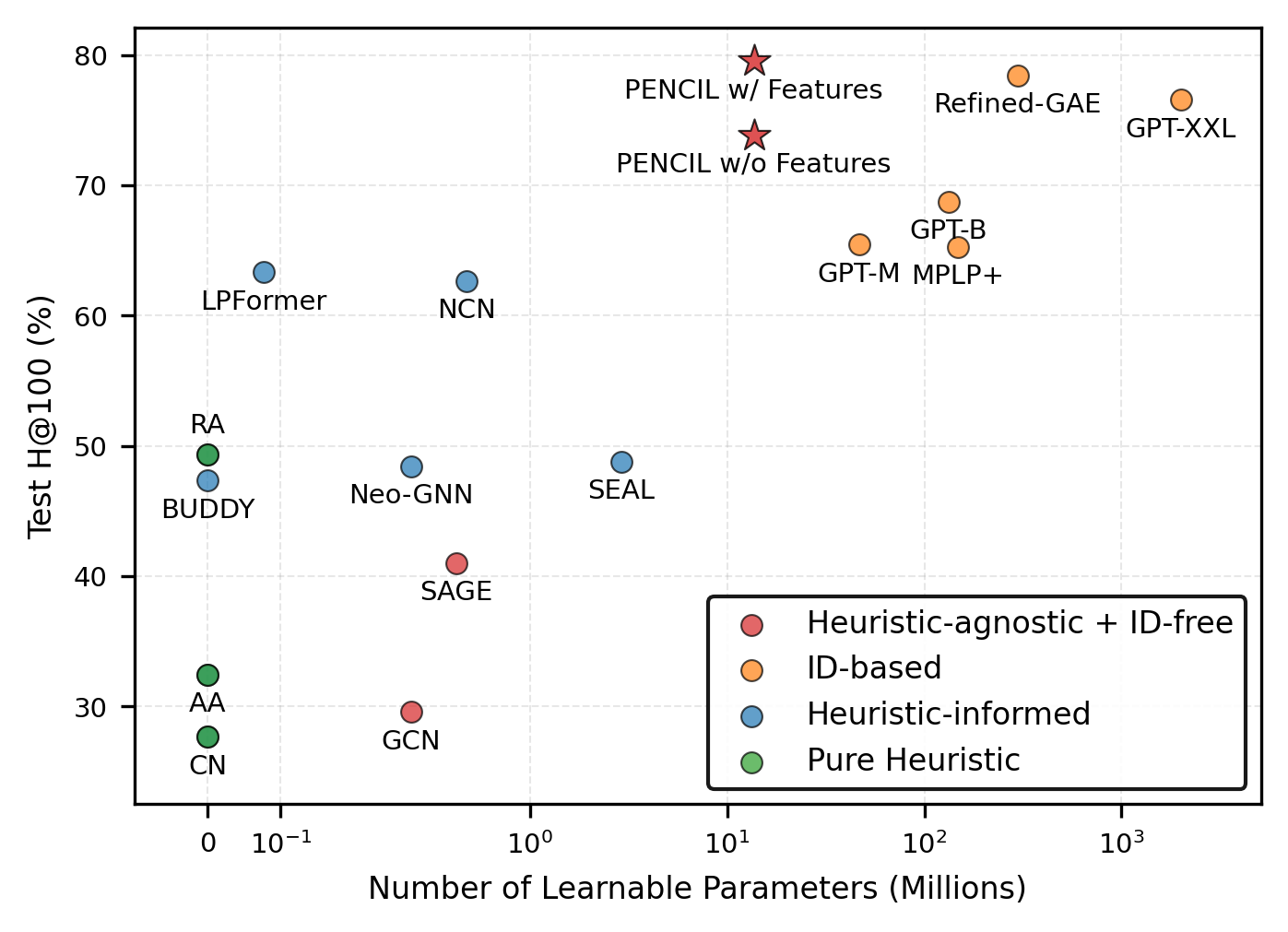}
    \caption{Parameter efficiency vs. performance on \texttt{ogbl-ppa}. PENCIL (\raisebox{0.15em}{\scalebox{0.7}{\contour{black}{\textcolor{firstbest}{$\bigstar$}}}}) achieves state-of-the-art performance without node IDs or handcrafted heuristics, using orders of magnitude fewer parameters than leading ID-based methods.}
    \label{fig:ogbl_ppa_comparison}
\end{figure}

While these augmented architectures address expressiveness, their design choices often compromise scalability and complicate deployment. First, reliance on learned per-node embeddings (ID-based) or precomputed global features often imposes a static dependency on the initial graph state, necessitating expensive retraining or recomputation to incorporate new nodes \cite{fey_rdl_2024,longa2023graph}. Second, at web scale, methods requiring whole-graph passes or the frequent refresh of globally materialized caches create prohibitive memory and update overhead; this necessitates predictors capable of operating on fixed-budget, mini-batch sampled neighborhoods \cite{zhao_gigl_2025, fey_rdl_2024}. Third, as noted by \cite{li_evaluating_2023}, reliance on simple structural cues like common neighbors often results in inflated performance metrics, allowing models to succeed through exploitation rather than by learning generalizable patterns. These limitations collectively motivate a shift toward link predictors designed for realistic deployment scenarios: ID-free, mini-batch efficiency, and structural robustness with minimal reliance on global preprocessing.

To address the limitations of GNNs, recent research has pivoted toward Graph Transformers (GTs). By replacing restricted neighborhood aggregation with all-to-all attention, GTs can be theoretically a universal function approximator under appropriate structural conditioning \cite{muller_attending_2024}. Yet, there is \textit{no consensus on how to bake graph structure into Transformers}. Specifically, researchers typically integrate positional (PE) and structural encodings (SE) derived from spectral signals \cite{rampasek2022GPS, kreuzer2021rethinking, dwivedi2021generalization}, random walks \cite{chen2025learning, kim_revisiting_2025}, graph heuristics \cite{ying_do_transformers_2021}, or GNNs \cite{jain2021representing, dwivedi2025relationalgraphtransformer}. However, while these encodings demonstrate proven success on graph-level tasks, these components are often infeasible for link prediction because they are processed offline or require entire graph structure, thus violating the constraints set above. The field also faces significant architectural heterogeneity, as there is no canonical way to adapt the \textit{plain Transformer} \cite{vaswani2017attention}---originally designed for sequential data---to irregular graph structures \cite{ma_plain_2025}. Current GT designs often necessitate specialized graph-specific attention kernels \cite{ma_graph_2023, shomer_lpformer_2024}, or sophisticated input processing pipelines that introduce substantial computational overhead \cite{chen2022structure, zhao_graphgpt_2025}. Such fundamental departures from standard architectures prevent these models from fully leveraging highly optimized hardware accelerator toolchains, such as efficient attention kernel \cite{dao2022flashattention}. Consequently, there is a clear vacancy for a Transformer-based link predictor that is expressive while adhering to the above deployment constraints.

In this work, we propose \textbf{\underline{P}lain \underline{ENC}oder for \underline{I}nferring \underline{L}inks} (PENCIL), a Transformer-based link predictor that utilizes a standard BERT-style encoder \cite{devlin-etal-2019-bert} for full compatibility with modern hardware. In particular, PENCIL evaluates each candidate link from a fixed-budget, sampled link-centric local neighborhood---without handcrafted heuristic features---enabling straightforward mini-batching while avoiding reliance on globally materialized caches. Notably, PENCIL \textbf{does not need costly offline computation of PEs/SEs}, which differentiates it from existing GTs. To our knowledge, this work provides the first demonstration that plain Transformers serve as highly effective link predictors under strict deployment constraints. Furthermore, we provide a formal theoretical analysis that characterizes the mechanisms underlying this performance and situates the model's expressive power relative to established link prediction paradigms. Main contributions are: 

\textbf{Transformer-based Link Predictor.~} We introduce a Transformer architecture that learns expressive representations from \emph{only} sampled local subgraphs. As highlighted in Figure~\ref{fig:ogbl_ppa_comparison}, it exceeds the performance of others while using $22\times$ to $146\times$ fewer learnable parameters than the next best competitors. Notably, PENCIL achieves a substantial advancement in training efficiency on large-scale datasets, requiring between $6.7\times$ and $40\times$ fewer epochs to converge than pure GNN architectures. Our results indicate that local, sampled contexts provide ample signal for high performance, highlighting a potential over-reliance in existing models on auxiliary components over inherent structural information.

\textbf{Theoretical Unification.~} We provide a theoretical analysis connecting PENCIL to established link prediction paradigms. We show that PENCIL inherently formulates many traditional structural heuristics by design, achieving the expressivity of subgraph-based predictors like SEAL \cite{zhang_link_2018} without explicit hard-coding distance-based labels.

\textbf{Data Insights.~} We find that, on several standard benchmarks, PENCIL can achieve strong performance using structural information alone. This suggests that node features can be weakly informative relative to local structure and may provide limited marginal gains on many datasets.

\section{Preliminaries} \label{sec:preliminaries}

In this section, we will introduce notations and concepts that will be used throughout the paper.

\textbf{Notations.~} Let $G=(\mathbf{A},\mathbf{X})$ denote an (undirected) graph with adjacency matrix $\mathbf{A}\in\{0,1\}^{M\times M}$ and node-feature matrix $\mathbf{X}\in\mathbb{R}^{M\times f}$, where \(M\) is the number of nodes and \(f\) is the feature dimension. Let degree matrix be $\mathbf{D}$. The node set is $\mathcal{V}=\{1,\ldots,M\}$ and the edge set is $\mathcal{E}=\{(i,j)\in\mathcal{V}\times\mathcal{V}:\mathbf{A}_{ij}=1\}$. For $v\in\mathcal{V}$, we define the neighborhood as $\mathcal{N}(v)=\{u\in\mathcal{V}:\mathbf{A}_{uv}=1\}$. For any matrix square $\mathbf{M}$, its $i$-th row is denoted by $\mathbf{m}_i=\mathbf{M}_i$. Let \(\mathbf{P}_\pi\) denote permutation matrix of node relabeling \(\pi\).

\textbf{Link Prediction.~}
Given an observed graph $G=(\mathbf{A},\mathbf{X})$ with edge set $\mathcal{E}$, the link prediction task assigns a score to a candidate node pair $(u,v)\in\mathcal{V}\times\mathcal{V}$ indicating the likelihood that an edge exists between $u$ and $v$. Under a temporal split, $\mathcal{E}$ contains edges observed up to a cutoff time, while evaluation targets future edges; thus, the edges to be predicted are not necessarily a subset of $\mathcal{E}$.

\textbf{Pairwise Heuristics.~}
A pairwise heuristic is a scoring function $h$ that, given the observed graph $G$, maps a candidate node pair $(u,v)\in\mathcal{V}\times\mathcal{V}$ to a real-valued score. Candidate pairs are then ranked by this score to estimate the likelihood that an edge exists between $u$ and $v$. Common pairwise heuristics are summarized in Appendix~\ref{app:pairwise-heuristic-estimation}.

\begin{figure}[t]
    \centering
    \includegraphics[width=0.95\linewidth]{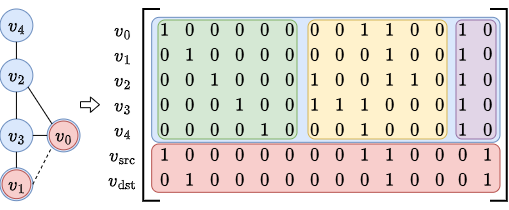}
    \caption{Visualization of the input encoding scheme for a sampled subgraph revolving around a query link (dashed), with the number of nodes $N=5$ and the sampling budget $N_{\max}=6$. Blue block corresponds to the sampled nodes (the context set). Green block contains one-hot identifiers of nodes, yellow block contains nodes' adjacency row, purple block contains role flags, and red block contains task tokens. Context-node ordering is permutable; only $v_{\mathrm{src}}$ and $v_{\mathrm{dst}}$ are fixed to \(v_0\) and \(v_1\).
    }
    \label{fig:adjacency_row}
\end{figure}

\begin{figure}[t]
    \centering
    \includegraphics[width=0.85\linewidth]{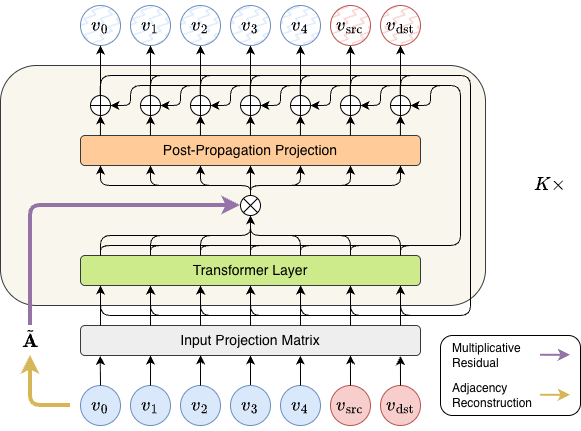}
    \caption{PENCIL architecture. Node features can be optionally used as discussed in Section~\ref{sec:architectural-details}, but we omit them from the figure for clarity.}
    \label{fig:PENCIL}
\end{figure}

\section{Plain Transformers As Link Predictors} \label{sec:plain-transformers-as-link-predictors}
In this section, we describe graph encoding scheme, the architecture of PENCIL, and its use for estimating pairwise heuristics relative to classic GNNs.

\subsection{Graph Encoding Scheme}\label{sec:graph-encoding-scheme}
We encode each sampled subgraph around a candidate pair $(v_{\text{src}}, v_{\text{dst}})$ using an encoding scheme similar to node-adjacency tokenization \cite{yehudai_depth-width_2025}. As illustrated in Figure~\ref{fig:adjacency_row}, we assign indices to the $N$ nodes in the extracted subgraph (with $N \le N_{\max}$ where $N_{\max}$ is the sampling budget) by fixing $v_{\text{src}}$ to index $0$ and $v_{\text{dst}}$ to index $1$, while assigning the remaining nodes uniformly at random to indices $2,\ldots,N-1$. For the first $N$ \emph{context} nodes (blue block), we concatenate three parts: (i) a one-hot identifier of the node in the chosen subgraph order, padded to length $N_{\max}$ (green block); (ii) the node's adjacency indicator row within the extracted subgraph, also padded to length $N_{\max}$ (yellow block); and (iii) a two-bit role flag indicating whether the node is a context node or a task node (purple block), set to $[1,0]$ for context nodes. We then append two \emph{task} nodes corresponding to the endpoints $v_{\text{src}}$ and $v_{\text{dst}}$ (bottom two red rows): each task node copies the same one-hot and adjacency parts from its corresponding context node representation, but flips the role flag to $[0,1]$. Since `node' and `token' are referring to the same entity, we use them interchangeably from now on.

\subsection{Architecture Design} \label{sec:architectural-details}

We begin by outlining the subgraph batching, which will clarify the tensor dimensions used later. Each mini-batch contains $B$ sampled subgraphs; letting $N_b$ be the number of context nodes in subgraph $b$, we set $N_B=\max_{b\in[B]} N_b$ and pad all samples to this length. The model processes a length-$(N_B+2)$ token sequence consisting of $N_B$ context tokens (sampled subgraph nodes, with padding as needed) and two task tokens (for $v_{\text{src}}$ and $v_{\text{dst}}$). By construction, $N_B \le N_{\max}$, where $N_{\max}$ is the global sampling budget. It is worth noting that there are two levels of padding: padding to \(N_{\max}\) is feature-level padding, used for the one-hot identifier and adjacency fields, whereas padding to \(N_B+2\) is batch-level sequence padding, accounting for the two appended task tokens. While one can pad every sequence to \(N_{\max}+2\), it is more efficient to pad to \(N_B+2\), avoiding unnecessary computation when sampled subgraphs are smaller than \(N_{\max}\).

Figure~\ref{fig:PENCIL} illustrates PENCIL, a model composed of $K$ stacked blocks of identical form that couple bidirectional self-attention with one-hop propagation. We omit the default sequential PE, so the Transformer encoder is permutation equivariant with respect to reordering the input tokens.

\textbf{Input Projection.~}
The tokenized input \(\tilde{\mathbf{X}}\in\mathbb{R}^{B\times (N_B+2)\times (2N_{\max}+2)}\) (see Figure~\ref{fig:adjacency_row}) and optional node features $\mathbf{X} \in \mathbb{R}^{B\times (N_B+2) \times f}$ are projected into a shared hidden dimension $d$ to form $\mathbf{H}^{(0)}$:
\begin{equation}
\mathbf{H}^{(0)}=\tilde{\mathbf{X}}\mathbf{W}_0 + \mathbf{G}\!\left(\mathbf{X}\right) \in \mathbb{R}^{B\times (N_B+2)\times d},
\label{eq:input-proj}
\end{equation}
where $\mathbf{W}_0 \in \mathbb{R}^{(2N_{\max}+2) \times d}$ is an orthogonally initialized projection matrix and $\mathbf{G}(\cdot)$ is a learnable feature encoder. Although random initialization often yields approximately orthogonal vectors in high dimensions, it does not guarantee an orthonormal or norm-preserving projection. We therefore use orthogonal initialization~\cite{saxe2014exact} with gain 1 to obtain a better-conditioned initial projection.

\textbf{Adjacency Reconstruction.~}
A key feature of PENCIL is that the subgraph adjacency matrix \(\tilde{\mathbf{A}}\) is not provided as a separate input but is recovered directly from the token encoding \(\tilde{\mathbf{X}}\). Since each token includes its adjacency-indicator row (see Figure~\ref{fig:adjacency_row}), \(\tilde{\mathbf{A}}\) is constructed by slicing the corresponding columns from \(\tilde{\mathbf{X}}\). 
While normalized variants such as $\mathbf{D}^{-1}\tilde{\mathbf{A}}$ or $\mathbf{D}^{-1/2}\tilde{\mathbf{A}}\mathbf{D}^{-1/2}$ are applicable, we retain the raw adjacency $\tilde{\mathbf{A}}$ for notational simplicity. Details regarding the adjacency reconstruction are discussed in Appendix~\ref{app:adjacency-recovery}.

\textbf{Multiplicative Residual.~}
For each layer $k$, PENCIL applies a bidirectional Transformer block followed by a residual term:

\begin{equation}
    \mathbf{Z}^{(k)} = \mathbf{T}_{k}\!\left(\mathbf{H}^{(k-1)}\right)
    \in \mathbb{R}^{(N_B+2)\times d}, \label{eq:Z-k}
\end{equation}
\begin{equation}
    \mathbf{H}^{(k)} = \mathbf{Z}^{(k)} + \mathbf{P}_{k}\!\left(\tilde{\mathbf{A}}\mathbf{Z}^{(k)}\right)
    \in \mathbb{R}^{(N_B+2)\times d},
    \label{eq:H-k}
\end{equation}

where \(\mathbf{T}_k\) is a Transformer layer and \(\mathbf{P}_k\) is a learnable row-wise post-propagation projection applied to \(\tilde{\mathbf{A}}\mathbf{Z}^{(k)}\). We refer to the second term as a \textbf{multiplicative residual} because it adds an explicit matrix-multiplication branch \(\mathbf{P}_k(\tilde{\mathbf{A}}\mathbf{Z}^{(k)})\) on top of the attention output, where \(\tilde{\mathbf{A}}\) is reconstructed from the input encoding \(\tilde{\mathbf{X}}\). Therefore, if self-attention is replaced with the identity map \(\mathbf{Z}^{(k)}=\mathbf{H}^{(k-1)}\), the update reduces to a residual message-passing layer over \(\tilde{\mathbf{A}}\), with node features induced by the graph encoding scheme in Section~\ref{sec:graph-encoding-scheme}. As the residual is computed outside the self-attention module, resulting in full compatibility with efficient attention techniques. Finally, self-attention is computed only among non-padded tokens from the same sampled subgraph. 

The link logit is obtained by concatenating the final representations of $v_{\text{src}}$ and $v_{\text{dst}}$, followed by a linear projection to a scalar. The model is trained with the binary cross-entropy (BCE) loss.

\subsection{Estimating Pairwise Heuristics with PENCIL} \label{sec:pairwise-heuristic-estimation}

Unlike link prediction, which depends on many factors beyond topology, pairwise heuristics are determined by graph structure alone. Furthermore, prior work has established theoretical links between these heuristics and link prediction performance, explaining why they can already serve as strong baselines \cite{sarkar2010theoretical}. Pairwise heuristics therefore provide a clean benchmark for assessing how well Transformers can recover fundamental structural signals and where their limitations may lie. In this section, we evaluate PENCIL on a \emph{pairwise heuristic estimation} task and compare against standard GNN baselines.

For each candidate pair \((u,v)\), we compute a heuristic score on the \emph{full} graph and use it as a scalar regression target. We include both \emph{local} heuristics---Common Neighbors (CN), Adamic--Adar (AA), and Resource Allocation (RA)---and \emph{global} heuristics that aggregate multi-hop or graph-wide structure, namely Katz index \cite{katz_new_1953}, shortest-path distance (SPD), and a PageRank-based pair score \(\mathrm{PR}(u,v)=p_u p_v\) \cite{brin_anatomy_1998}. In contrast, during training and evaluation all models are restricted to an induced subgraph \emph{centered around} \((u,v)\) produced by the sampling procedure, and are not given access to the full graph. The model must therefore approximate full-graph statistics using only restricted local subgraphs. Our aim is to quantify, under the sampling constraints, how accurately different architectures can recover these heuristic signals. Figure~\ref{fig:cora_heuristic_rmse} reports RMSE on \texttt{cora}; additional setup details and \texttt{citeseer} results are provided in Appendix~\ref{app:pairwise-heuristic-estimation}.

\begin{figure}[t]
    \centering
    \includegraphics[width=\linewidth]{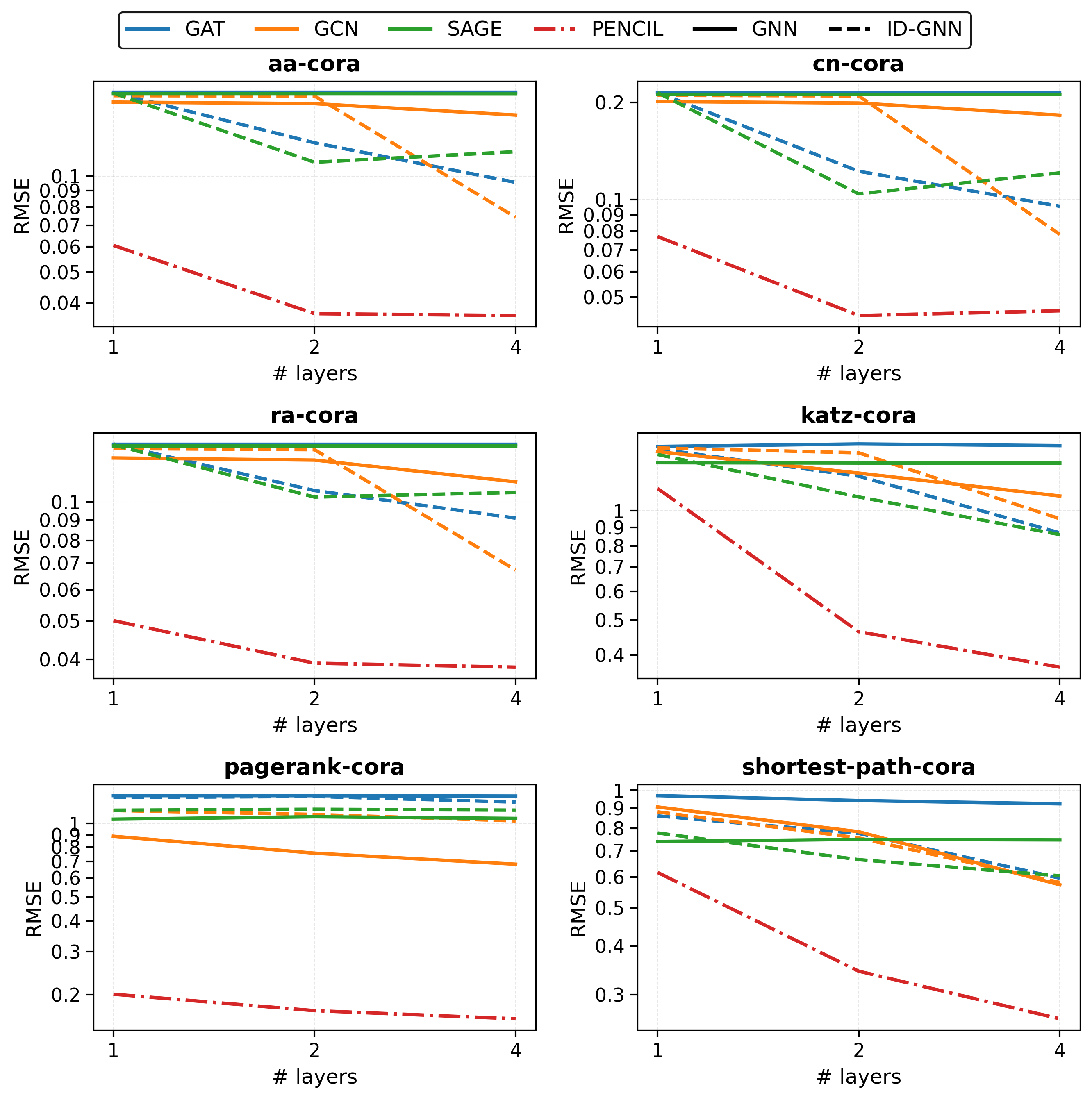}
    \caption{RMSE of PENCIL and other GNNs for estimating pairwise heuristics on the \texttt{cora} dataset.}
    \label{fig:cora_heuristic_rmse}
\end{figure}

Based on Figure~\ref{fig:cora_heuristic_rmse}, we observe three distinct trends. First, for local heuristics (AA, CN, RA), standard GNNs show negligible improvement with depth. In contrast, ID-GNNs significantly reduce error within the first two layers before performance saturates. Second, for long-range heuristics (Katz index, SPD), increased depth yields clearer benefits, most notably for PENCIL, where RMSE steadily decreases as the model deepens. Third, PageRank proves to be the most challenging target for MPNN baselines; neither ID augmentation nor added depth bridges the performance gap, whereas PENCIL achieves substantially lower RMSE across all depths. Notably, the same observation regarding GNNs' subpar performance is reported in \cite{liang_can_2025}, where they suggest that GNNs need learnable node embeddings to learn pairwise heuristics.

Overall, PENCIL surpasses all GNN and ID-GNN models even with a single layer, achieves low error on local metrics with just two layers, and effectively leverages depth to capture complex properties like Katz index, SPD, and PageRank. Similar trends are observed on \texttt{citeseer} (see Appendix~\ref{app:pairwise-heuristic-estimation}). Collectively, these results indicate that under identical sampling constraints, PENCIL extracts structural information more effectively than MPNNs, validating its potential as a robust link predictor. Furthermore, many state-of-the-art methods incorporate these heuristics as explicit input signals \cite{wang2024neural,yun_neo-gnns_2021,dong_pure_2024}; in contrast, our results show that a Transformer can estimate them directly from the input graph, without hard-coding them as features.

\section{Theoretical Analysis} \label{sec:theoretical-analysis}
In this section, we would like to provide theoretical analysis on PENCIL, particularly how it can work on link prediction tasks, its surprising capability on pairwise heuristic estimation, and its expressive power with respect to other link prediction models.

\subsection{Distributional Permutation Invariance}
In graph learning, it is standard to require symmetry with respect to node relabelings. Concretely, intermediate representations should be \emph{permutation equivariant}, whereas task-level predictions should be \emph{permutation invariant}—meaning the output remains unchanged under arbitrary node permutations \cite{zaheer_deep_2017,maron2018invariant,bodnar_weisfeiler_2021,truong2024weisfeiler}. 

PENCIL is not \emph{deterministically} invariant to node relabeling. This is because adjacency-row tokenization relies on a randomized index assignment: it fixes the queried endpoints to canonical positions and assigns random indices to the remaining nodes. While this breaks deterministic invariance for a fixed sample, it does not violate the underlying symmetry required for graph tasks. We show that the induced predictor is \emph{permutation invariant in distribution} over the randomness of this assignment.

\begin{theorem}\label{thm:dist_perm_invariance}
Let $f$ be any deterministic measurable function. Define a randomized predictor
$S(\mathbf{A}; u, v) := f(\mathbf{P}_\rho \mathbf{A} \mathbf{P}_\rho^\top)$,
where $\rho$ is drawn uniformly from the set of permutations satisfying $\rho(u)=0$ and $\rho(v)=1$.
Then for any node relabeling $\pi$, the output distribution is unchanged:
\[
    S(\mathbf{A}; u, v) \;\overset{d}{=}\; S(\mathbf{A}'; u', v'),
\]
where $\mathbf{A}' = \mathbf{P}_\pi \mathbf{A} \mathbf{P}_\pi^\top$, $u' = \pi(u)$, and $v' = \pi(v)$.
\end{theorem}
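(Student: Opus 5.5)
The plan is to make the randomness explicit and reduce the claim to a bijection between permutation sets, using only that composition with a fixed permutation preserves uniformity. Let $\Sigma_{u,v}=\{\rho\in S_M:\rho(u)=0,\ \rho(v)=1\}$, with $\{0,\ldots,M-1\}$ as target index set, and define $\Sigma_{u',v'}$ likewise. Then $S(\mathbf{A};u,v)$ is the pushforward of the uniform law on $\Sigma_{u,v}$ under $\rho\mapsto f(\mathbf{P}_\rho\mathbf{A}\mathbf{P}_\rho^\top)$. Similarly, $S(\mathbf{A}';u',v')$ is the pushforward of the uniform law on $\Sigma_{u',v'}$ under $\rho'\mapsto f(\mathbf{P}_{\rho'}\mathbf{A}'\mathbf{P}_{\rho'}^\top)$. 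Since $f$ is deterministic and measurable, it suffices to find a bijection $\Phi:\Sigma_{u,v}\to\Sigma_{u',v'}$ under which the two integrands agree pointwise.

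The candidate is $\Phi(\rho)=\rho\circ\pi^{-1}$. It lands in $\Sigma_{u',v'}$ because $\Phi(\rho)(u')=\rho(\pi^{-1}(\pi(u)))=\rho(u)=0$, and likewise $\Phi(\rho)(v')=1$. Its inverse is $\rho'\mapsto\rho'\circ\pi$, so $\Phi$ is a bijection of finite sets and therefore maps the uniform distribution to the uniform distribution.

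The remaining step is the identity $\mathbf{P}_{\rho\circ\pi^{-1}}\mathbf{A}'\mathbf{P}_{\rho\circ\pi^{-1}}^\top=\mathbf{P}_\rho\mathbf{A}\mathbf{P}_\rho^\top$. I would fix the convention $(\mathbf{P}_\sigma)_{\sigma(i),i}=1$, so that $(\mathbf{P}_\sigma\mathbf{A}\mathbf{P}_\sigma^\top)_{\sigma(i)\sigma(j)}=\mathbf{A}_{ij}$ and $\mathbf{P}_{\sigma\circ\tau}=\mathbf{P}_\sigma\mathbf{P}_\tau$. Then
\[
\mathbf{P}_{\rho\circ\pi^{-1}}\mathbf{P}_\pi\mathbf{A}\mathbf{P}_\pi^\top\mathbf{P}_{\rho\circ\pi^{-1}}^\top=\mathbf{P}_\rho\mathbf{P}_{\pi^{-1}}\mathbf{P}_\pi\mathbf{A}\mathbf{P}_\pi^\top\mathbf{P}_{\pi^{-1}}^\top\mathbf{P}_\rho^\top=\mathbf{P}_\rho\mathbf{A}\mathbf{P}_\rho^\top .
\]
The only real point of care is this step: the convention for $\mathbf{P}_\sigma$ must make $\sigma\mapsto\mathbf{P}_\sigma$ a homomorphism rather than an anti-homomorphism. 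Under the opposite convention, I would replace $\Phi$ with $\rho\mapsto\pi^{-1}\circ\rho$ adjusted accordingly, or equivalently verify the identity entrywise. Entrywise, both sides have $(\rho(i),\rho(j))$ entry equal to $\mathbf{A}_{ij}$, which settles it regardless of convention.

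With the pointwise identity established, for every measurable set $E$ we have
\[
\Pr[S(\mathbf{A}';u',v')\in E]=\frac{|\{\rho'\in\Sigma_{u',v'}:f(\cdots)\in E\}|}{|\Sigma_{u',v'}|}=\frac{|\{\rho\in\Sigma_{u,v}:f(\mathbf{P}_\rho\mathbf{A}\mathbf{P}_\rho^\top)\in E\}|}{|\Sigma_{u,v}|}=\Pr[S(\mathbf{A};u,v)\in E],
\]
using $|\Sigma_{u,v}|=|\Sigma_{u',v'}|=(M-2)!$. This proves the equality in distribution. If node features or the subgraph sampler are also included, the same bijection argument applies, provided the sampler is itself equivariant in distribution; I would note this as a remark.
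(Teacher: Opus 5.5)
Your proof is correct and is essentially the paper's argument. The paper uses the inverse bijection $\rho'\mapsto\rho'\circ\pi$ from $\Gamma_{u',v'}$ onto $\Gamma_{u,v}$, the same identity $\mathbf{P}_{\rho'}\mathbf{P}_\pi=\mathbf{P}_{\rho'\circ\pi}$, and the same re-indexing of the finite uniform sum.

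One correction to your aside: the argument does not work ``regardless of convention.'' The statement presupposes that $\mathbf{P}_\pi\mathbf{A}\mathbf{P}_\pi^\top$ sends node $i$ to index $\pi(i)$, which is what makes $u'=\pi(u)$ the relabeled endpoint. Under the opposite convention the claim itself fails: take $M=4$, $(u,v)=(2,3)$, $\pi=(2\,3)$, and a single edge $\{0,2\}$. Your entrywise fallback check is also false there. So fix your convention and drop the fallback.
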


Detailed proof is included in Appendix~\ref{app:proof_dist_perm_invariance}. Theorem~\ref{thm:dist_perm_invariance} formalizes that, although our encoding is not deterministically invariant for a fixed canonicalization, the resulting randomized predictor remains a valid graph function: relabeling the graph and the query pair leaves the predictor unchanged \emph{in distribution}.

Averaging over multiple independent canonicalizations can reduce the variance induced by random index assignments and better approximate the invariant predictor, but it requires additional forward passes at inference time. In practice, a single labeled subgraph per forward pass already achieves strong performance, as shown in Section~\ref{sec:experimental-results}. Appendix~\ref{app:multiple_labeled_subgraphs} further shows that using multiple labeled subgraphs with DeepSets-style pooling~\cite{zaheer_deep_2017} does not yield clear gains, supporting the efficiency of our design.


\subsection{What Makes PENCIL a Strong Link Predictor?}

In this section, we draw connections between PENCIL and prior works on link prediction, namely NBFNet \cite{zhu_neural_bellman-ford_2021} and MPLP \cite{dong_pure_2024}, to provide theoretical evidence for its surprising capability on pairwise heuristic estimation demonstrated in Section~\ref{sec:pairwise-heuristic-estimation}.

NBFNet \cite{zhu_neural_bellman-ford_2021} is a source-conditioned message-passing framework \cite{you2021identity} for link prediction that propagates information from a query source node and scores links by reading out the representations at candidate destination nodes. It can be viewed as a learnable relaxation of the generalized Bellman--Ford algorithm that recovers many path-based heuristics; details are provided in Appendix~\ref{app:nbfnet}. Next, we show that PENCIL can be degenerated to an NBFNet-style propagation model, subsequently leading to the following corollary.

\begin{proposition}
    \label{prop:PENCIL-degenerates-nbfnet}
    There exists a parameter setting of PENCIL under which its layerwise update reduces to a source-conditioned MPNN, with readout at the canonical destination token \(v_1\).
\end{proposition}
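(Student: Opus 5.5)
The plan is to exhibit an explicit choice of parameters under which each PENCIL block collapses to a residual message-passing step over $\tilde{\mathbf{A}}$, and then to show that the input encoding already provides the source-conditioned initialization NBFNet needs. The key observation is that the one-hot identifier block of $\tilde{\mathbf{X}}$ together with the fixed canonical positions $v_{\mathrm{src}}\mapsto v_0$, $v_{\mathrm{dst}}\mapsto v_1$ lets a linear map single out the source. Concretely, I will choose $\mathbf{W}_0$ (and $\mathbf{G}\equiv 0$) so that it reads only the identifier coordinate corresponding to index $0$. It maps that coordinate to a fixed vector $\mathbf{q}\in\mathbb{R}^d$ (the NBFNet query/indicator embedding) and sends every other coordinate to zero. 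Then $\mathbf{h}^{(0)}_{v}=\mathbbm{1}[v=v_0]\,\mathbf{q}$ on context tokens, which is exactly the boundary condition $\mathbf{h}^{(0)}_v = \mathbbm{1}[v=u]\mathbf{q}$ of a source-conditioned MPNN.

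Next I reduce each Transformer layer $\mathbf{T}_k$ to the identity. I will zero the attention output projection and the feed-forward output weights, so both residual branches vanish. I will also set layer normalization to an affine identity, or, if the architecture is pre-LN, rely on the fact that zeroed sublayer outputs make the residual stream pass through unchanged. This gives $\mathbf{Z}^{(k)}=\mathbf{H}^{(k-1)}$. By the remark after Eq.~\eqref{eq:H-k}, the update then becomes $\mathbf{H}^{(k)}=\mathbf{H}^{(k-1)}+\mathbf{P}_k(\tilde{\mathbf{A}}\mathbf{H}^{(k-1)})$. Row-wise, this reads $\mathbf{h}^{(k)}_v=\mathbf{h}^{(k-1)}_v+\mathbf{P}_k\big(\sum_{w\in\mathcal{N}(v)}\mathbf{h}^{(k-1)}_w\big)$, i.e.\ a generalized Bellman--Ford iteration with sum aggregation, multiplicative-free messages and a learnable update $\mathbf{P}_k$, which is the NBFNet template. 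The context tokens are the nodes of the subgraph, and $\tilde{\mathbf{A}}$ restricted to them is the subgraph adjacency. Since attention is off, the propagation depends only on $\tilde{\mathbf{A}}$ and the initialization, so the computation is conditioned only on the source.

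The remaining issue, which I expect to be the main obstacle, is the two appended task tokens and the readout. I need to check, via the adjacency-reconstruction convention of Appendix~\ref{app:adjacency-recovery}, how task tokens enter $\tilde{\mathbf{A}}$. Each task token copies the adjacency row of its endpoint, so it receives the same incoming message as $v_1$, but no context token should list a task token as a neighbor. The role flag $[0,1]$ carries no identifier at index $0$, so it is mapped to zero by $\mathbf{W}_0$; consequently the task tokens never inject signal into context nodes. Moreover, the $v_{\mathrm{dst}}$ task token has $\mathbf{h}^{(0)}=0$ unless $v_1=v_0$, and it aggregates the same neighbor states as $v_1$ at every layer. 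I will argue by induction on $k$ that the task token either coincides with $\mathbf{h}^{(k)}_{v_1}$ or equals $\mathbf{h}^{(k)}_{v_1}$ minus a term depending only on $\mathbf{h}^{(k-1)}_{v_1}$. To sidestep this, the simplest route is to have the final linear head put zero weight on the source half of the concatenation and read the destination half. If the task-token discrepancy is problematic, I will instead let $\mathbf{W}_0$ also copy the index-$1$ identifier into a separate coordinate subspace, killed by $\mathbf{P}_k$, so that the readout can be taken at the context token $v_1$ itself. In either case the score is a linear function of $\mathbf{h}^{(K)}_{v_1}$, the NBFNet-style readout at the canonical destination.
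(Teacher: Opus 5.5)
Your argument proves the statement as literally posed and follows the paper's skeleton: an input projection reading the index-$0$ identifier to give $\mathbbm{1}(v=v_0)\mathbf{q}$, then $\mathbf{T}_k=\mathrm{Id}$ so that Eq.~\eqref{eq:H-k} becomes a residual sum-aggregation update over $\tilde{\mathbf{A}}$, then readout at $v_1$.

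The difference is in what gets simulated. You take the residual MPNN $\mathbf{h}^{(k)}_v=\mathbf{h}^{(k-1)}_v+\mathbf{P}_k\bigl(\sum_{w\in\mathcal{N}(v)}\mathbf{h}^{(k-1)}_w\bigr)$ itself as the source-conditioned MPNN. The paper instead widens to $D=(K+1)d$ and invokes Lemma~\ref{lem:residual-simulates-nonresidual}. It writes layer $k$'s output into a fresh coordinate block so the residual never contaminates it, and thereby simulates a \emph{non-residual} source-conditioned MPNN of width $d$. Your route is shorter and keeps width $d$. The paper's buys generality, and it is what the paper relies on to pass to Corollary~\ref{cor:global-heuristics}.

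For that reason, your identification of the residual update with ``a generalized Bellman--Ford iteration \ldots the NBFNet template'' should be dropped or repaired. Bellman--Ford re-adds the boundary term $\mathbf{h}^{(0)}_v$. With the whole state in one block, the residual always adds $\mathbf{h}^{(k-1)}_v$, and $\mathbf{P}_k$ sees only the aggregated neighbor sum, so it can neither cancel that term nor re-inject $\mathbf{h}^{(0)}_v$. For example, $\mathbf{P}_k=\beta\,\mathrm{Id}$ yields $(\mathbf{I}+\beta\tilde{\mathbf{A}})^K\mathbf{H}^{(0)}$. That is, walk counts carry binomial weights $\binom{K}{\ell}\beta^{\ell}$ rather than the geometric weights $\beta^{\ell}$ of the Bellman--Ford recursion for Katz.

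Three smaller points follow.

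\emph{Task tokens.} Your concern resolves exactly, and the fallback is not needed. The $v_{\mathrm{dst}}$ task token has the same row of $\tilde{\mathbf{A}}$ as the context token $v_1$, since it copies both the identifier and adjacency parts and the task-token columns are zero. It also has the same initial state, since your $\mathbf{W}_0$ ignores the role flag. So both tokens receive identical increments at every layer, and induction gives equal representations for all $k$; no ``minus a term'' case occurs.

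\emph{Source task token.} The $v_{\mathrm{src}}$ task token does carry identifier index $0$, so it is mapped to $\mathbf{q}$, not to zero. It cannot affect context tokens only because task tokens are sinks in $\tilde{\mathbf{A}}$, and that is the reason you should cite.

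\emph{LayerNorm.} In a post-LN (BERT) block, LayerNorm cannot be turned into the identity by choosing its gain and bias, because it is not an affine map. The paper simply posits $\mathbf{T}_k=\mathrm{Id}$ as an idealization. You should either do the same or restrict to pre-LN blocks, where zeroing the sublayer outputs (including biases) does give the identity.
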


\begin{corollary} \label{cor:global-heuristics}
    Under suitable parameter settings and operator choices, PENCIL can realize a broad class of classical path-based link prediction scores and graph algorithms, including Katz index, Personalized PageRank, SPD, widest path, and most reliable path.
\end{corollary}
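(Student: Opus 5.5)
The plan is to prove Corollary~\ref{cor:global-heuristics} by composing Proposition~\ref{prop:PENCIL-degenerates-nbfnet} with the known expressivity of NBFNet. Proposition~\ref{prop:PENCIL-degenerates-nbfnet} gives a parameter setting in which PENCIL's layerwise update becomes a source-conditioned MPNN read out at $v_1$. Zhu et al.\ show that such an MPNN, with a suitable choice of indicator (boundary condition), MESSAGE, and AGGREGATE operators, exactly executes the generalized Bellman--Ford iteration
\[
h^{(k)}_{u}(x)=h^{(0)}_{u}(x)\oplus\bigoplus_{y\in\mathcal{N}(x)} h^{(k-1)}_{u}(y)\otimes w(y,x).
\]
Instantiating the semiring $(\oplus,\otimes)$ then recovers the listed scores (Appendix~\ref{app:nbfnet}). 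So the task reduces to showing that PENCIL's concrete operators can realize each required semiring instance within the chosen parameter setting.

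\textbf{Step 1: the boundary condition.} I will use the encoding of Section~\ref{sec:graph-encoding-scheme}: the one-hot identifier of $v_0$ is a fixed coordinate of $\tilde{\mathbf{X}}$. With an orthogonal $\mathbf{W}_0$ and a suitable readout direction, one hidden channel of $\mathbf{H}^{(0)}$ equals the source indicator $\mathbbm{1}[x=v_{\mathrm{src}}]$. This is the initial state $h^{(0)}_u$. Further channels can hold a constant $1$ or copies of the adjacency row where needed.

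\textbf{Step 2: the sum--product instances.} Set $\mathbf{Z}^{(k)}=\mathbf{H}^{(k-1)}$ as in Proposition~\ref{prop:PENCIL-degenerates-nbfnet}, with attention value and output zeroed so that the block is the identity. Then Eq.~\eqref{eq:H-k} reads $\mathbf{H}^{(k)}=\mathbf{H}^{(k-1)}+\mathbf{P}_k(\tilde{\mathbf{A}}\mathbf{H}^{(k-1)})$.
\begin{itemize}
\item \emph{Katz.} Take $\mathbf{P}_k$ linear and equal to $\beta$ on a working channel while preserving an indicator channel. Unrolling gives $\sum_{\ell\le K}\beta^\ell\tilde{\mathbf{A}}^\ell e_{v_0}$, so reading out at $v_1$ yields the truncated Katz index.
\item \emph{Personalized PageRank.} Use the normalized propagation $\mathbf{D}^{-1}\tilde{\mathbf{A}}$ (allowed in Section~\ref{sec:architectural-details}) with teleport weights. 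I will accommodate the damping either by rescaling per layer (the identity branch is absorbed by letting $\mathbf{P}_k$ carry the factor $(1-\alpha)$, with channel bookkeeping separating the geometric terms) or by accumulating terms in separate channels and combining them at readout.
\end{itemize}
In both cases, $K$ layers give the $K$-truncation of the series, which matches the finite-iteration sense of Corollary~\ref{cor:global-heuristics}.

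\textbf{Step 3: the min/max semirings.} SPD, widest path, and most reliable path need $(\min,+)$, $(\max,\min)$, and $(\max,\times)$. PENCIL's aggregation is a sum $\tilde{\mathbf{A}}\mathbf{Z}$, so I will not realize these exactly. Instead I will use the clause ``operator choices'' in the statement and argue one of two ways.
\begin{itemize}
\item \emph{Recover from reachability counts.} Sum aggregation followed by the MLP $\mathbf{P}_k$ can compute SPD: track reachability indicators $r^{(k)}(x)=\mathbbm{1}[(\tilde{\mathbf{A}}^{\le k}e_{v_0})_x>0]$ with a ReLU threshold inside $\mathbf{P}_k$, and add $1-r^{(k)}$ to a distance counter each layer. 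This gives $\min(\mathrm{SPD},K)$ at $v_1$.
\item \emph{Swap the aggregator.} Replace the aggregator with max/min, as NBFNet does, giving widest and most reliable path directly, with $\log$-transformed weights turning products into sums.
\end{itemize}

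\textbf{Main obstacle.} The hard part is Step 3, making exact non-sum semirings compatible with PENCIL's fixed sum propagation. I would first try the reachability-threshold construction for SPD, since it needs only sum aggregation plus a row-wise MLP. For widest and most reliable path on unweighted subgraphs, I would note that these degenerate to reachability, which the same construction handles. For weighted variants I would appeal explicitly to the operator-choice clause.
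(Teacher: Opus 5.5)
Your top-level reduction matches the paper's proof. That proof invokes Proposition~\ref{prop:PENCIL-degenerates-nbfnet} and Lemma~\ref{lem:residual-simulates-nonresidual} to simulate a non-residual source-conditioned MPNN. It then instantiates the boundary condition, message, aggregation and edge weights with the generalized Bellman--Ford ones, and cites Zhu et al. The $(\min,+)$, $(\max,\min)$, $(\max,\times)$ cases are handled exactly by your ``swap the aggregator'' option; the Lemma explicitly permits max/min aggregation.

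The genuine error is in your explicit Step~2. With $\mathbf{T}_k=\mathrm{Id}$ and $\mathbf{P}_k=\beta$ on one working channel $w$, the update is $w^{(k)}=w^{(k-1)}+\beta\tilde{\mathbf{A}}w^{(k-1)}$, so
\[
w^{(K)}=(\mathbf{I}+\beta\tilde{\mathbf{A}})^{K}e_{v_0}=\sum_{\ell=0}^{K}\binom{K}{\ell}\beta^{\ell}\tilde{\mathbf{A}}^{\ell}e_{v_0}.
\]
These are binomially reweighted walk counts, not the truncated Katz series. Preserving an indicator channel does not help. Nor can the residual be cancelled within a channel: $\mathbf{P}_k$ acts row-wise on $(\tilde{\mathbf{A}}\mathbf{Z}^{(k)})_x$, which for the loop-free $\tilde{\mathbf{A}}$ used in the theory does not reveal $\mathbf{Z}^{(k)}_x$. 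So $\mathbf{P}_k$ cannot output ``target minus current value''. For example, on the path $v_0-a-b$, rows $v_0$ and $b$ both receive exactly $\mathbf{Z}_a$. Every channel therefore gets identical increments at $v_0$ and $b$ in every layer, and $\mathbf{H}_{v_0}-\mathbf{H}_b$ never changes. The same objection rules out your ``rescale per layer'' option for PPR, and any attempt in Step~3 to keep $r^{(k)}$ as a stored $0/1$ channel.

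The missing ingredient is the block-coordinate construction of Lemma~\ref{lem:residual-simulates-nonresidual}, which is what the paper's proof actually rests on. Use width $(K+1)d$, and let layer $k$ read block $k-1$ of $\tilde{\mathbf{A}}\mathbf{Z}^{(k)}$ and write into block $k$. Block $k$ is zero before layer $k$, so the residual never contaminates the new iterate. For Katz this gives $\mathbf{C}_k=\beta^{k}\tilde{\mathbf{A}}^{k}e_{v_0}$, and a linear readout at $v_1$ summing the blocks yields $\sum_{\ell\le K}\beta^{\ell}(\tilde{\mathbf{A}}^{\ell})_{v_1v_0}$. This summation plays the role of the boundary term $\oplus\,\mathbf{h}^{(0)}$, which $\mathbf{P}_k$ cannot inject at $v_0$ itself. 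Your ``separate channels'' option for PPR is exactly this construction and is the one to keep.

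Your SPD counter does survive, because it needs only supports, not values. The residual's natural iterate $(\mathbf{I}+\tilde{\mathbf{A}})^{k-1}e_{v_0}$ is supported exactly on the $(k-1)$-ball. Thresholding its propagation inside $\mathbf{P}_k$ therefore gives $\mathbbm{1}[\mathrm{SPD}(v_0,x)\le k]$ for every $x\neq v_0$, which suffices for readout at $v_1$; only the counter channel needs to accumulate.

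With these repairs your proposal is a more explicit version of the paper's argument. It is also more candid than the paper that the non-sum semirings require changing PENCIL's literal sum propagation $\tilde{\mathbf{A}}\mathbf{Z}$.
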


Proofs are provided in Appendix~\ref{app:proof_PENCIL-degenerates-nbfnet} and Appendix~\ref{app:proof_global-heuristics}. Next, we show that PENCIL can also estimate local pairwise heuristics. First, we have the following remark, which is an informal statement of Theorem~\ref{thm:common-neighbor-estimation} and Theorem~\ref{thm:common-neighbor-estimation-general} in Appendix~\ref{app:formal-theorems}, proven in \cite{dong_pure_2024}.

\begin{remark} \label{remark:dot-products}
    If the initial node features are random, zero-mean, and (in expectation) unit-norm, then sum-aggregation message passing turns dot products into counts of shared connectivity patterns.
\end{remark}

The above remark suggests that sum-aggregation MPNNs can estimate local heuristics, depending on the initial node representations---namely, zero mean and unit norm in expectation. Importantly, these conditions are sufficient but not restrictive in practice: they are already satisfied (or closely approximated) by common ways of initializing learnable node embeddings, such as zero-mean random initialization with appropriate scaling, and in particular orthogonal initialization. Motivated by this perspective, \cite{dong_pure_2024} studies quasi-orthogonal node signatures, while \cite{ma_reconsidering_2025} initializes learnable node embeddings with orthogonal vectors. We now show that PENCIL can be configured to satisfy the same assumptions, and therefore inherits the same unbiased estimation property.

\begin{proposition} \label{prop:local-heuristics}
    There exists a parameter setting of PENCIL under which it can estimate heuristics stated in Theorem~\ref{thm:common-neighbor-estimation} and Theorem~\ref{thm:common-neighbor-estimation-general}.
\end{proposition}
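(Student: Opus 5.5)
The plan is to exhibit explicit parameters under which PENCIL coincides with the sum-aggregation MPNN of Remark~\ref{remark:dot-products}, started from node representations that are zero-mean and unit-norm in expectation. The formal results Theorem~\ref{thm:common-neighbor-estimation} and Theorem~\ref{thm:common-neighbor-estimation-general} (from \cite{dong_pure_2024}) then transfer directly. The construction has four steps.

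\textbf{Step 1: initial features.} We choose the initial features, disabling node features by setting $\mathbf{G}\equiv 0$. Token $i$ then has $\mathbf{H}^{(0)}_i=\tilde{\mathbf{x}}_i\mathbf{W}_0$, and $\tilde{\mathbf{x}}_i$ contains the one-hot identifier $\mathbf{e}_{\rho(i)}$. We further choose $\mathbf{W}_0$ so that its rows acting on the adjacency and role-flag coordinates are zero, while its rows acting on the one-hot block are orthonormal (with $d\ge N_{\max}$), i.e.\ an orthogonal matrix, as in the initialization of Section~\ref{sec:architectural-details}. Then $\mathbf{h}^{(0)}_i=\mathbf{w}_{\rho(i)}$, so distinct context nodes receive mutually orthogonal unit vectors.

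\textbf{Step 2: randomness of the features.} The hypotheses of the cited theorems concern random vectors. The index assignment $\rho$ is uniformly random over the non-endpoint positions (Theorem~\ref{thm:dist_perm_invariance}). Alternatively, $\mathbf{W}_0$ can be taken as a Haar-random orthogonal matrix with rows multiplied by independent random signs. Either way, each $\mathbf{h}^{(0)}_i$ is zero-mean, and $\mathbb{E}\langle \mathbf{h}_i,\mathbf{h}_j\rangle=\delta_{ij}$ holds exactly, not merely in expectation. This quasi-orthogonal setting is the one used in \cite{dong_pure_2024}, where exact orthogonality only removes variance.

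\textbf{Step 3: reduction to sum aggregation.} We degenerate each block. Attention is made the identity, as in the proof of Proposition~\ref{prop:PENCIL-degenerates-nbfnet}: zero the value/output projections so only the skip connection survives, and zero the FFN output weights. This gives $\mathbf{Z}^{(k)}=\mathbf{H}^{(k-1)}$. We then take $\mathbf{P}_k$ to be the identity (or a scalar multiple), so that $\mathbf{H}^{(k)}=(\mathbf{I}+\tilde{\mathbf{A}})\mathbf{H}^{(k-1)}$. If the cited theorems need pure $\tilde{\mathbf{A}}^k$ propagation, we instead set $\mathbf{P}_k=c\,\mathbf{I}$ and expand $(\mathbf{I}+\tilde{\mathbf{A}})^k$ binomially. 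Taking the layerwise outputs (or differences of consecutive layers, which a linear readout can form) isolates $\tilde{\mathbf{A}}^{\ell}\mathbf{H}^{(0)}$ for each $\ell\le k$.

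\textbf{Step 4: readout.} The task tokens for $v_{\mathrm{src}}$ and $v_{\mathrm{dst}}$ carry the same one-hot and adjacency rows as their context copies. They are, however, separate rows of $\tilde{\mathbf{A}}$, and we must check from Appendix~\ref{app:adjacency-recovery} that they receive the endpoints' neighborhoods. This gives $\langle \mathbf{h}^{(\ell)}_{\mathrm{src}},\mathbf{h}^{(\ell')}_{\mathrm{dst}}\rangle=\sum_{w}(\tilde{\mathbf{A}}^{\ell})_{uw}(\tilde{\mathbf{A}}^{\ell'})_{vw}$ in expectation, i.e.\ counts of walks meeting at a common node. The case $\ell=\ell'=1$ is CN; degree weightings for AA and RA come from using $\mathbf{D}^{-1}\tilde{\mathbf{A}}$ or $\mathbf{D}^{-1/2}\tilde{\mathbf{A}}\mathbf{D}^{-1/2}$, or from rescaling $\mathbf{h}^{(0)}$ by a degree function. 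That rescaling is computable from the adjacency block of $\tilde{\mathbf{x}}_i$ by a row-wise map.

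\textbf{Main obstacle.} The readout is a linear map on the concatenation $[\mathbf{h}_{\mathrm{src}}\,\|\,\mathbf{h}_{\mathrm{dst}}]$, and it cannot form a dot product. The first approach to try is to place the bilinear computation inside a final Transformer layer: attention scores between the two task tokens are $\mathbf{q}_{\mathrm{src}}^\top\mathbf{k}_{\mathrm{dst}}$. A value channel can then carry a constant, together with the FFN's universal approximation on the compact input set, to expose the score. Otherwise, we simply state the proposition at the level of representations, i.e.\ the estimator is the inner product of the final endpoint embeddings, as in \cite{dong_pure_2024}.
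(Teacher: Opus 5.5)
\paragraph{The gap is in Step~3.}
Theorem~\ref{thm:common-neighbor-estimation-general} concerns the pure iterates $\tilde{\mathbf A}^{p}\mathbf H^{(0)}$ and $\tilde{\mathbf A}^{q}\mathbf H^{(0)}$, possibly with $p\neq q$. With $\mathbf T_k=\mathrm{Id}$ and $\mathbf P_k=c_k\mathbf I$ at width $d$, the last layer holds $\prod_k(\mathbf I+c_k\tilde{\mathbf A})\,\mathbf H^{(0)}$. That is one polynomial in $\tilde{\mathbf A}$, shared by both endpoints, which in general mixes several powers.

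\begin{itemize}
\item The earlier iterates have been overwritten, and PENCIL's scorer sees only the final-layer task tokens. So \emph{taking the layerwise outputs} and differencing them is not an operation any readout can perform.
\item Even if it were, consecutive differences would give $\tilde{\mathbf A}(\mathbf I+\tilde{\mathbf A})^{\ell-1}\mathbf H^{(0)}$, not $\tilde{\mathbf A}^{\ell}\mathbf H^{(0)}$.
\item Two endpoints sharing one polynomial cannot realize $p\neq q$.
\end{itemize}

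A single layer is fine: the extra term at each endpoint is the fixed vector $\mathbf w_0$ (resp.\ $\mathbf w_1$), and a bias removes it. Without that correction the inner product is $\mathrm{CN}+2\tilde{\mathbf A}_{uv}$. This does not extend to the general statement.

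The missing idea is the block construction of Lemma~\ref{lem:residual-simulates-nonresidual}, which is what the paper uses. Take width $(K+1)d$, put the initial vectors in block $0$, and let $\mathbf P_k$ read block $k-1$ and write into block $k$, which is still zero before layer $k$. Then $\mathbf H^{(K)}=[\mathbf H^{(0)}\,\|\,\tilde{\mathbf A}\mathbf H^{(0)}\,\|\cdots\|\,\tilde{\mathbf A}^{K}\mathbf H^{(0)}]$, and the scorer can take block $p$ at $v_{\mathrm{src}}$ and block $q$ at $v_{\mathrm{dst}}$.

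\paragraph{Initialization and Steps~2 and~4.}
The paper fills block $0$ with i.i.d.\ normalized Rademacher vectors, which gives unbiasedness. Your orthonormal rows, with $d\ge N_{\max}$ per block, are a legitimate alternative that makes the identities exact rather than unbiased. That also makes Step~2 unnecessary, which is fortunate, because randomness of $\rho$ does \emph{not} make $\mathbf h^{(0)}_i$ zero-mean. The endpoints are deterministically $\mathbf w_0,\mathbf w_1$, and the other tokens have mean equal to the centroid of $\mathbf w_2,\dots,\mathbf w_{N-1}$. The check you defer in Step~4 does go through: with the loop-free $\tilde{\mathbf A}$, each task row equals its endpoint's context row and the task columns are zero.

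\paragraph{The readout.}
Your \emph{main obstacle} is a real point. The scorer of Section~\ref{sec:architectural-details} is linear in $[\mathbf h_{\mathrm{src}}\|\mathbf h_{\mathrm{dst}}]$. The paper does not resolve this inside the architecture; its proof simply assumes the final scorer is a nonlinear MLP and invokes universal approximation of the inner product.

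Your attention-based workaround does not work as described.
\begin{itemize}
\item A value channel that is constant across tokens makes the attention output independent of the scores.
\item An indicator value at $v_{\mathrm{dst}}$ returns the softmax weight $e^{s}/\sum_j e^{s_j}$. This depends on every score $\mathbf q_{\mathrm{src}}^\top\mathbf k_j$, including that of the context copy of $v$, which has the same representation as the task token. A row-wise FFN cannot recover $s$ from it without the normalizer.
\end{itemize}
Option (b) proves a weaker statement.

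You can either state the nonlinear-scorer assumption explicitly, as the paper does, or avoid dot products altogether by exploiting the canonical index of $v_{\mathrm{dst}}$. Let $\mathbf W_0$ keep only the one-hot coordinate of index $1$, which is the source-conditioned start of Proposition~\ref{prop:PENCIL-degenerates-nbfnet} rooted at $v$, and run the same block construction. Block $\ell$ of the $v_{\mathrm{src}}$ task token is then $(\tilde{\mathbf A}^{\ell})_{uv}$ times a fixed vector. For $\ell=2$ this is $\mathrm{CN}$. For $\ell=p+q$, by symmetry of $\tilde{\mathbf A}$, it is the quantity $\sum_k(\tilde{\mathbf A}^{p})_{ku}(\tilde{\mathbf A}^{q})_{kv}$ of Theorem~\ref{thm:common-neighbor-estimation-general}. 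A linear readout therefore suffices.

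\paragraph{The AA/RA aside.}
This is outside the proposition and not right as written. Row normalization yields $\mathrm{CN}/(\deg u\,\deg v)$, not RA. A degree-dependent rescaling is not linear in $\tilde{\mathbf x}_i$, so $\tilde{\mathbf X}\mathbf W_0$ cannot produce it.
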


Proof is provided in Appendix~\ref{app:proof_local-heuristics}. A key distinction from MPLP \cite{dong_pure_2024} and Refined-GAE \cite{ma_reconsidering_2025} is where these vectors live. Those methods conceptually maintain a (quasi-)orthogonal vector per node, whereas in PENCIL the vectors are induced by the input projection over the \emph{sampled} token set. Consequently, the number of vectors that must be simultaneously well-separated is bounded by the sampling budget (i.e., \(N_{\max} \ll |\mathcal V|\)). Since the accuracy of the above unbiased estimators depends on orthogonality of vectors, we quantify this property via mutual coherence
\cite{donoho_uncertainty_2001} and the Welch bound \cite{welch_lower_1974}.

\begin{definition}[Mutual coherence]
    Let $\mathcal{F} = \{\mathbf{r}_1, \dots, \mathbf{r}_N\}$ be a set of $N$ unit-norm vectors in $\mathbb{R}^d$. The \textit{mutual coherence} of $\mathcal{F}$, denoted $\mu(\mathcal{F})$, is defined as the maximum absolute inner product between any distinct pair of vectors in the set:
    \begin{equation}
        \mu(\mathcal{F}) \triangleq \max_{1 \leq i \neq j \leq N} |\langle \mathbf{r}_i, \mathbf{r}_j \rangle|.
    \end{equation}
\end{definition}

\begin{definition}[The Welch bound]
For any set size $N$ and dimension $d$ (where $N > d$), the \textit{Welch bound} $W(N, d)$ defines the theoretical lower bound on the mutual coherence achievable by any set of vectors:
\begin{equation}
    \mu(\mathcal{F}) \geq W(N, d) \triangleq \sqrt{\frac{N-d}{d(N-1)}}.
\end{equation}
\end{definition}

Mutual coherence captures the maximum correlation within a set of unit vectors, quantifying the set's deviation from orthogonality. The Welch bound lower-bounds this quantity as a function of the set size \(N\) and dimension \(d\), yielding a
fundamental limit on how well-separated \(N\) vectors in \(\mathbb{R}^d\) can be. We have the following proposition.

\begin{proposition} \label{prop:monotonicity-of-welch-bound}
    For a fixed dimension $d \ge 2$, the Welch bound $W(N,d)$ is strictly increasing as a function of the set size $N$ for all $N > d$.
\end{proposition}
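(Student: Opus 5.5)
Since $W(N,d)=\sqrt{g(N)}$ with $g(N) = \frac{N-d}{d(N-1)}$, and the square root is strictly increasing on $[0,\infty)$, it suffices to show that $g$ is strictly increasing in $N$ on $N>d$. The natural approach is to treat $N$ as a real variable $x \in (d,\infty)$ and differentiate. The result for integer $N$ then follows by restriction.

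The computation is a single quotient-rule step. Write
\[
g(x)=\frac{1}{d}\cdot\frac{x-d}{x-1}=\frac{1}{d}\Bigl(1-\frac{d-1}{x-1}\Bigr).
\]
This form makes monotonicity immediate. For $x>d\ge 2$ we have $x-1>0$, so $\frac{d-1}{x-1}$ is positive and strictly decreasing in $x$, since $d-1\ge 1>0$. Hence $g$ is strictly increasing. Equivalently,
\[
g'(x)=\frac{d-1}{d(x-1)^2}>0.
\]
The hypothesis $d\ge 2$ is used exactly here: it makes $d-1>0$. For $d=1$ we would get $g\equiv 1$, which is constant rather than strictly increasing.

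It remains to pass from $g$ to $W$. On $N>d$ we have $g(N)>0$, so $W=\sqrt{g}$ is well defined. Because $t\mapsto\sqrt t$ is strictly increasing on the nonnegative reals, $W$ is also strictly increasing there.

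There is no genuine obstacle in this argument. The only points needing care are checking the sign conditions ($x-1>0$ and $d-1>0$) and noting that positivity of $g$ keeps the square root well defined. As a remark, one can also record that $W(N,d)\to 1/\sqrt d$ as $N\to\infty$. This links the statement back to the paper's point that a smaller sampling budget $N_{\max}$ permits lower coherence among the sampled token vectors.
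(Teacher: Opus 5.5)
Your proof is correct and is essentially the paper's argument: both use monotonicity of the square root to reduce the claim to $\frac{N-d}{N-1}$ being strictly increasing, and the key sign fact in both is $d-1>0$. The paper cross-multiplies for $M>N>d$ to get $(M-d)(N-1)-(N-d)(M-1)=(M-N)(d-1)>0$, which is the discrete counterpart of your decomposition $\frac{x-d}{x-1}=1-\frac{d-1}{x-1}$ (or of your derivative computation).
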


Proof is provided in Appendix~\ref{app:proof_monotonicity-of-welch-bound-proposition}. This monotonicity reveals an inherent benefit of operating with fewer vectors: for fixed \(d\), the smallest achievable mutual coherence increases with \(N\), meaning that it becomes increasingly difficult to keep all pairwise correlations small as the set grows. Since PENCIL only needs to keep \(N_{\max}\) vectors---rather than maintaining \(|\mathcal V|\) well-separated node vectors globally---it operates under a strictly weaker coherence constraint. In particular, when \(N_{\max} \le d\), perfectly orthogonal vectors are feasible, which can arise in very large but sparse graphs. In contrast, maintaining strictly orthogonal vectors at the scale of \(|\mathcal V|\) is infeasible because \(|\mathcal V| \gg d\).

\subsection{Expressive Power of Plain Transformers on Link Prediction}

Beyond heuristic estimation, PENCIL's expressivity can be probed on highly symmetric graphs by asking whether it distinguishes candidate links whose endpoints are automorphic (structurally interchangeable). Standard node-embedding pipelines often collapse automorphic nodes---and thus such links---to identical representations \cite{chamberlain2023graph}, while prior work breaks this symmetry via node labeling \cite{zhang_link_2018, zhang_labeling_2022}.

PENCIL is imbued with a native symmetry-breaking mechanism: the input projection assigns random vectors to tokens, inducing a non-deterministic node labeling. Since each prediction is computed on a sampled subgraph of size at most \(N_{\max}\), only finitely many labeled versions of that subgraph arise. This places PENCIL naturally in the family of subgraph-based GNNs \cite{murphy_relational_2019, chen_can_2020,zhou_relational_2023, zhao2022from, zhao2022a}. Moreover, equipping PENCIL with local relational pooling (LRP) \cite{chen_can_2020}---i.e., pooling representations across these labeled subgraphs---allows us to evaluate its expressivity within the \(k_\phi\text{-}k_\rho\text{-}m\) framework \cite{lachi2025bridgingtheorypracticelink} and to compare it formally against other paradigms. We leave details about the framework, proof of the following theorem, and further discussion in Appendix~\ref{app:expressive-power-of-PENCIL}.

\begin{theorem} \label{thm:expressivity-of-PENCIL}
    Under the \(k_\phi\text{-}k_\rho\text{-}m\) framework, for suitable parameter settings, PENCIL with LRP is not less expressive than SEAL under the same sampling constraint.
\end{theorem}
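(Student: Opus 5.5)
The plan is to work inside the \(k_\phi\text{-}k_\rho\text{-}m\) framework of \cite{lachi2025bridgingtheorypracticelink}. There, a subgraph-based link predictor is described by the order \(k_\phi\) of the labeling/marking it uses on the sampled subgraph, the order \(k_\rho\) of the invariant reader, and the number \(m\) of labeled copies that are pooled. SEAL sits at the coordinates given by a single subgraph, a labeling of order \(2\) (the query pair \((u,v)\) is marked, with DRNL labels derived from it), and an MPNN reader; this reader is at most 1-WL on the labeled subgraph. We show PENCIL with LRP occupies coordinates that dominate SEAL's. Then the framework's monotonicity/hierarchy result gives "not less expressive." The argument is a simulation: for any pair of query links \((G_1,u_1,v_1)\), \((G_2,u_2,v_2)\) whose sampled subgraphs SEAL separates, we show some parameter setting of PENCIL with LRP also separates them. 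Both models see the same sampled subgraph, so the comparison is under the same sampling constraint.

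\textbf{Step 1 (the pair labeling).} By construction of the encoding scheme, \(v_{\mathrm{src}}\) and \(v_{\mathrm{dst}}\) are fixed to indices \(0,1\), and the task tokens carry the flag \([0,1]\). So PENCIL's input already carries a pair marking at least as fine as SEAL's marking of \((u,v)\). SEAL's DRNL labels are functions of \(d(i,u)\) and \(d(i,v)\) inside the subgraph. We would recover them as follows. By Proposition~\ref{prop:PENCIL-degenerates-nbfnet} and Corollary~\ref{cor:global-heuristics}, PENCIL can be set to a source-conditioned MPNN computing SPD from the source token. Running this twice in parallel hidden channels, once conditioned on \(v_0\) and once on \(v_1\), gives every token both distances. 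A row-wise MLP (inside \(\mathbf{T}_k\) or \(\mathbf{P}_k\)) then maps the distance pair to the DRNL label. This is the piece that needs the most care: we must check that two independent source conditionings fit into one residual stream. We would do this by splitting the hidden dimension \(d\) into blocks and using block-diagonal \(\mathbf{P}_k\).

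\textbf{Step 2 (the reader).} With attention set to the identity, the update \eqref{eq:H-k} is exactly a residual sum-aggregation MPNN over \(\tilde{\mathbf{A}}\). With an MLP folded into \(\mathbf{P}_k\) it is GIN-like, so it can simulate SEAL's GNN layer by layer on the labeled subgraph, using \(K\) equal to SEAL's depth plus the labeling layers. The readout concatenates the representations at \(v_0\) and \(v_1\) (or at the task tokens). We would argue this is at least as informative as SEAL's pooled readout, or emulate a sum pooling through one attention layer with uniform weights. Note that uniform softmax attention produces a mean, not a sum. Since \(N\) is recoverable from the one-hot block, we rescale by \(N\) to get the sum.

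\textbf{Step 3 (the LRP pooling).} The random indices of the remaining nodes, and the orthogonal projection \(\mathbf{W}_0\), yield finitely many labeled copies. LRP pools over them with an injective multiset function, as in DeepSets \cite{zaheer_deep_2017}. By Theorem~\ref{thm:dist_perm_invariance}, the pooled output is a genuine invariant of \((G,u,v)\). Since SEAL's computation from Steps 1--2 ignores the extra one-hot identity bits (set the corresponding columns of \(\mathbf{W}_0\) to zero in those channels), every copy produces SEAL's output. So the pooled value is an injective function of SEAL's output, and separation transfers.

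The main obstacle is Step 1 together with the framework bookkeeping. We must verify that the coordinates \((k_\phi,k_\rho,m)\) assigned to PENCIL with LRP dominate SEAL's in exactly the sense required by the hierarchy theorem of \cite{lachi2025bridgingtheorypracticelink}, rather than merely simulating the computation informally. I would first try to bypass DRNL altogether. The framework's labeling notion for SEAL should reduce to marking the query pair, and PENCIL's canonical positions \(0,1\) plus the task tokens supply that marking directly. If this works, the distance-recovery construction is needed only as a backup.
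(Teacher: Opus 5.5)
Your framework-level bookkeeping does not go through, because the coordinates are misread. In the paper's definition, $k_\phi$ and $k_\rho$ are the expressive powers of the endpoint MPNN $\phi$ and the neighborhood MPNN $\rho$. $m$ is the hop radius of the enclosing neighborhood $\bigcup_{j\le m}\mathcal{N}^j(u,v)$. So $k_\phi$ is not a labeling order, and $m$ is not a number of pooled copies.

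With the correct reading the bookkeeping is short:
\begin{itemize}
\item Neither model has a $\phi$.
\item $m$ agrees because both models act on the same $m$-hop sample.
\item All the content is therefore in $k_\rho$. There SEAL is listed at $1\text{-}|N^m(u,v)|\text{-WL}$ with the distance-augmented $h=\mathbf{X}^D$, not a bare pair marking.
\end{itemize}

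The paper reaches that $k_\rho$ through exactly the mechanism you switch off. It sets $\mathbf{T}_k=\mathrm{Id}$ and gives every subgraph index its own input vector $\mathbf{e}_i$, so the random indices act as node IDs. LRP then pools the resulting MPNN over all endpoint-preserving relabelings $\Gamma_{u,v}$. By the relational-pooling result of \cite{zhou_relational_2023}, this has $1\text{-}|N^m(u,v)|\text{-WL}$ power, and Theorem~3.2 of \cite{lachi2025bridgingtheorypracticelink} finishes.

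In your construction the random-index bits are zeroed. LRP therefore pools identical copies and contributes nothing. Inside the framework, your PENCIL is then just a 1-WL MPNN on a pair-marked subgraph, and the proposal gives no argument that this sits at the $k_\rho$ the framework assigns to SEAL. The idea you are missing is that the random one-hot indices are the node labels whose relational pooling supplies that $k_\rho$. Your preferred bypass (\emph{SEAL's labeling reduces to marking $(u,v)$}) misstates SEAL's entry. Recovering $\mathbf{X}^D$ from a marking is your Step~1 anyway, so it is not a shortcut.

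What survives is Steps~1--3 read as a standalone simulation, and that is a genuinely different route. It shows that a single labeled copy reproduces SEAL's concrete pipeline: labeling, message passing and sum pooling. It does so with no LRP and no appeal to \cite{zhou_relational_2023}, and hence separates every pair SEAL separates. The price has three parts.
\begin{itemize}
\item \textbf{Active attention and FFNs.} It needs FFN nonlinearities, since distances must be decoded from sum-propagated walk counts and the GIN-type update needs them. It also needs an attention layer for pooling. So $\mathbf{T}_k\neq\mathrm{Id}$, and PENCIL leaves the MPNN form in which the paper places it in the framework. Uniform softmax also averages in the two task tokens, which you must handle via the role flag.
\item \textbf{Exact DRNL.} DRNL computes $d(i,u)$ with $v$ deleted, and vice versa, so each BFS must block the other endpoint. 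Plain SPDs from $v_0,v_1$ only give the framework's $\min(\delta(i,u),\delta(i,v))+1$.
\item \textbf{Weaker guarantee.} It certifies dominance only over SEAL's actual computation, which you yourself bound by 1-WL on the labeled subgraph. The paper's route instead certifies the relational-pooling-level $k_\rho$, which it later uses for strict comparisons with ELPH, NCN and Neo-GNN.
\end{itemize}
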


Importantly, \cite{zhou_relational_2023} prove that labeling more nodes in the subgraph yields strictly stronger expressiveness; thus, increasing the sampling budget \(N_{\max}\) directly strengthens PENCIL's expressive power.

\section{Experimental Results} \label{sec:experimental-results}

\begin{table*}[t]
    \centering
    \caption{Results on the original benchmark datasets. 
    N/A means results are not available. Colored results indicate the \textcolor{firstbest}{\(1^{\text{st}}\)}, \textcolor{secondbest}{\(2^{\text{nd}}\)}, and \textcolor{thirdbest}{\(3^{\text{rd}}\)}-best performance in the corresponding metric. Category markers: 
    \raisebox{0.75ex}{\catdot{catblue}} Heuristic-informed, \raisebox{0.75ex}{\catdot{catorange}} ID-based, and \, \raisebox{0.75ex}{\catdot{catred}} Heuristic-agnostic + ID-free.
    }

    \label{tab:original-setting}
    \small
    \resizebox{\textwidth}{!}{%
    \begin{tabular}{@{}clcccccc@{}}
    \toprule
    {\multirow{2}{*}{Type}} & \multirow{2}{*}{Model} & \texttt{cora}                                & \texttt{citeseer}                            & \texttt{pubmed}                              & \texttt{ogbl-collab}                         & \texttt{ogbl-ppa}                            & \texttt{ogbl-citation2}                      \\ \cmidrule(l){3-8} 
             &                               & MRR                                 & MRR                                 & MRR                                 & H@50                                & H@100                               & MRR                                 \\ \midrule
    \multirow{6}{*}{\raisebox{0.75ex}{\catdot{catblue}}} 
             & SEAL                       & 26.69 ± 5.89                        & 39.36 ± 4.99                        & 38.06 ± 5.18                        & 63.37 ± 0.69                        & 48.80 ± 5.61                        & 86.93 ± 0.43                        \\
             & Neo-GNN                    & 22.65 ± 2.60                        & 53.97 ± 5.88                        & 31.45 ± 3.17                        & 66.13 ± 0.61                        & 48.45 ± 1.01                        & 83.54 ± 0.32                        \\
             & BUDDY                      & 26.40 ± 4.40                        & {\color{thirdbest} 59.48 ± 8.96}    & 23.98 ± 5.11                        & 64.59 ± 0.46                        & 47.33 ± 1.96                        & 87.86 ± 0.18                        \\
             & NCN                        & 32.93 ± 3.80                        & 54.97 ± 6.03                        & 35.65 ± 4.60                        & 63.86 ± 0.51                        & 62.63 ± 1.15                        & 89.27 ± 0.05                        \\
             & NCNC                       & 29.01 ± 3.83                        & {\color{secondbest} 64.03 ± 3.67}   & 25.70 ± 4.48                        & 65.97 ± 1.03                        & 62.61 ± 0.76                        & {\color{secondbest} 89.82 ± 0.43}   \\
             & LPFormer                   & {\color{secondbest} 39.42 ± 5.78}   & {\color{firstbest} 65.42 ± 4.65}    & {\color{secondbest} 40.17 ± 1.92}   & {\color{firstbest} 68.14 ± 0.51}    & 63.32 ± 0.63                        & {\color{thirdbest} 89.81 ± 0.13}    \\ \midrule
    \multirow{2}{*}{\raisebox{0.75ex}{\catdot{catorange}}} 
             & MPLP+                      & N/A                                 & N/A                                 & N/A                                 & {\color{secondbest} 66.99 ± 0.40}   & 65.24 ± 1.50                        & {\color{firstbest} 90.72 ± 0.12}    \\
             & Refined-GAE                & N/A                                 & N/A                                 & N/A                                 & 66.11 ± 0.35                        & {\color{secondbest} 78.41 ± 0.83}   & 88.74 ± 0.06                        \\ \midrule
    \multirow{5}{*}{\raisebox{0.75ex}{\catdot{catred}}}
             & GCN                        & 32.50 ± 6.87                        & 50.01 ± 6.04                        & 19.94 ± 4.24                        & 54.96 ± 3.18                        & 29.57 ± 2.90                        & 84.85 ± 0.07                        \\
             & SAGE                       & {\color{thirdbest} 37.83 ± 7.75}    & 47.84 ± 6.39                        & 22.74 ± 5.47                        & 59.44 ± 1.37                        & 41.02 ± 1.94                        & 83.06 ± 0.09                        \\
             & NBFNet                     & 37.69 ± 3.97                        & 38.17 ± 3.06                        & {\color{firstbest} 44.73 ± 2.12}    & N/A                                 & N/A                                 & N/A                                 \\
             & PENCIL w/o Features (Ours) & {\color{firstbest} 42.23 ± 1.98}    & 47.51 ± 3.09                        & 38.28 ± 2.59                        & {\color{thirdbest} 66.88 ± 0.34}    & {\color{thirdbest} 73.85 ± 0.40}    & 86.74 ± 0.26                        \\
             & PENCIL (Ours)             & 32.12 ± 3.04                        & 43.74 ± 5.47                        & {\color{thirdbest} 38.34 ± 5.14}    & 66.56 ± 0.19                        & {\color{firstbest} 79.54 ± 0.07}    & 86.86 ± 0.20                        \\ \bottomrule
    \end{tabular}%
    }
\end{table*}

This section highlights findings: (1) a plain Transformer's surprisingly strong performance on real-world datasets and (2) the impact of model depth under fixed sampling constraints. Further investigations on multiplicative residual, input projection matrix initialization, and computational complexity are included in Appendix~\ref{app:mr-ablation-study}, Appendix~\ref{app:effects-of-input-projection-matrix-initialization}, and Appendix~\ref{app:computational-complexity}, respectively.

Regarding experiment settings, we repeat each experiment with five random seeds on the Planetoid benchmarks (\texttt{cora}, \texttt{citeseer}, and \texttt{pubmed}) \cite{yang_revisiting_2016} and three random seeds on the OGB link prediction datasets (\texttt{ogbl-citation2}, \texttt{ogbl-ppa}, \texttt{ogbl-ddi}, and \texttt{ogbl-collab}) \cite{hu2020ogb}. Except to the main results, all experiments are conducted without node features. Additionally, \texttt{ogbl-ddi} does not have node features. Implementation information is detailed in Appendix~\ref{app:experiments}.

\begin{table*}[t]
    \caption{Results under HeaRT. N/A means results are not available. Colored results indicate the \textcolor{firstbest}{\(1^{\text{st}}\)}, \textcolor{secondbest}{\(2^{\text{nd}}\)}, and \textcolor{thirdbest}{\(3^{\text{rd}}\)}-best performance in MRR. Category markers: 
    \raisebox{0.75ex}{\catdot{catblue}} Heuristic-informed, \raisebox{0.75ex}{\catdot{catorange}} ID-based, and \, \raisebox{0.75ex}{\catdot{catred}} Heuristic-agnostic + ID-free.}
    \label{tab:heart}
    \resizebox{\textwidth}{!}{%
    \begin{tabular}{@{}clccccccc@{}}
    \toprule
    Type & Model              & \texttt{cora}                                & \texttt{citeseer}                            & \texttt{pubmed}                             & \texttt{ogbl-collab}                        & \texttt{ogbl-ppa}                            & \texttt{ogbl-ddi}                            & \texttt{ogbl-citation2}                      \\ \midrule
    \multirow{6}{*}{\raisebox{0.75ex}{\catdot{catblue}}} 
             & SEAL                & 10.67 ± 3.46                        & 13.16 ± 1.66                        & 5.88 ± 0.53                        & {\color{thirdbest} 6.43 ± 0.32} & 29.71 ± 0.71                        & 9.99 ± 0.90                         & 20.60 ± 1.28                        \\
             & Neo-GNN             & 13.95 ± 0.39                        & 17.34 ± 0.84                        & 7.74 ± 0.30                        & 5.23 ± 0.90                        & 21.68 ± 1.14                        & 10.86 ± 2.16                        & 16.12 ± 0.25                        \\
             & BUDDY               & 13.71 ± 0.59                        & 22.84 ± 0.36                        & 7.56 ± 0.18                        & 5.67 ± 0.36                        & 27.70 ± 0.33                        & 12.43 ± 0.50                        & 19.17 ± 0.20                        \\
             & NCN                 & 14.66 ± 0.95                        & {\color[HTML]{EA4335} 28.65 ± 1.21} & 5.84 ± 0.22                        & 5.09 ± 0.38                        & 35.06 ± 0.26                        & 12.86 ± 0.78                        & 23.35 ± 0.28                        \\
             & NCNC                & {\color[HTML]{4285F4} 14.98 ± 1.00} & {\color[HTML]{4285F4} 24.10 ± 0.65} & {8.58 ± 0.59} & 4.73 ± 0.86                        & 33.52 ± 0.26                        & N/A                                 & 19.61 ± 0.54                        \\
             & LPFormer            & {\color[HTML]{EA4335} 16.80 ± 0.52} & {\color[HTML]{34A853} 26.34 ± 0.67} & {\color[HTML]{EA4335} 9.99 ± 0.52} & {\color[HTML]{EA4335} 7.62 ± 0.26} & 40.25 ± 0.24 & {\color[HTML]{4285F4} 13.20 ± 0.54} & {\color[HTML]{EA4335} 24.70 ± 0.55} \\ \midrule
    \multirow{1}{*}{\raisebox{0.75ex}{\catdot{catorange}}} 
             & MPLP+               & N/A                                 & N/A                                 & N/A                                & {\color{secondbest} 6.79}                               & \color{thirdbest}{41.40}                               & N/A                                 & 23.11                               \\ \midrule
    \multirow{5}{*}{\raisebox{0.75ex}{\catdot{catred}}} 
             & GCN                 & {\color[HTML]{34A853} 16.61 ± 0.30} & 21.09 ± 0.88                        & 7.13 ± 0.27                        & 6.09 ± 0.38                        & 26.94 ± 0.48                        & {\color[HTML]{34A853} 13.46 ± 0.34} & 19.98 ± 0.35                        \\
             & SAGE                & 14.74 ± 0.69                        & 21.09 ± 1.15                        & {\color[HTML]{34A853} 9.40 ± 0.70} & 5.53 ± 0.50                        & 27.27 ± 0.30                        & 12.60 ± 0.72                        & 22.05 ± 0.12                        \\
             & NBFNet              & 13.56 ± 0.58                        & 14.29 ± 0.80                        & N/A                                & N/A                                & N/A                                 & N/A                                 & N/A                                 \\
             & PENCIL w/o Features (Ours) & 14.63 ± 0.52                        & 16.50 ± 0.31                        & 7.05 ± 0.17                        & 5.25 ± 0.01                        & {\color[HTML]{34A853} 44.57 ± 0.15} & {\color[HTML]{EA4335} 14.07 ± 0.24} & {\color[HTML]{4285F4} 23.36 ± 0.07} \\
             & PENCIL (Ours)              & 13.13 ± 0.53                        & 16.80 ± 1.43                        & {\color{thirdbest}8.88 ± 0.49}                        & 5.40 ± 0.05                        & {\color[HTML]{EA4335} 45.43 ± 0.31} & N/A                                 & {\color[HTML]{34A853} 23.43 ± 0.14} \\ \bottomrule
    \end{tabular}%
    }
\end{table*}

\begin{figure*}[t]
    \centering
    \includegraphics[width=0.8\textwidth]{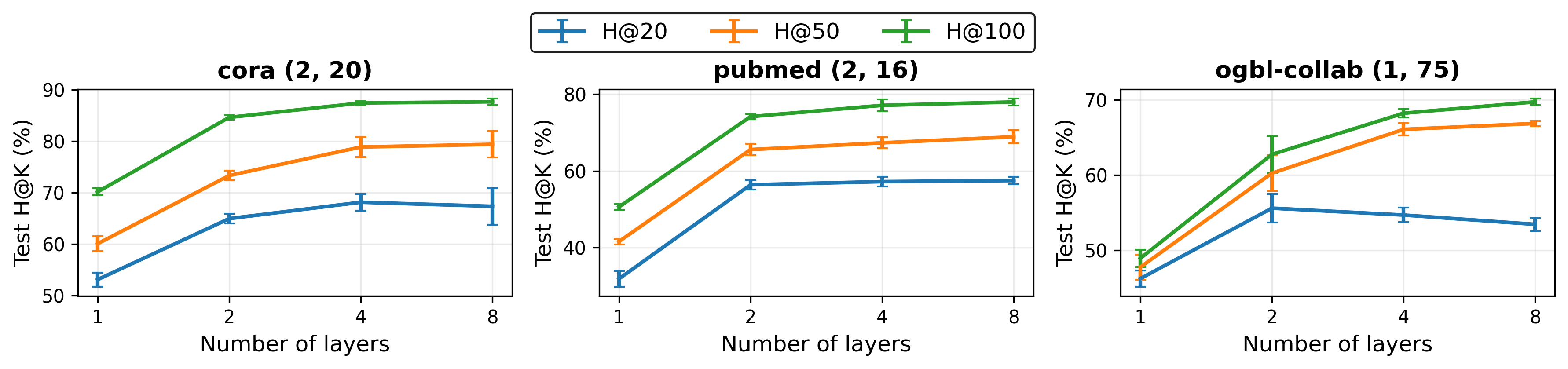}
    \caption{Effect of model depth on different performance metrics across the \texttt{cora}, \texttt{pubmed}, and \texttt{ogbl-collab} datasets.}
    \label{fig:combined_layers_vs_hk}
\end{figure*}

\subsection{Main Results}
Table~\ref{tab:original-setting} and Table~\ref{tab:heart} compare PENCIL with a broad set of baselines, including GCN \cite{kipf2017semisupervised}, SAGE \cite{hamilton2017inductive}, NBFNet \cite{zhu_neural_bellman-ford_2021}, SEAL \cite{zhang_link_2018}, Neo-GNN \cite{yun_neo-gnns_2021}, BUDDY \cite{chamberlain2023graph}, NCN/NCNC \cite{wang2024neural}, LPFormer \cite{shomer_lpformer_2024}, MPLP+ \cite{dong_pure_2024}, and Refined-GAE \cite{ma_reconsidering_2025}. In particular, Table~\ref{tab:original-setting} reports results under the standard single-split protocol: for the Planetoid datasets we use the fixed split from \cite{li_evaluating_2023}, and for the OGB datasets we use the official splits from \cite{hu2020ogb}. We omit \texttt{ogbl-ddi} under the original setting due to the low validation--test correlation noted in \cite{li_evaluating_2023,shomer_lpformer_2024}. Table~\ref{tab:heart} reports results under the HeaRT evaluation protocol \cite{li_evaluating_2023}, in which negative links are curated per positive test example. The two settings share the same training set and differ only in how negatives are constructed for evaluation. We report the dataset-appropriate metrics recommended by \cite{li_evaluating_2023} and \cite{hu2020ogb}. Baseline results are taken from \cite{li_evaluating_2023}, while results for LPFormer, MPLP+, and Refined-GAE are quoted as reported in their respective papers. All remaining experimental details are provided in Appendix~\ref{app:real-world-experiments}.

\textbf{Performance.~} 
While heuristic-informed and ID-based baselines typically dominate, PENCIL challenges this paradigm by delivering robust performance without such priors. In the original setting, PENCIL achieves state-of-the-art results on \texttt{cora} and \texttt{ogbl-ppa}; under HeaRT, it secures top scores on \texttt{ogbl-ppa} and \texttt{ogbl-ddi}. Furthermore, PENCIL demonstrates exceptional stability, consistently exhibiting lower standard deviations than baselines, most notably on \texttt{ogbl-ppa}, where its variance is orders of magnitude smaller ($\pm 0.07$) than that of competing methods. Beyond accuracy and stability, 
PENCIL is highly training-efficient on large-scale benchmarks: it converges on \texttt{ogbl-citation2}, \texttt{ogbl-ddi}, and \texttt{ogbl-ppa} in just 0.5, 8, and 15 epochs, respectively. This is in sharp contrast to pure GNNs, which often require orders of magnitude more training (20--100+ epochs) for the same datasets as reported by \cite{li_evaluating_2023}. This rapid convergence, however, is most pronounced in the data-rich regime. While PENCIL learns quickly from large datasets, it exhibits lower statistical efficiency on small-scale tasks, where it may underperform leading methods. Consequently, the model fails to achieve the same statistical efficiency as seen with larger datasets. This behavior is consistent with the data-intensive requirements of Transformers when recovering graph algorithms \cite{sanford_understanding_2024} and the ``data hunger'' of Vision Transformers (ViT) in Computer Vision, which stems from a lack of translation equivariance \cite{dosovitskiy2021an}. In both cases, the absence of hard-coded inductive biases demands a larger volume of data for generalization. See Appendix~\ref{app:real-world-experiments} for additional training details.

\textbf{Structural Sufficiency.~} Our results indicate that node features are not universally beneficial and can introduce volatility. On Planetoid, specifically \texttt{cora}, adding features degrades performance and significantly inflates variance. In contrast, features provide necessary signal on OGB datasets, yielding major gains on \texttt{ogbl-ppa}. This distinct behavior underscores the complex interplay between feature and structure \cite{coupette2025no,castellana2023investigatinginterplayfeaturesstructures}. Given its exceptional performance using structure alone as reported in Table~\ref{tab:original-setting} and Table~\ref{tab:heart}, PENCIL potentially serves as a critical baseline for this research direction, capable of effectively isolating structural contributions where other models cannot. For further experiments on synthetic graphs with controlled structural and node-feature signals, see Appendix~\ref{app:latent-space-graphs}.

\subsection{Effects of Model Depth}
Standard MPNNs struggle to exploit depth since their receptive field is inherently coupled to graph connectivity, which induces oversmoothing and oversquashing as aggregation expands \cite{li_deeper_2018, oono2020graph, alon2021on}. Conversely, PENCIL effectively harnesses deeper architectures; its utilization of token-level self-attention facilitates dense communication within subgraph instances, mitigating the topological bottlenecks associated with oversquashing \cite{topping2022understanding}. As illustrated in Figure~\ref{fig:combined_layers_vs_hk}, increasing model depth yields consistent improvements in Hits@50 and Hits@100 across the \texttt{cora}, \texttt{pubmed}, and \texttt{ogbl-collab} datasets, exhibiting only mild saturation at greater depths. 
This behavior is markedly different from the MPNN trends documented in Figure~3 of \cite{ma_reconsidering_2025}, where performance typically peaks at shallow depths before dropping sharply as additional layers are introduced.

\section{Related Work} \label{sec:related-work}

\textbf{Link Prediction with GNNs.~} A long line of link predictors augment a base MPNN with auxiliary structural signals. These include overlap-driven/local structure modules (e.g., NCN, Neo-GNN, BUDDY) \cite{wang2024neural,yun_neo-gnns_2021,chamberlain2023graph}, explicit pairwise encodings (e.g., Distance Encoding, DRNL) \cite{li_distance_2020,zhang_link_2018}, and global graph heuristics or per-node ID embeddings (e.g., LPFormer, MPLP, Refined-GAE) \cite{shomer_lpformer_2024,dong_pure_2024,ma_reconsidering_2025}. This stands in contrast to PENCIL, which solely relies on random vector embeddings as node representations (see Section~\ref{sec:plain-transformers-as-link-predictors}). Although PENCIL is formulated as a purely subgraph-based predictor, it could be extended in a scalable hybrid fashion by applying subgraph-based scoring to candidates within the extracted subgraph while using a two-tower retriever for candidates outside it, similarly to ContextGNN \cite{yuan_contextgnn_2025}. We leave the investigation of more scalable approaches for future work.

\textbf{Expressive Powers.~} The expressive power of standard GNNs is upper-bounded by the 1-WL test \cite{xu2018how}. By contrast, Transformers with Laplacian positional encodings are universal approximators and can exceed standard GNN expressivity \cite{kreuzer2021rethinking}. Although PENCIL uses no such PEs, it fits within established expressivity frameworks via relational pooling \cite{murphy_relational_2019,chen_can_2020,zhou_relational_2023}, which symmetrizes a base predictor by averaging over input permutations. Under this lens, PENCIL's randomized vector assignments act as lightweight symmetry breaking, enabling a principled comparison to other link predictors \cite{lachi2025bridgingtheorypracticelink}. More broadly, random features can already be highly expressive \cite{sato_random_features_2021,abboud_surprising_2021}; to our knowledge, PENCIL is the first to formalize link prediction as a randomized Transformer-based predictor and to show that permutation invariance is attainable in distribution.

\textbf{Graph Transformers and PEs/SEs.~}
Prior work on graph Transformers spans a range of choices for injecting structural bias. To name a few, Graph Transformer \cite{dwivedi2021generalization} uses Laplacian eigenvectors as PE, GraphTrans \cite{jain2021representing} stacks a vanilla Transformer on top of GNN outputs, Graphormer \cite{ying_do_transformers_2021} injects multiple structural biases into full-graph attention, and NeuralWalker \cite{chen2025learning} linearizes graphs via random walks. More broadly, because graphs admit no canonical sequence representation, prior work tokenizes structure in various ways---including hops \cite{chen_nagphormer_2023}, eulerization paths \cite{zhao_graphgpt_2025}, edge lists \cite{kim_pure_2022,sanford_understanding_2024}, or nodes \cite{dwivedi_graph_2023,yehudai_depth-width_2025,jain2021representing}. Most of the above graph-Transformer models are developed graph-level (or node-level) prediction, whereas link prediction poses a distinct dyadic setting with different supervision and deployment constraints. To our knowledge, PENCIL is the first to demonstrate that a plain Transformer can serve as a highly effective link predictor under these constraints. By contrast, although LPFormer employs attention for link prediction \cite{shomer_lpformer_2024}, it relies on a PPR-based PE and therefore falls outside our deployment setting.

\section{Conclusion, Limitations and Future Work} 

PENCIL is not simply that a Transformer can be competitive for link prediction, but that we provide, the first systematic explanation of what enables a vanilla bidirectional Transformer to do so without the strong PEs typically required by GTs. PENCIL bridges the gap between high-expressivity link prediction and realistic deployment by utilizing a vanilla Transformer over local subgraphs, entirely bypassing the need for structural encodings or per-node ID dependencies. Our results demonstrate that this architecture extracts richer structural signals than standard GNNs, achieving state-of-the-art performance with significantly fewer parameters and proving that simple, hardware-efficient designs are sufficient for large-scale link prediction. We believe that PENCIL is a principled step toward establishing plain Transformers as a legitimate and theoretically grounded paradigm for link prediction, with broader implications for graph machine learning. Finally, sampling budget serves as a natural knob controlling the expressive power accessible in practice. More broadly, this suggests that advances in hardware and systems for processing larger sampled subgraphs can directly expand the practical capabilities of models like PENCIL.

PENCIL requires a context subgraph for each candidate link; consequently, its GPU compute and memory scale linearly with the number of candidate links. Developing resource-efficient variants---e.g., caching subgraph computations or hybridizing with retrieval-style scoring---remains future work. Moreover, PENCIL's empirical gains are most pronounced on large-scale datasets (Section~\ref{sec:experimental-results}), suggesting that reducing its data requirements through pretraining is a promising avenue. Finally, while our expressivity analysis shows that PENCIL can degenerate to SEAL, a tighter theoretical characterization of full-attention Transformer link predictors is still lacking and would enable clearer comparisons across emerging Transformer-based approaches.

\section*{Acknowledgements}

Quang Truong, Yu Song, and Jiliang Tang are supported by the National Science Foundation (NSF) under grant numbers CNS2321416, IIS2212032, IIS2212144, IIS 2504089, DUE2234015, CNS2246050, DRL2405483 and IOS2035472, the Michigan Department of Agriculture and Rural Development, US Dept of Commerce, Gates Foundation, Amazon Faculty Award, Meta, NVIDIA, Microsoft and SNAP.

\section*{Impact Statement}

This work advances graph machine learning for link prediction by introducing PENCIL, a deployment-oriented Transformer-based link predictor that operates on fixed-budget sampled subgraphs. Potential applications include recommendation, knowledge graph completion, and biological interaction discovery. As with any link prediction system, PENCIL may be used in settings that affect individuals (e.g., recommender systems or social inference), where erroneous, biased, or harmful links could be suggested or amplified. Accordingly, we encourage domain-specific evaluation and responsible deployment practices in high-stakes settings.

\bibliography{main}

@inproceedings{kreuzer2021rethinking,
  title = {Rethinking Graph Transformers with Spectral Attention},
  author = {Devin Kreuzer and Dominique Beaini and William L. Hamilton and Vincent L{\'{e}}tourneau and Prudencio Tossou},
  booktitle = {Advances in Neural Information Processing Systems 34: Annual Conference on Neural Information Processing Systems 2021, NeurIPS 2021, December 6-14, 2021, virtual},
  year = {2021},
  pages = {21618--21629}
}

@inproceedings{li_distance_2020,
  title = {Distance Encoding: Design Provably More Powerful Neural Networks for Graph Representation Learning},
  author = {Pan Li and Yanbang Wang and Hongwei Wang and Jure Leskovec},
  booktitle = {Advances in Neural Information Processing Systems 33: Annual Conference on Neural Information Processing Systems 2020, NeurIPS 2020, December 6-12, 2020, virtual},
  year = {2020}
}

@inproceedings{ying_do_transformers_2021,
author = {Ying, Chengxuan and Cai, Tianle and Luo, Shengjie and Zheng, Shuxin and Ke, Guolin and He, Di and Shen, Yanming and Liu, Tie{-}Yan},
title = {Do transformers really perform bad for graph representation?},
year = {2021},
isbn = {9781713845393},
publisher = {Curran Associates Inc.},
address = {Red Hook, NY, USA},
booktitle = {Proceedings of the 35th International Conference on Neural Information Processing Systems},
articleno = {2212},
numpages = {12},
series = {NIPS '21}
}

@inproceedings{sanford_understanding_2024,
  title = {Understanding Transformer Reasoning Capabilities via Graph Algorithms},
  author = {Clayton Sanford and Bahare Fatemi and Ethan Hall and Anton Tsitsulin and Mehran Kazemi and Jonathan Halcrow and Bryan Perozzi and Vahab Mirrokni},
  booktitle = {Advances in Neural Information Processing Systems 38: Annual Conference on Neural Information Processing Systems 2024, NeurIPS 2024, Vancouver, BC, Canada, December 10 - 15, 2024},
  year = {2024}
}

@inproceedings{ma_graph_2023,
  title = {Graph Inductive Biases in Transformers without Message Passing},
  author = {Liheng Ma and Chen Lin and Derek Lim and Adriana Romero{-}Soriano and Puneet K. Dokania and Mark Coates and Philip H. S. Torr and Ser{-}Nam Lim},
  booktitle = {International Conference on Machine Learning, {ICML} 2023, 23-29 July 2023, Honolulu, Hawaii, {USA}},
  year = {2023},
  volume = {202},
  pages = {23321--23337}
}

@inproceedings{chamberlain2023graph,
  title = {Graph Neural Networks for Link Prediction with Subgraph Sketching},
  author = {Benjamin Paul Chamberlain and Sergey Shirobokov and Emanuele Rossi and Fabrizio Frasca and Thomas Markovich and Nils Yannick Hammerla and Michael M. Bronstein and Max Hansmire},
  booktitle = {The Eleventh International Conference on Learning Representations, {ICLR} 2023, Kigali, Rwanda, May 1-5, 2023},
  year = {2023}
}

@inproceedings{wang2024neural,
  title = {Neural Common Neighbor with Completion for Link Prediction},
  author = {Xiyuan Wang and Haotong Yang and Muhan Zhang},
  booktitle = {The Twelfth International Conference on Learning Representations, {ICLR} 2024, Vienna, Austria, May 7-11, 2024},
  year = {2024}
}

@inproceedings{zhu_neural_bellman-ford_2021,
  title = {{Neural} {Bellman-Ford} Networks: {A} General Graph Neural Network Framework for Link Prediction},
  author = {Zhaocheng Zhu and Zuobai Zhang and Louis{-}Pascal A. C. Xhonneux and Jian Tang},
  booktitle = {Advances in Neural Information Processing Systems 34: Annual Conference on Neural Information Processing Systems 2021, NeurIPS 2021, December 6-14, 2021, virtual},
  year = {2021},
  pages = {29476--29490}
}

@inproceedings{zhang_link_2018,
author = {Zhang, Muhan and Chen, Yixin},
title = {Link prediction based on graph neural networks},
year = {2018},
publisher = {Curran Associates Inc.},
address = {Red Hook, NY, USA},
booktitle = {Proceedings of the 32nd International Conference on Neural Information Processing Systems},
pages = {5171–5181},
numpages = {11},
location = {Montr\'{e}al, Canada},
series = {NIPS'18}
}

@inproceedings{zhang_labeling_2022,
  author       = {Muhan Zhang and Pan Li and Yinglong Xia and Kai Wang and Long Jin},
  editor       = {Marc'Aurelio Ranzato and
                  Alina Beygelzimer and
                  Yann N. Dauphin and
                  Percy Liang and
                  Jennifer Wortman Vaughan},
  title        = {Labeling Trick: {A} Theory of Using Graph Neural Networks for Multi-Node
                  Representation Learning},
  booktitle    = {Advances in Neural Information Processing Systems 34: Annual Conference
                  on Neural Information Processing Systems 2021, NeurIPS 2021, December
                  6-14, 2021, virtual},
  pages        = {9061--9073},
  year         = {2021},
  bibsource    = {dblp computer science bibliography, https://dblp.org}
}

@inproceedings{li_evaluating_2023,
 author = {Li, Juanhui and Shomer, Harry and Mao, Haitao and Zeng, Shenglai and Ma, Yao and Shah, Neil and Tang, Jiliang and Yin, Dawei},
 booktitle = {Advances in Neural Information Processing Systems},
 doi = {10.52202/075280-0169},
 editor = {A. Oh and T. Naumann and A. Globerson and K. Saenko and M. Hardt and S. Levine},
 pages = {3853--3866},
 publisher = {Curran Associates, Inc.},
 title = {Evaluating Graph Neural Networks for Link Prediction: Current Pitfalls and New Benchmarking},
 volume = {36},
 year = {2023}
}

@inproceedings{
yehudai_depth-width_2025,
title={Depth-Width Tradeoffs for Transformers on Graph Tasks},
author={Gilad Yehudai and Clayton Sanford and Maya Bechler-Speicher and Orr Fischer and Ran Gilad-Bachrach and Amir Globerson},
booktitle={The Thirty-ninth Annual Conference on Neural Information Processing Systems},
year={2025},
url={https://openreview.net/forum?id=A2pmNL7L1E}
}

@inproceedings{chen_nagphormer_2023,
 author = {Jinsong Chen and Kaiyuan Gao and Gaichao Li and Kun He},
 bibsource = {dblp computer science bibliography, https://dblp.org},
 booktitle = {The Eleventh International Conference on Learning Representations,
{ICLR} 2023, Kigali, Rwanda, May 1-5, 2023},
 publisher = {OpenReview.net},
 title = {{NAGphormer}: {A} Tokenized Graph Transformer for Node Classification
in Large Graphs},
 url = {https://openreview.net/pdf?id=8KYeilT3Ow},
 year = {2023}
}

@inproceedings{kim_pure_2022,
 author = {Jinwoo Kim and Dat Nguyen and Seonwoo Min and Sungjun Cho and Moontae Lee and Honglak Lee and Seunghoon Hong},
 bibsource = {dblp computer science bibliography, https://dblp.org},
 booktitle = {Advances in Neural Information Processing Systems 35: Annual Conference
on Neural Information Processing Systems 2022, NeurIPS 2022, New Orleans,
LA, USA, November 28 - December 9, 2022},
 editor = {Sanmi Koyejo and
S. Mohamed and
A. Agarwal and
Danielle Belgrave and
K. Cho and
A. Oh},
 title = {Pure Transformers are Powerful Graph Learners},
 year = {2022}
}

@inproceedings{
    zhao_graphgpt_2025,
    title={Graph{GPT}: Generative Pre-trained Graph {Eulerian} Transformer},
    author={Qifang Zhao and Weidong Ren and Tianyu Li and Hong Liu and Xingsheng He and Xiaoxiao Xu},
    booktitle={Forty-second International Conference on Machine Learning},
    year={2025},
    url={https://openreview.net/forum?id=4RdzeucFmW}
}

@article{ma_plain_2025,
 author = {Ma, Liheng and Pal, Soumyasundar and Zhang, Yingxue and Torr, Philip H. S. and Coates, Mark},
 journal = {ArXiv preprint},
 title = {Plain Transformers Can be Powerful Graph Learners},
 url = {https://arxiv.org/abs/2504.12588},
 volume = {abs/2504.12588},
 year = {2025}
}

@inproceedings{shomer_lpformer_2024,
 author = {Harry Shomer and Yao Ma and Haitao Mao and Juanhui Li and Bo Wu and Jiliang Tang},
 bibsource = {dblp computer science bibliography, https://dblp.org},
 booktitle = {Proceedings of the 30th {ACM} {SIGKDD} Conference on Knowledge Discovery
and Data Mining, {KDD} 2024, Barcelona, Spain, August 25-29, 2024},
 doi = {10.1145/3637528.3672025},
 editor = {Ricardo Baeza{-}Yates and
Francesco Bonchi},
 pages = {2686--2698},
 publisher = {{ACM}},
 title = {{LPFormer}: An Adaptive Graph Transformer for Link Prediction},
 url = {https://doi.org/10.1145/3637528.3672025},
 year = {2024}
}

@article{dwivedi_graph_2023,
 author = {Dwivedi, Vijay Prakash and Liu, Yozen and Luu, Anh Tuan and Bresson, Xavier and Shah, Neil and Zhao, Tong},
 journal = {ArXiv preprint},
 title = {Graph Transformers for Large Graphs},
 url = {https://arxiv.org/abs/2312.11109},
 volume = {abs/2312.11109},
 year = {2023}
}

@inproceedings{
kim_revisiting_2025,
title={Revisiting Random Walks for Learning on Graphs},
author={Jinwoo Kim and Olga Zaghen and Ayhan Suleymanzade and Youngmin Ryou and Seunghoon Hong},
booktitle={The Thirteenth International Conference on Learning Representations},
year={2025},
url={https://openreview.net/forum?id=SG1R2H3fa1}
}

@article{
muller_attending_2024,
title={Attending to Graph Transformers},
author={Luis M{\"u}ller and Mikhail Galkin and Christopher Morris and Ladislav Ramp{\'a}{\v{s}}ek},
journal={Transactions on Machine Learning Research},
issn={2835-8856},
year={2024},
url={https://openreview.net/forum?id=HhbqHBBrfZ},
note={}
}

@inproceedings{ma_reconsidering_2025,
author = {Ma, Weishuo and Wang, Yanbo and Wang, Xiyuan and Zhang, Muhan},
title = {Reconsidering the Performance of {GAE} in Link Prediction},
year = {2025},
isbn = {9798400720406},
publisher = {Association for Computing Machinery},
address = {New York, NY, USA},
url = {https://doi.org/10.1145/3746252.3761108},
doi = {10.1145/3746252.3761108},
booktitle = {Proceedings of the 34th ACM International Conference on Information and Knowledge Management},
pages = {2052–2062},
numpages = {11},
location = {Seoul, Republic of Korea},
series = {CIKM '25}
}

@inproceedings{dong_pure_2024,
 author = {Kaiwen Dong and Zhichun Guo and Nitesh V. Chawla},
 bibsource = {dblp computer science bibliography, https://dblp.org},
 booktitle = {Advances in Neural Information Processing Systems 38: Annual Conference
on Neural Information Processing Systems 2024, NeurIPS 2024, Vancouver,
BC, Canada, December 10 - 15, 2024},
 editor = {Amir Globersons and
Lester Mackey and
Danielle Belgrave and
Angela Fan and
Ulrich Paquet and
Jakub M. Tomczak and
Cheng Zhang},
 title = {Pure Message Passing Can Estimate Common Neighbor for Link Prediction},
 year = {2024}
}

@inproceedings{sato_random_features_2021,
  author    = {Ryoma Sato and Makoto Yamada and Hisashi Kashima},
  title     = {Random Features Strengthen Graph Neural Networks},
  booktitle = {Proceedings of the 2021 {SIAM} International Conference on Data Mining, {SDM}},
  year      = {2021},
}

@inproceedings{abboud_surprising_2021,
 author = {Ralph Abboud and {\.I}smail {\.I}lkan Ceylan and Martin Grohe and Thomas Lukasiewicz},
 bibsource = {dblp computer science bibliography, https://dblp.org},
 booktitle = {Proceedings of the Thirtieth International Joint Conference on Artificial
Intelligence, {IJCAI} 2021, Virtual Event / Montreal, Canada, 19-27
August 2021},
 doi = {10.24963/IJCAI.2021/291},
 editor = {Zhi{-}Hua Zhou},
 pages = {2112--2118},
 publisher = {ijcai.org},
 title = {The Surprising Power of Graph Neural Networks with Random Node Initialization},
 url = {https://doi.org/10.24963/ijcai.2021/291},
 year = {2021}
}

@article{liang_can_2025,
 author = {Liang, Shuming and Ding, Yu and Li, Zhidong and Liang, Bin and Zhang, Siqi and Wang, Yang and Chen, Fang},
 journal = {ArXiv preprint},
 title = {Can {GNNs} Learn Link Heuristics? {A} Concise Review and Evaluation of Link Prediction Methods},
 url = {https://arxiv.org/abs/2411.14711},
 volume = {abs/2411.14711},
 year = {2024}
}

@inproceedings{murphy_relational_2019,
 author = {Ryan L. Murphy and Balasubramaniam Srinivasan and Vinayak A. Rao and Bruno Ribeiro},
 bibsource = {dblp computer science bibliography, https://dblp.org},
 booktitle = {Proceedings of the 36th International Conference on Machine Learning,
{ICML} 2019, 9-15 June 2019, Long Beach, California, {USA}},
 editor = {Kamalika Chaudhuri and
Ruslan Salakhutdinov},
 pages = {4663--4673},
 publisher = {{PMLR}},
 series = {Proceedings of Machine Learning Research},
 title = {Relational Pooling for Graph Representations},
 url = {http://proceedings.mlr.press/v97/murphy19a.html},
 volume = {97},
 year = {2019}
}

@inproceedings{zhou_relational_2023,
 author = {Cai Zhou and Xiyuan Wang and Muhan Zhang},
 bibsource = {dblp computer science bibliography, https://dblp.org},
 booktitle = {International Conference on Machine Learning, {ICML} 2023, 23-29 July
2023, Honolulu, Hawaii, {USA}},
 editor = {Andreas Krause and
Emma Brunskill and
Kyunghyun Cho and
Barbara Engelhardt and
Sivan Sabato and
Jonathan Scarlett},
 pages = {42742--42768},
 publisher = {{PMLR}},
 series = {Proceedings of Machine Learning Research},
 title = {From Relational Pooling to Subgraph {GNNs}: {A} Universal Framework
for More Expressive Graph Neural Networks},
 url = {https://proceedings.mlr.press/v202/zhou23n.html},
 volume = {202},
 year = {2023}
}

@article{baras_path_2010,
 author = {John S. Baras and George Theodorakopoulos},
 journal = {Synthesis Lectures on Communication Networks},
 pages = {1-77},
 title = {Path Problems in Networks},
 volume = {3},
 year = {2010}
}

@inproceedings{chen_can_2020,
 author = {Zhengdao Chen and Lei Chen and Soledad Villar and Joan Bruna},
 bibsource = {dblp computer science bibliography, https://dblp.org},
 booktitle = {Advances in Neural Information Processing Systems 33: Annual Conference
on Neural Information Processing Systems 2020, NeurIPS 2020, December
6-12, 2020, virtual},
 editor = {Hugo Larochelle and
Marc'Aurelio Ranzato and
Raia Hadsell and
Maria{-}Florina Balcan and
Hsuan{-}Tien Lin},
 title = {Can Graph Neural Networks Count Substructures?},
 year = {2020}
}

@inproceedings{
    lachi2025bridgingtheorypracticelink,
    title={Bridging Theory and Practice in Link Representation with Graph Neural Networks},
    author={Veronica Lachi and Francesco Ferrini and Antonio Longa and Bruno Lepri and Andrea Passerini and Manfred Jaeger},
    booktitle={The Thirty-ninth Annual Conference on Neural Information Processing Systems},
    year={2025},
    url={https://openreview.net/forum?id=WYnvP3DePZ}
}

@article{Abbas2021ApplicationON,
  title = {Application of network link prediction in drug discovery},
  author = {Khushnood Abbas and Alireza Abbasi and Shi Dong and Niu Ling and Laihang Yu and Bolun Chen and Shi{-}Min Cai and Qambar Hasan},
  journal = {{BMC} Bioinform.},
  year = {2021},
  volume = {22},
  pages = {187},
  doi = {10.1186/S12859-021-04082-Y}
}

@inproceedings{huang_link_2005,
author = {Huang, Zan and Li, Xin and Chen, Hsinchun},
title = {Link prediction approach to collaborative filtering},
year = {2005},
isbn = {1581138768},
publisher = {Association for Computing Machinery},
address = {New York, NY, USA},
url = {https://doi.org/10.1145/1065385.1065415},
doi = {10.1145/1065385.1065415},
booktitle = {Proceedings of the 5th ACM/IEEE-CS Joint Conference on Digital Libraries},
pages = {141–142},
numpages = {2},
location = {Denver, CO, USA},
series = {JCDL '05}
}

@inproceedings{yun_neo-gnns_2021,
author = {Yun, Seongjun and Kim, Seoyoon and Lee, Junhyun and Kang, Jaewoo and Kim, Hyunwoo J.},
title = {{Neo-GNNs}: {N}eighborhood overlap-aware graph neural networks for link prediction},
year = {2021},
isbn = {9781713845393},
publisher = {Curran Associates Inc.},
address = {Red Hook, NY, USA},
booktitle = {Proceedings of the 35th International Conference on Neural Information Processing Systems},
articleno = {1048},
numpages = {12},
series = {NIPS '21}
}

@article{brin_anatomy_1998,
title = {The anatomy of a large-scale hypertextual Web search engine},
journal = {Computer Networks and ISDN Systems},
volume = {30},
number = {1},
pages = {107-117},
year = {1998},
note = {Proceedings of the Seventh International World Wide Web Conference},
issn = {0169-7552},
doi = {https://doi.org/10.1016/S0169-7552(98)00110-X},
url = {https://www.sciencedirect.com/science/article/pii/S016975529800110X},
author = {Sergey Brin and Lawrence Page}
}

@inproceedings{fey_rdl_2024,
  title = 	 {Position: Relational Deep Learning - Graph Representation Learning on Relational Databases},
  author =       {Fey, Matthias and Hu, Weihua and Huang, Kexin and Lenssen, Jan Eric and Ranjan, Rishabh and Robinson, Joshua and Ying, Rex and You, Jiaxuan and Leskovec, Jure},
  booktitle = 	 {Proceedings of the 41st International Conference on Machine Learning},
  pages = 	 {13592--13607},
  year = 	 {2024},
  editor = 	 {Salakhutdinov, Ruslan and Kolter, Zico and Heller, Katherine and Weller, Adrian and Oliver, Nuria and Scarlett, Jonathan and Berkenkamp, Felix},
  volume = 	 {235},
  series = 	 {Proceedings of Machine Learning Research},
  month = 	 {21--27 Jul},
  publisher =    {PMLR},
  url = 	 {https://proceedings.mlr.press/v235/fey24a.html}
}

@inproceedings{zhao_gigl_2025,
  title = {{GiGL}: Large-Scale Graph Neural Networks at {Snapchat}},
  author = {Tong Zhao and Yozen Liu and Matthew Kolodner and Kyle Montemayor and Elham Ghazizadeh and Ankit Batra and Zihao Fan and Xiaobin Gao and Xuan Guo and Jiwen Ren and Serim Park and Peicheng Yu and Jun Yu and Shubham Vij and Neil Shah},
  booktitle = {Proceedings of the 31st {ACM} {SIGKDD} Conference on Knowledge Discovery and Data Mining, V.2, {KDD} 2025, Toronto ON, Canada, August 3-7, 2025},
  year = {2025},
  pages = {5225--5236},
  doi = {10.1145/3711896.3737229}
}

@article{
longa2023graph,
title={Graph Neural Networks for Temporal Graphs: State of the Art, Open Challenges, and Opportunities},
author={Antonio Longa and Veronica Lachi and Gabriele Santin and Monica Bianchini and Bruno Lepri and Pietro Lio and Franco Scarselli and Andrea Passerini},
journal={Transactions on Machine Learning Research},
issn={2835-8856},
year={2023},
url={https://openreview.net/forum?id=pHCdMat0gI},
note={}
}

@inproceedings{dao2022flashattention,
  title={Flash{A}ttention: Fast and Memory-Efficient Exact Attention with {IO}-Awareness},
  author={Dao, Tri and Fu, Daniel Y. and Ermon, Stefano and Rudra, Atri and R{\'e}, Christopher},
  booktitle={Advances in Neural Information Processing Systems (NeurIPS)},
  year={2022}
}

@inproceedings{devlin-etal-2019-bert,
    title = "{BERT}: Pre-training of Deep Bidirectional Transformers for Language Understanding",
    author = "Devlin, Jacob  and
      Chang, Ming-Wei  and
      Lee, Kenton  and
      Toutanova, Kristina",
    editor = "Burstein, Jill  and
      Doran, Christy  and
      Solorio, Thamar",
    booktitle = "Proceedings of the 2019 Conference of the North {A}merican Chapter of the Association for Computational Linguistics: Human Language Technologies, Volume 1 (Long and Short Papers)",
    month = jun,
    year = "2019",
    address = "Minneapolis, Minnesota",
    publisher = "Association for Computational Linguistics",
    url = "https://aclanthology.org/N19-1423/",
    doi = "10.18653/v1/N19-1423",
    pages = "4171--4186"
}

@inproceedings{vaswani2017attention,
author = {Vaswani, Ashish and Shazeer, Noam and Parmar, Niki and Uszkoreit, Jakob and Jones, Llion and Gomez, Aidan N. and Kaiser, \L{}ukasz and Polosukhin, Illia},
title = {Attention is all you need},
year = {2017},
isbn = {9781510860964},
publisher = {Curran Associates Inc.},
address = {Red Hook, NY, USA},
booktitle = {Proceedings of the 31st International Conference on Neural Information Processing Systems},
pages = {6000–6010},
numpages = {11},
location = {Long Beach, California, USA},
series = {NIPS'17}
}

@inproceedings{
kipf2017semisupervised,
title={Semi-Supervised Classification with Graph Convolutional Networks},
author={Thomas N. Kipf and Max Welling},
booktitle={International Conference on Learning Representations},
year={2017},
url={https://openreview.net/forum?id=SJU4ayYgl}
}

@inproceedings{hamilton2017inductive,
     author = {Hamilton, William L. and Ying, Rex and Leskovec, Jure},
     title = {Inductive Representation Learning on Large Graphs},
     booktitle = {NIPS},
     year = {2017}
   }

@inproceedings{
veličković2018graph,
title={Graph Attention Networks},
author={Petar Veličković and Guillem Cucurull and Arantxa Casanova and Adriana Romero and Pietro Liò and Yoshua Bengio},
booktitle={International Conference on Learning Representations},
year={2018},
url={https://openreview.net/forum?id=rJXMpikCZ},
}

@article{hu2020ogb,
  title={{O}pen {G}raph {B}enchmark: Datasets for Machine Learning on Graphs},
  author={Hu, Weihua and Fey, Matthias and Zitnik, Marinka and Dong, Yuxiao and Ren, Hongyu and Liu, Bowen and Catasta, Michele and Leskovec, Jure},
  journal={arXiv preprint arXiv:2005.00687},
  year={2020}
}

@article{dwivedi2021generalization,
  title={A Generalization of Transformer Networks to Graphs},
  author={Dwivedi, Vijay Prakash and Bresson, Xavier},
  journal={AAAI Workshop on Deep Learning on Graphs: Methods and Applications},
  year={2021}
}

@inproceedings{yang_revisiting_2016,
author = {Yang, Zhilin and Cohen, William W. and Salakhutdinov, Ruslan},
title = {Revisiting semi-supervised learning with graph embeddings},
year = {2016},
publisher = {JMLR.org},
booktitle = {Proceedings of the 33rd International Conference on International Conference on Machine Learning - Volume 48},
pages = {40–48},
numpages = {9},
location = {New York, NY, USA},
series = {ICML'16}
}

@article{newman_clustering_2001,
  title = {Clustering and preferential attachment in growing networks},
  author = {Newman, M. E. J.},
  journal = {Phys. Rev. E},
  volume = {64},
  issue = {2},
  pages = {025102},
  numpages = {4},
  year = {2001},
  month = {Jul},
  publisher = {American Physical Society},
  doi = {10.1103/PhysRevE.64.025102},
  url = {https://link.aps.org/doi/10.1103/PhysRevE.64.025102}
}

@article{adamic_friends_2003,
  title = {Friends and neighbors on the Web},
  author = {Lada A. Adamic and Eytan Adar},
  journal = {Soc. Networks},
  year = {2003},
  volume = {25},
  pages = {211--230},
  doi = {10.1016/S0378-8733(03)00009-1}
}

@article{zhou_predicting_2009,
author = {Zhou, Tao and Lü, Linyuan and Zhang, Yi-Cheng},
year = {2009},
month = {10},
pages = {623-630},
title = {Predicting Missing Links via Local Information},
volume = {71},
journal = {The European Physical Journal B - Condensed Matter and Complex Systems},
doi = {10.1140/epjb/e2009-00335-8}
}

@article{katz_new_1953,
  title   = {A New Status Index Derived from Sociometric Analysis},
  volume  = {18},
  doi     = {10.1007/BF02289026},
  number  = {1},
  journal = {Psychometrika},
  author  = {Katz, Leo},
  year    = {1953},
  pages   = {39-43}
}

@inproceedings{you2021identity,
  title={Identity-aware graph neural networks},
  author={You, Jiaxuan and Gomes-Selman, Jonathan M and Ying, Rex and Leskovec, Jure},
  booktitle={Proceedings of the AAAI conference on artificial intelligence},
  pages={10737--10745},
  year={2021}
}

@inproceedings{
robinson2024relbench,
title={{RelBench}: A Benchmark for Deep Learning on Relational Databases},
author={Joshua Robinson and Rishabh Ranjan and Weihua Hu and Kexin Huang and Jiaqi Han and Alejandro Dobles and Matthias Fey and Jan Eric Lenssen and Yiwen Yuan and Zecheng Zhang and Xinwei He and Jure Leskovec},
booktitle={The Thirty-eight Conference on Neural Information Processing Systems Datasets and Benchmarks Track},
year={2024},
url={https://openreview.net/forum?id=WEFxOm3Aez}
}

@inproceedings{yuan_contextgnn_2025,
title = {{ContextGNN}: Beyond Two-Tower Recommendation Systems},
author = {Yiwen Yuan and Zecheng Zhang and Xinwei He and Akihiro Nitta and Weihua Hu and Manan Shah and Bla{\v{z}} Stojanovi{\v{c}} and Shenyang Huang and Jan Eric Lenssen and Jure Leskovec and Matthias Fey},
booktitle = {The Thirteenth International Conference on Learning Representations (ICLR)},
year = {2025}
}

@inproceedings{truong2024weisfeiler,
  title={{Weisfeiler} and {Lehman} Go Paths: Learning Topological Features via Path Complexes},
  author={Truong, Quang and Chin, Peter},
  booktitle={Proceedings of the AAAI Conference on Artificial Intelligence},
  volume={38},
  pages={15382--15391},
  year={2024}
}

@inproceedings{zaheer_deep_2017,
author = {Zaheer, Manzil and Kottur, Satwik and Ravanbhakhsh, Siamak and P\'{o}czos, Barnab\'{a}s and Salakhutdinov, Ruslan and Smola, Alexander J},
title = {Deep Sets},
year = {2017},
isbn = {9781510860964},
publisher = {Curran Associates Inc.},
address = {Red Hook, NY, USA},
booktitle = {Proceedings of the 31st International Conference on Neural Information Processing Systems},
pages = {3394–3404},
numpages = {11},
location = {Long Beach, California, USA},
series = {NIPS'17}
}

@inproceedings{
maron2018invariant,
title={Invariant and Equivariant Graph Networks},
author={Haggai Maron and Heli Ben-Hamu and Nadav Shamir and Yaron Lipman},
booktitle={International Conference on Learning Representations},
year={2019},
url={https://openreview.net/forum?id=Syx72jC9tm},
}

@inproceedings{bodnar_weisfeiler_2021,
  title = 	 {Weisfeiler and {Lehman} Go Topological: Message Passing Simplicial Networks},
  author =       {Bodnar, Cristian and Frasca, Fabrizio and Wang, Yuguang and Otter, Nina and Montufar, Guido F and Li{\'o}, Pietro and Bronstein, Michael},
  booktitle = 	 {Proceedings of the 38th International Conference on Machine Learning},
  pages = 	 {1026--1037},
  year = 	 {2021},
  editor = 	 {Meila, Marina and Zhang, Tong},
  volume = 	 {139},
  series = 	 {Proceedings of Machine Learning Research},
  month = 	 {18--24 Jul},
  publisher =    {PMLR},
}

@article{welch_lower_1974,
  author={Welch, L.},
  journal={IEEE Transactions on Information Theory}, 
  title={Lower bounds on the maximum cross correlation of signals (Corresp.)}, 
  year={1974},
  volume={20},
  number={3},
  pages={397-399},
  doi={10.1109/TIT.1974.1055219}}

@ARTICLE{donoho_uncertainty_2001,
  author={Donoho, D.L. and Huo, X.},
  journal={IEEE Transactions on Information Theory}, 
  title={Uncertainty principles and ideal atomic decomposition}, 
  year={2001},
  volume={47},
  number={7},
  pages={2845-2862},
  doi={10.1109/18.959265}}

@inproceedings{
zeng2021decoupling,
title={Decoupling the Depth and Scope of Graph Neural Networks},
author={Hanqing Zeng and Muhan Zhang and Yinglong Xia and Ajitesh Srivastava and Andrey Malevich and Rajgopal Kannan and Viktor Prasanna and Long Jin and Ren Chen},
booktitle={Advances in Neural Information Processing Systems},
editor={A. Beygelzimer and Y. Dauphin and P. Liang and J. Wortman Vaughan},
year={2021},
url={https://openreview.net/forum?id=_IY3_4psXuf}
}

@inproceedings{wolf-etal-2020-transformers,
    title = "Transformers: State-of-the-Art Natural Language Processing",
    author = "Thomas Wolf and Lysandre Debut and Victor Sanh and Julien Chaumond and Clement Delangue and Anthony Moi and Pierric Cistac and Tim Rault and Rémi Louf and Morgan Funtowicz and Joe Davison and Sam Shleifer and Patrick von Platen and Clara Ma and Yacine Jernite and Julien Plu and Canwen Xu and Teven Le Scao and Sylvain Gugger and Mariama Drame and Quentin Lhoest and Alexander M. Rush",
    booktitle = "Proceedings of the 2020 Conference on Empirical Methods in Natural Language Processing: System Demonstrations",
    month = oct,
    year = "2020",
    address = "Online",
    publisher = "Association for Computational Linguistics",
    url = "https://www.aclweb.org/anthology/2020.emnlp-demos.6",
    pages = "38--45"
}

@inproceedings{
coupette2025no,
title={No Metric to Rule Them All: Toward Principled Evaluations of Graph-Learning Datasets},
author={Corinna Coupette and Jeremy Wayland and Emily Simons and Bastian Rieck},
booktitle={Forty-second International Conference on Machine Learning},
year={2025},
url={https://openreview.net/forum?id=XbmBNwrfG5}
}

@misc{castellana2023investigatinginterplayfeaturesstructures,
      title={Investigating the Interplay between Features and Structures in Graph Learning}, 
      author={Daniele Castellana and Federico Errica},
      year={2023},
      eprint={2308.09570},
      archivePrefix={arXiv},
      primaryClass={cs.LG},
      url={https://arxiv.org/abs/2308.09570}, 
}

@inproceedings{
    alon2021on,
    title={On the Bottleneck of Graph Neural Networks and its Practical Implications},
    author={Uri Alon and Eran Yahav},
    booktitle={International Conference on Learning Representations},
    year={2021},
    url={https://openreview.net/forum?id=i80OPhOCVH2}
}

@article{li_deeper_2018,
  title        = {Deeper Insights Into Graph Convolutional Networks for Semi-Supervised Learning},
  volume       = {32},
  url          = {https://ojs.aaai.org/index.php/AAAI/article/view/11604},
  doi          = {10.1609/aaai.v32i1.11604},
  number       = {1},
  journal      = {Proceedings of the AAAI Conference on Artificial Intelligence},
  author       = {Li, Qimai and Han, Zhichao and Wu, Xiao-ming},
  year         = {2018},
  month        = {Apr.}
}

@inproceedings{
oono2020graph,
title={Graph Neural Networks Exponentially Lose Expressive Power for Node Classification},
author={Kenta Oono and Taiji Suzuki},
booktitle={International Conference on Learning Representations},
year={2020},
url={https://openreview.net/forum?id=S1ldO2EFPr}
}

@inproceedings{
topping2022understanding,
title={Understanding over-squashing and bottlenecks on graphs via curvature},
author={Jake Topping and Francesco Di Giovanni and Benjamin Paul Chamberlain and Xiaowen Dong and Michael M. Bronstein},
booktitle={International Conference on Learning Representations},
year={2022},
url={https://openreview.net/forum?id=7UmjRGzp-A}
}

@article{rampasek2022GPS,
  title={Recipe for a General, Powerful, Scalable Graph Transformer}, 
  author={Ladislav Ramp\'{a}\v{s}ek and Mikhail Galkin and Vijay Prakash Dwivedi and Anh Tuan Luu and Guy Wolf and Dominique Beaini},
  journal={Advances in Neural Information Processing Systems},
  volume={35},
  year={2022}
}

@inproceedings{
jain2021representing,
title={Representing Long-Range Context for Graph Neural Networks with Global Attention},
author={Paras Jain and Zhanghao Wu and Matthew A. Wright and Azalia Mirhoseini and Joseph E. Gonzalez and Ion Stoica},
booktitle={Advances in Neural Information Processing Systems},
editor={A. Beygelzimer and Y. Dauphin and P. Liang and J. Wortman Vaughan},
year={2021},
url={https://openreview.net/forum?id=nYz2_BbZnYk}
}

@inproceedings{
chen2025learning,
title={Learning Long Range Dependencies on Graphs via Random Walks},
author={Dexiong Chen and Till Hendrik Schulz and Karsten Borgwardt},
booktitle={The Thirteenth International Conference on Learning Representations},
year={2025},
url={https://openreview.net/forum?id=kJ5H7oGT2M}
}

@misc{dwivedi2025relationalgraphtransformer,
      title={Relational Graph Transformer}, 
      author={Vijay Prakash Dwivedi and Sri Jaladi and Yangyi Shen and Federico López and Charilaos I. Kanatsoulis and Rishi Puri and Matthias Fey and Jure Leskovec},
      year={2025},
      eprint={2505.10960},
      archivePrefix={arXiv},
      primaryClass={cs.LG},
      url={https://arxiv.org/abs/2505.10960}, 
}

@inproceedings{chen2022structure,
  title={Structure-aware transformer for graph representation learning},
  author={Chen, Dexiong and O’Bray, Leslie and Borgwardt, Karsten},
  booktitle={International conference on machine learning},
  pages={3469--3489},
  year={2022},
  organization={PMLR}
}

@inproceedings{
dosovitskiy2021an,
title={An Image is Worth 16x16 Words: Transformers for Image Recognition at Scale},
author={Alexey Dosovitskiy and Lucas Beyer and Alexander Kolesnikov and Dirk Weissenborn and Xiaohua Zhai and Thomas Unterthiner and Mostafa Dehghani and Matthias Minderer and Georg Heigold and Sylvain Gelly and Jakob Uszkoreit and Neil Houlsby},
booktitle={International Conference on Learning Representations},
year={2021},
url={https://openreview.net/forum?id=YicbFdNTTy}
}

@inproceedings{
xu2018how,
title={How Powerful are Graph Neural Networks?},
author={Keyulu Xu and Weihua Hu and Jure Leskovec and Stefanie Jegelka},
booktitle={International Conference on Learning Representations},
year={2019},
url={https://openreview.net/forum?id=ryGs6iA5Km},
}

@inproceedings{
choromanski2021rethinking,
title={Rethinking Attention with {Performers}},
author={Krzysztof Marcin Choromanski and Valerii Likhosherstov and David Dohan and Xingyou Song and Andreea Gane and Tamas Sarlos and Peter Hawkins and Jared Quincy Davis and Afroz Mohiuddin and Lukasz Kaiser and David Benjamin Belanger and Lucy J Colwell and Adrian Weller},
booktitle={International Conference on Learning Representations},
year={2021},
url={https://openreview.net/forum?id=Ua6zuk0WRH}
}

@inproceedings{fey2019pyg,
  title={Fast Graph Representation Learning with {PyTorch} {Geometric}},
  author={Fey, Matthias and Lenssen, Jan E.},
  booktitle={ICLR Workshop on Representation Learning on Graphs and Manifolds},
  year={2019},
}

@inproceedings{
fey2025pyg,
title={{PyG} 2.0: Scalable Learning on Real World Graphs},
author={Matthias Fey and Jinu Sunil and Akihiro Nitta and Rishi Puri and Manan Shah and Bla{\v{z}} Stojanovi{\v{c}} and Ramona Bendias and Alexandria Barghi and Vid Kocijan and Zecheng Zhang and Xinwei He and Jan Eric Lenssen and Jure Leskovec},
booktitle={Temporal Graph Learning Workshop @ KDD 2025},
year={2025},
url={https://openreview.net/forum?id=DHHLkQvWqs}
}

@inbook{paszke2019pytorch,
author = {Paszke, Adam and Gross, Sam and Massa, Francisco and Lerer, Adam and Bradbury, James and Chanan, Gregory and Killeen, Trevor and Lin, Zeming and Gimelshein, Natalia and Antiga, Luca and Desmaison, Alban and K\"{o}pf, Andreas and Yang, Edward and DeVito, Zach and Raison, Martin and Tejani, Alykhan and Chilamkurthy, Sasank and Steiner, Benoit and Fang, Lu and Bai, Junjie and Chintala, Soumith},
title = {PyTorch: {A}n imperative style, high-performance deep learning library},
year = {2019},
publisher = {Curran Associates Inc.},
address = {Red Hook, NY, USA},
booktitle = {Proceedings of the 33rd International Conference on Neural Information Processing Systems},
articleno = {721},
numpages = {12}
}

@inproceedings{
zhao2022from,
title={From Stars to Subgraphs: Uplifting Any {GNN} with Local Structure Awareness},
author={Lingxiao Zhao and Wei Jin and Leman Akoglu and Neil Shah},
booktitle={International Conference on Learning Representations},
year={2022},
url={https://openreview.net/forum?id=Mspk_WYKoEH}
}

@inproceedings{zhao2022a,
author = {Zhao, Lingxiao and H\"{a}rtel, Louis and Shah, Neil and Akoglu, Leman},
title = {A practical, progressively-expressive {GNN}},
year = {2022},
isbn = {9781713871088},
publisher = {Curran Associates Inc.},
address = {Red Hook, NY, USA},
booktitle = {Proceedings of the 36th International Conference on Neural Information Processing Systems},
articleno = {2472},
numpages = {15},
location = {New Orleans, LA, USA},
series = {NIPS '22}
}

@inproceedings{sarkar2010theoretical,
  author       = {Purnamrita Sarkar and
                  Deepayan Chakrabarti and
                  Andrew W. Moore},
  editor       = {Adam Tauman Kalai and
                  Mehryar Mohri},
  title        = {Theoretical Justification of Popular Link Prediction Heuristics},
  booktitle    = {{COLT} 2010 - The 23rd Conference on Learning Theory, Haifa, Israel,
                  June 27-29, 2010},
  pages        = {295--307},
  publisher    = {Omnipress},
  year         = {2010},
  bibsource    = {dblp computer science bibliography, https://dblp.org}
}

@inproceedings{saxe2014exact,
  author       = {Andrew M. Saxe and
                  James L. McClelland and
                  Surya Ganguli},
  editor       = {Yoshua Bengio and
                  Yann LeCun},
  title        = {Exact solutions to the nonlinear dynamics of learning in deep linear
                  neural networks},
  booktitle    = {2nd International Conference on Learning Representations, {ICLR} 2014,
                  Banff, AB, Canada, April 14-16, 2014, Conference Track Proceedings},
  year         = {2014},
  url          = {http://arxiv.org/abs/1312.6120},
  bibsource    = {dblp computer science bibliography, https://dblp.org}
}

@article{hoff2002latent,
  author = {Peter D Hoff and Adrian E Raftery and Mark S Handcock},
  title = {Latent Space Approaches to Social Network Analysis},
  journal = {Journal of the American Statistical Association},
  volume = {97},
  number = {460},
  pages = {1090--1098},
  year = {2002},
  publisher = {Taylor \& Francis},
  doi = {10.1198/016214502388618906},
  URL = {https://doi.org/10.1198/016214502388618906},
  eprint = {https://doi.org/10.1198/016214502388618906}
}
\bibliographystyle{icml2026}

\newpage
\appendix
\onecolumn

\section{Adjacency Reconstruction for Multiplicative Residual Connection}
\label{app:adjacency-recovery}

The node-adjacency encoding scheme \cite{yehudai_depth-width_2025} used in PENCIL is constructed so that the sampled subgraph connectivity can be recovered directly from the input tensor $\tilde{\mathbf{X}}$, eliminating the need to separately batch an adjacency structure.

Recall that each mini-batch contains $B$ sampled subgraphs. For sample $b$, let $N_b$ be the number of sampled \emph{context} nodes, and define $N_B=\max_{b\in[B]}N_b$ as the maximum number of context nodes in the batch after padding. Let $N_{\max}$ be the global sampling budget (maximum allowed context nodes per subgraph over the dataset). The tokenized input \(\tilde{\mathbf{X}} \in \mathbb{R}^{B \times (N_B+2) \times (2N_{\max}+2)}\) contains $N_B$ context tokens plus two \emph{task} tokens (for $v_{\text{src}}$ and $v_{\text{dst}}$). Each token row concatenates (i) a padded one-hot identifier over the $N_{\max}$ context-index slots, (ii) a padded adjacency-indicator row over the same $N_{\max}$ slots, and (iii) a 2-bit role flag (see Figure~\ref{fig:adjacency_row}). Importantly, the two task-token rows copy the identifier and adjacency parts of their corresponding endpoint context tokens, differing only in the role flag.

\paragraph{Reconstructing a context-sourced adjacency operator.}
We recover the subgraph adjacency operator by slicing out the identifier and adjacency blocks from $\tilde{\mathbf{X}}$. Using Python-style indexing, define
\begin{equation}
\tilde{\mathbf{X}}^{\text{id}} \;=\; \tilde{\mathbf{X}}_{:,:,\,0:N_B}, 
\qquad 
\tilde{\mathbf{X}}^{\text{adj}} \;=\; \tilde{\mathbf{X}}_{:,:,\,N_{\max}:N_{\max}+N_B}.
\end{equation}
Both slices have shape $B\times (N_B+2)\times N_B$. We slice only the first $N_B$ columns (rather than all $N_{\max}$) to restrict the operator to the active, non-padded context slots in the current batch.

Intuitively, $\tilde{\mathbf{X}}^{\text{adj}}$ provides the \emph{off-diagonal} neighborhood indicators: the entry $\tilde{\mathbf{X}}^{\text{adj}}_{b,i,j}=1$ indicates that, in sample $b$, token $i$ is connected to the $j$-th context token (under the subgraph indexing used to form $\tilde{\mathbf{X}}$). The slice $\tilde{\mathbf{X}}^{\text{id}}$ provides an \emph{identity link} to the corresponding context index, i.e., it plays the role of adding self-loops for context tokens, and (because task tokens copy the endpoint identifier) it also ensures that each task token can directly aggregate its endpoint's own context representation through the propagation branch.

We therefore define a \emph{context-sourced} adjacency operator (rows are all tokens; columns are context tokens only):
\begin{equation} \label{eq:adj-construction}
\tilde{\mathbf{A}}_{\text{src}} 
\;=\; 
\tilde{\mathbf{X}}^{\text{adj}} + \tilde{\mathbf{X}}^{\text{id}}
\;\in\;
\mathbb{R}^{B\times (N_B+2)\times N_B}.
\end{equation}
The name $\tilde{\mathbf{A}}_{\text{src}}$ emphasizes that only the $N_B$ context tokens act as \emph{message sources} in this operator, while \emph{all} $N_B+2$ tokens (including the two task tokens) can act as \emph{receivers}.

\paragraph{Extending to a square operator.}
For the multiplicative residual branch, it is convenient to work with a square $(N_B+2)\times (N_B+2)$ operator. Since the encoding does not allocate dedicated column slots for the two task tokens (task tokens are virtual readout tokens rather than sampled subgraph nodes), we append two all-zero columns:
\begin{equation}
\tilde{\mathbf{A}}
\;=\;
\big[\,\tilde{\mathbf{A}}_{\text{src}} \ \ \mathbf{0}\,\big],
\qquad
\mathbf{0}\in\mathbb{R}^{B\times (N_B+2)\times 2}.
\end{equation}
This yields $\tilde{\mathbf{A}}\in\mathbb{R}^{B\times (N_B+2)\times (N_B+2)}$. By construction, the task tokens have zero outgoing columns and therefore do not act as sources in the propagation step; they behave as \emph{receive-only (sink) tokens}, aggregating messages from the context subgraph. This design is intentional: the Transformer self-attention already enables dense interactions among all tokens within each block, while the propagation branch injects an explicit one-hop, structure-respecting aggregation from the sampled context graph into both context and task representations. Empirically, we adopt the row-normalized form $\mathbf{D}^{-1}\tilde{\mathbf{A}}$ to enhance training stability.

\section{Supplementary Materials for Theoretical Analysis} \label{app:proofs}

All theoretical results are restated first before the proof for the reader's convenience.

\subsection{Proof for Theorem \ref{thm:dist_perm_invariance}}\label{app:proof_dist_perm_invariance}

We first state the formal statement of Theorem \ref{thm:dist_perm_invariance}.

\begin{theorem}
    \label{thm:dist_perm_invariance_single_f}
    Let $\mathcal{V}=\{0,1,\dots,N-1\}$ and let $\mathbf{A}\in\{0,1\}^{N\times N}$ be the adjacency matrix of a graph on $\mathcal{V}$.
    Fix an ordered query pair $(u,v)\in \mathcal{V}\times \mathcal{V}$ with $u\neq v$.
    Define the endpoint-constrained permutation set
    \[
    \Gamma_{u,v} \;:=\; \{\rho\in S_N:\ \rho(u)=0,\ \rho(v)=1\}.
    \]
    Let $\rho \sim \mathrm{Unif}(\Gamma_{u,v})$ be a random permutation drawn uniformly from the endpoint-constrained permutation set and let $f$ be any deterministic measurable function.
    Define the randomized predictor
    \[
    S(\mathbf{A};u,v) \;:=\; f\!\left(\mathbf{P}_\rho\, \mathbf{A}\, \mathbf{P}_\rho^\top\right).
    \]
    Then for any permutation $\pi\in S_N$, which is the symmetric group on $N$ elements, letting
    \[
    \mathbf{A}' := \mathbf{P}_\pi\, \mathbf{A}\, \mathbf{P}_\pi^\top,\qquad u' := \pi(u),\qquad v' := \pi(v).
    \]
    Then, the distribution of $S(\mathbf{A};u,v)$ is identical to that of $S(\mathbf{A}';u',v')$:
    \[
    S(\mathbf{A};u,v)\ \overset{d}{=}\ S(\mathbf{A}';u',v').
    \]
\end{theorem}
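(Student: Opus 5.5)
The plan is a change of variables on the random permutation: relabeling the graph by $\pi$ should simply compose with $\rho$, and composition by a fixed permutation maps one endpoint-constrained coset bijectively onto the other.

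\textbf{Step 1: a composition identity.} I would first fix the convention $(\mathbf{P}_\sigma)_{\sigma(i),i}=1$, so that $(\mathbf{P}_\sigma \mathbf{A}\mathbf{P}_\sigma^\top)_{\sigma(i)\sigma(j)}=\mathbf{A}_{ij}$. Under this convention $\mathbf{P}_{\sigma\tau}=\mathbf{P}_\sigma\mathbf{P}_\tau$. If the paper's convention is the transpose, the argument is identical with $\pi^{-1}$ in place of $\pi$. For $\rho'\sim\mathrm{Unif}(\Gamma_{u',v'})$ this gives
\[
\mathbf{P}_{\rho'}\mathbf{A}'\mathbf{P}_{\rho'}^\top=\mathbf{P}_{\rho'}\mathbf{P}_\pi\mathbf{A}\mathbf{P}_\pi^\top\mathbf{P}_{\rho'}^\top=\mathbf{P}_{\rho'\circ\pi}\,\mathbf{A}\,\mathbf{P}_{\rho'\circ\pi}^\top .
\]
Hence $S(\mathbf{A}';u',v')=f(\mathbf{P}_{\rho'\circ\pi}\mathbf{A}\mathbf{P}_{\rho'\circ\pi}^\top)$.

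\textbf{Step 2: a bijection of cosets.} Consider the map $\Phi:\Gamma_{u',v'}\to\Gamma_{u,v}$ defined by $\Phi(\rho')=\rho'\circ\pi$. It lands in $\Gamma_{u,v}$ because $(\rho'\circ\pi)(u)=\rho'(u')=0$ and $(\rho'\circ\pi)(v)=\rho'(v')=1$. Its inverse is $\rho\mapsto\rho\circ\pi^{-1}$, which lands back in $\Gamma_{u',v'}$ by the same computation. So $\Phi$ is a bijection between two finite sets, both of size $(N-2)!$. A bijection between finite sets pushes the uniform measure forward to the uniform measure. Therefore $\rho'\circ\pi\sim\mathrm{Unif}(\Gamma_{u,v})$, which is exactly the law of $\rho$.

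\textbf{Step 3: conclude.} $S(\mathbf{A};u,v)=g(\rho)$ and $S(\mathbf{A}';u',v')=g(\rho'\circ\pi)$ for the single deterministic function $g(\sigma):=f(\mathbf{P}_\sigma\mathbf{A}\mathbf{P}_\sigma^\top)$ on the finite set $S_N$. Since $\rho$ and $\rho'\circ\pi$ have the same law, the two outputs have the same law. Measurability is automatic here: $g$ is defined on a finite discrete space, so for every Borel set $B$, $\Pr[g(\rho)\in B]=|\{\sigma\in\Gamma_{u,v}:g(\sigma)\in B\}|/(N-2)!$, and this expression is the same for both predictors.

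\textbf{Main obstacle.} I expect the only delicate point to be the permutation-matrix convention, namely whether $\mathbf{P}_\sigma\mathbf{P}_\tau$ equals $\mathbf{P}_{\sigma\tau}$ or $\mathbf{P}_{\tau\sigma}$. I would settle it explicitly in Step 1, and then check that the composed permutation $\rho'\circ\pi$ (or $\pi\circ\rho'$) still sends $u$ to $0$ and $v$ to $1$ under the meaning of $u'=\pi(u)$ consistent with that convention.
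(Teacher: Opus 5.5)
Your proposal is correct and follows essentially the same route as the paper: the identity $\mathbf{P}_{\rho'}\mathbf{P}_\pi=\mathbf{P}_{\rho'\circ\pi}$, then the bijection $\rho'\mapsto\rho'\circ\pi$ from $\Gamma_{u',v'}$ onto $\Gamma_{u,v}$ with inverse $\rho\mapsto\rho\circ\pi^{-1}$. The paper uses this bijection to reindex the finite sum expressing $\Pr(S\in B)$, which is what your pushforward-of-uniform argument does. The convention $(\mathbf{P}_\sigma)_{\sigma(i),i}=1$ you fix in Step 1 is the one the paper uses implicitly, and it is the one under which $\rho(u)=0$ actually places $u$ at index $0$, so your hedge about the transposed convention is not needed.
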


The endpoint-constrained randomness $\rho$ models the stochastic reindexing of non-query nodes (Section~\ref{sec:preliminaries}). In contrast, $\pi \in S_N$ denotes an arbitrary node relabeling, reflecting the standard permutation symmetry that graph predictors should satisfy. Theorem~\ref{thm:dist_perm_invariance_single_f} establishes that $S$ is permutation-invariant in distribution under these relabelings. Next, we proceed with the proof.

\begin{proof}
    Fix any $\pi\in S_N$. Let
    \[
    \mathbf{A}' := \mathbf{P}_\pi \mathbf{A} \mathbf{P}_\pi^\top,\qquad u' := \pi(u),\qquad v' := \pi(v),
    \]
    and let $\rho' \sim \mathrm{Unif}(\Gamma_{u',v'})$. By definition,
    \[
    S(\mathbf{A}';u',v') = f\!\left(\mathbf{P}_{\rho'} \mathbf{A}' \mathbf{P}_{\rho'}^\top\right).
    \]
    To prove $S(\mathbf{A};u,v)\overset{d}{=}S(\mathbf{A}';u',v')$, it suffices to show that for every measurable set $B\subseteq\mathbb{R}$,
    \[
    \Pr\!\big(S(\mathbf{A};u,v)\in B\big)=\Pr\!\big(S(\mathbf{A}';u',v')\in B\big).
    \]
    Since $\rho'$ is uniform over the finite set $\Gamma_{u',v'}$, we can expand
    \begin{align}
    \Pr\!\big(S(\mathbf{A}';u',v')\in B\big)
    = \Pr\!\Big(f\!\left(\mathbf{P}_{\rho'} \mathbf{A}' \mathbf{P}_{\rho'}^\top\right)\in B\Big) = \frac{1}{|\Gamma_{u',v'}|}\sum_{\rho'\in\Gamma_{u',v'}}
    \mathbbm{1}\!\left\{ f\!\left(\mathbf{P}_{\rho'} \mathbf{A}' \mathbf{P}_{\rho'}^\top\right)\in B \right\}. \label{eq:sum_start}
    \end{align}
    Substituting $\mathbf{A}' = \mathbf{P}_\pi \mathbf{A} \mathbf{P}_\pi^\top$ yields
    \begin{align}
    \mathbf{P}_{\rho'} \mathbf{A}' \mathbf{P}_{\rho'}^\top
    = \mathbf{P}_{\rho'} (\mathbf{P}_\pi \mathbf{A} \mathbf{P}_\pi^\top) \mathbf{P}_{\rho'}^\top
    = (\mathbf{P}_{\rho'}\mathbf{P}_\pi)\, \mathbf{A}\, (\mathbf{P}_{\rho'}\mathbf{P}_\pi)^\top = \mathbf{P}_{\rho'\circ\pi}\, \mathbf{A}\, \mathbf{P}_{\rho'\circ\pi}^\top. \label{eq:expand}
    \end{align}

    Plugging \eqref{eq:expand} into \eqref{eq:sum_start} gives
    \begin{equation}
    \Pr\!\big(S(\mathbf{A}';u',v')\in B\big)
    = \frac{1}{|\Gamma_{u',v'}|}\sum_{\rho'\in\Gamma_{u',v'}}
    \mathbbm{1}\!\left\{ f\!\left(\mathbf{P}_{\rho'\circ\pi}\, \mathbf{A}\, \mathbf{P}_{\rho'\circ\pi}^\top\right)\in B \right\}. \label{eq:sum_mid}
    \end{equation}
    
    Let $\rho := \rho'\circ\pi$. We claim that the map \(
    \varphi_\pi:\Gamma_{u',v'}\to\Gamma_{u,v}\) such that \( \varphi_\pi(\rho')=\rho'\circ\pi\)
    is a bijection. Indeed, if $\rho'\in\Gamma_{u',v'}$, we have:
    \[
    (\rho'\circ\pi)(u)=\rho'(\pi(u))=\rho'(u')=0,\qquad
    (\rho'\circ\pi)(v)=\rho'(\pi(v))=\rho'(v')=1,
    \]
    so $\rho'\circ\pi\in\Gamma_{u,v}$. Conversely, if $\rho\in\Gamma_{u,v}$ then $\rho\circ\pi^{-1}\in\Gamma_{u',v'}$ and
    $\varphi_\pi(\rho\circ\pi^{-1})=\rho$. Hence, $\varphi_\pi$ is bijective, and in particular
    $|\Gamma_{u',v'}|=|\Gamma_{u,v}|$.
    
    Therefore, we can rewrite the sum in \eqref{eq:sum_mid} over $\rho\in\Gamma_{u,v}$ to obtain
    \begin{align}
    \Pr\!\big(S(\mathbf{A}';u',v')\in B\big)
    &= \frac{1}{|\Gamma_{u,v}|}\sum_{\rho\in\Gamma_{u,v}}
    \mathbbm{1}\!\left\{ f\!\left(\mathbf{P}_{\rho}\, \mathbf{A}\, \mathbf{P}_{\rho}^\top\right)\in B \right\} \nonumber\\
    &= \Pr\!\Big(f\!\left(\mathbf{P}_{\rho} \mathbf{A} \mathbf{P}_{\rho}^\top\right)\in B\Big) \nonumber\\
    &= \Pr\!\big(S(\mathbf{A};u,v)\in B\big), \label{eq:finish}
    \end{align}
    where the second equality uses $\rho\sim \mathrm{Unif}(\Gamma_{u,v})$.
    \eqref{eq:finish} holds for all measurable $B$, which concludes the proof.
\end{proof}

\subsection{Simulation of Non-Residual Message Passing by a Residual MPNN}

We introduce the following lemma to show that the residual MPNN (e.g. the MPNN obtained from PENCIL when self-attention is replaced by the identity map) can simulate a non-residual MPNN.

\begin{lemma}
\label{lem:residual-simulates-nonresidual}
Let
\[
    \mathbf X^{(k)}
    =
    \mathcal M_k^G\!\left(\mathbf X^{(k-1)}\right),
    \qquad
    k=1,\ldots,K,
\]
be a $K$-layer non-residual MPNN, where $\mathcal M_k^G:\mathbb R^{n\times d}\to\mathbb R^{n\times d}$ is any message-passing layer. In particular, its neighborhood aggregation may be sum, mean, max, or min. Then a $K$-layer residual MPNN of hidden width $D=(K+1)d$ can represent every iterate $\mathbf X^{(0)},\ldots,\mathbf X^{(K)}$ exactly.
\end{lemma}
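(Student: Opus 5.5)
The plan is to prove the lemma by a block-structured construction. We split the hidden width $D=(K+1)d$ into $K+1$ disjoint blocks of width $d$, and reserve block $j$ for storing the iterate $\mathbf X^{(j)}$. The residual MPNN state after layer $k$ is written as
\[
\mathbf H^{(k)}=\bigl[\mathbf X^{(0)},\mathbf X^{(1)},\ldots,\mathbf X^{(k)},\mathbf 0,\ldots,\mathbf 0\bigr]\in\mathbb R^{n\times (K+1)d},
\]
with initialization $\mathbf H^{(0)}=[\mathbf X^{(0)},\mathbf 0,\ldots,\mathbf 0]$, which is obtained by zero-padding the input. The claim is proved by induction on $k$: if $\mathbf H^{(k-1)}$ has this form, then a single residual layer $\mathbf H^{(k)}=\mathbf H^{(k-1)}+\widehat{\mathcal M}_k^G(\mathbf H^{(k-1)})$ produces the stated form for $\mathbf H^{(k)}$. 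Every iterate then sits exactly in its own block.

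The key step is to define the residual branch $\widehat{\mathcal M}_k^G$. First it reads block $k-1$ of its input through a linear selector $\mathbf S_{k-1}$, recovering $\mathbf X^{(k-1)}$. Next it applies the given layer $\mathcal M_k^G$ to that block. Finally it writes the result into block $k$ through a linear embedding $\mathbf E_k$, and leaves every other block at zero. In symbols,
\[
\widehat{\mathcal M}_k^G(\mathbf H)=\mathcal M_k^G(\mathbf H\mathbf S_{k-1})\,\mathbf E_k^\top .
\]
Adding the identity skip then gives:
\begin{itemize}
\item blocks $0,\ldots,k-1$ are unchanged;
\item block $k$ was zero and now becomes $\mathbf X^{(k)}$;
\item blocks beyond $k$ stay zero.
\end{itemize}
The aggregation type (sum, mean, max, min) plays no role, because $\mathcal M_k^G$ is applied verbatim to the selected block. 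After $K$ layers every iterate $\mathbf X^{(0)},\ldots,\mathbf X^{(K)}$ can be read off linearly, which establishes exact representation.

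The main obstacle is checking that $\widehat{\mathcal M}_k^G$ is a legitimate message-passing layer in the residual class. It must be built from the same per-node message, aggregation and update primitives, with only linear pre- and post-maps that act node-wise. Two points need argument:
\begin{itemize}
\item Composing $\mathcal M_k^G$ with a node-wise linear projection on its input keeps it a message-passing layer, since messages and updates remain functions of the projected node features.
\item Applying a node-wise linear embedding to its output keeps it a message-passing layer, since the embedding folds into the update function.
\end{itemize}
I would handle this by writing a generic MPNN layer as $\phi\bigl(\mathbf x_i,\bigoplus_{j\in\mathcal N(i)}\psi(\mathbf x_i,\mathbf x_j)\bigr)$ and taking $\tilde\psi=\psi\circ(\mathbf S_{k-1}\times\mathbf S_{k-1})$ and $\tilde\phi=\mathbf E_k\circ\phi\circ(\mathbf S_{k-1}\times\mathrm{id})$. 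The case where $\mathcal M_k^G$ contains its own nonlinearity is covered because $\phi$ is arbitrary. The width $(K+1)d$ is exactly what is needed to keep all iterates from overwriting one another.
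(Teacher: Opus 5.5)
Your proposal is correct and matches the paper's proof essentially step for step: you split the width into $K+1$ blocks, initialize with $\mathbf X^{(0)}$ in block $0$, and at layer $k$ let the residual branch read block $k-1$, apply $\mathcal M_k^G$, and write into the still-zero block $k$, while the skip connection preserves the earlier blocks. Your extra check that the branch wrapped by the selector and embedding is still a legitimate message-passing layer is a detail the paper leaves implicit.
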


\begin{proof}
Partition the hidden representation as
\[
    \mathbf H^{(k)}
    =
    [\mathbf C_0^{(k)}
     \|\mathbf C_1^{(k)}
     \|\cdots
     \|\mathbf C_K^{(k)}],
\]
and initialize
\[
    \mathbf H^{(0)}
    =
    [\mathbf X^{(0)}\|\mathbf 0\|\cdots\|\mathbf 0].
\]
At layer $k$, choose the message-passing branch to read block $k-1$,
apply $\mathcal M_k^G$, and write the result into block $k$:
\[
    \mathbf H^{(k)}
    =
    \mathbf H^{(k-1)}
    +
    [\mathbf 0\|\cdots\|
      \mathcal M_k^G(\mathbf C_{k-1}^{(k-1)})
      \|\cdots\|\mathbf 0].
\]

Since block $k$ is zero before layer $k$,
\[
    \mathbf C_k^{(k)}
    =
    \mathcal M_k^G(\mathbf C_{k-1}^{(k-1)})
    =
    \mathbf X^{(k)}.
\]
All previously populated blocks are preserved by the residual term.
Inductively,
\[
    \mathbf H^{(K)}
    =
    [\mathbf X^{(0)}
     \|\mathbf X^{(1)}
     \|\cdots
     \|\mathbf X^{(K)}].
\]
\end{proof}

It is worth noting that \(D=(K+1)d\) is only a sufficient width for the block-coordinate proof under the original PENCIL update; it is not an inherent requirement of the model. The extra width can be avoided by modifying the architecture so that the Transformer and message-passing paths can be controlled separately, for example through a learnable gate, or by applying message passing before a Transformer block that can be set to the identity. Lastly, \(\tilde{\mathbf A}\) is constructed to include self-loops in Eq.~\ref{eq:adj-construction}; however, one can opt to exclude the self-loops so that message passing is performed on the sampled subgraph only. For the sake of simplicity, we refer $\tilde{\mathbf A}$ to the loop-free context adjacency for the following theoretical results.

\subsection{Neural Bellman--Ford Networks} \label{app:nbfnet}

We first summarize the generalized Bellman--Ford algorithm \cite{baras_path_2010}, rewriting the notation for consistency with our presentation. For the formulation used in knowledge graph setting and its connection to NBFNet, see \cite{zhu_neural_bellman-ford_2021}.

\begin{definition}[Generalized Bellman--Ford Algorithm]
    \label{def:generalized-bellman-ford}
    Let $G=(\mathcal{V},\mathcal{E})$ be a (directed) graph and fix a source node $u$.  
    Let $(\mathcal{S},\oplus,\otimes)$ be a semiring with additive identity \circled{0} and multiplicative identity \circled{1}. Define the incoming edge set of $v$ as \(
    \mathcal{I}(v) := \{\, (x,v)\in\mathcal{E}\,\}.
    \)
    Let $\mathbf{h}_v^{(t)} \in \mathcal{S}$ denote the node representation of $v$ at iteration $t$, and let
    $\mathbf{w}(x,v)\in\mathcal{S}$ denote the edge representation of edge $(x,v)$.
    Here $\mathbbm{1}(\cdot)$ is an indicator that outputs \circled{1} if $v = u$ and \circled{0} otherwise.
    
    The generalized Bellman--Ford updates are
    \begin{align}
        \mathbf{h}_v^{(0)} &\leftarrow \mathbbm{1}(v = u), \label{eq:bellman-ford-init} \\
        \mathbf{h}_v^{(t)} &\leftarrow \left( \bigoplus_{(x,v) \in \mathcal{I}(v)} \mathbf{h}_x^{(t-1)} \otimes \mathbf{w}(x, v) \right) \oplus \mathbf{h}_v^{(0)}. \label{eq:bellman-ford-iter}
    \end{align}
\end{definition}

It has been shown that, with appropriate choices of the semiring operators $(\oplus,\otimes)$, the generalized Bellman--Ford recursion recovers a broad class of path-based heuristics for link prediction \cite{zhu_neural_bellman-ford_2021}. 
NBFNet departs from the strict semiring setting by parameterizing the update with learnable functions: the indicator initialization is replaced by a learnable source embedding, $\otimes$ is instantiated as a message function, and $\oplus$ is instantiated as a permutation-invariant aggregation function. 
This modification yields a source-conditioned MPNN, where representations $\mathbf{h}_v^{(t)}$ are computed conditioned on the chosen source node $u$ \cite{you2021identity}. 

\subsection{Proof of Proposition~\ref{prop:PENCIL-degenerates-nbfnet}} \label{app:proof_PENCIL-degenerates-nbfnet}

\textbf{Proposition~\ref{prop:PENCIL-degenerates-nbfnet}.} \textit{There exists a parameter setting of PENCIL under which its layerwise update reduces to a source-conditioned MPNN, with readout at the canonical destination token \(v_1\).}

\begin{proof}
    By canonicalization, the source is always token $v_0$. Let $d$ be the hidden width of the source-conditioned MPNN to be simulated, and set the PENCIL hidden width to $D=(K+1)d$. We can choose the input projection such that
    \[
        \mathbf{h}_v^{(0)}
        =
        \left[
            \mathbbm{1}(v=v_0)\mathbf{e}
            \,\middle\|\,
            \mathbf{0}
            \,\middle\|\,
            \cdots
            \,\middle\|\,
            \mathbf{0}
        \right]
        \in\mathbb{R}^{D},
    \]
    for a learnable $\mathbf{e}\in\mathbb{R}^d$, matching NBFNet's source-conditioned initialization. Setting $\mathbf{T}_k=\operatorname{Id}$ for all $k$ replaces the self-attention transformation by the identity map, and Eq.~\ref{eq:H-k} reduces to a sum-aggregation MPNN with a residual term. By choosing $\mathbf{P}_k$ as in Lemma~\ref{lem:residual-simulates-nonresidual}, the resulting residual MPNN exactly simulates the corresponding non-residual source-conditioned MPNN. Reading out the final coordinate block at the canonical destination token $v_1$ therefore yields a source-conditioned link score.
\end{proof}

\subsection{Proof of Corollary~\ref{cor:global-heuristics}} \label{app:proof_global-heuristics}

\textbf{Corollary~\ref{cor:global-heuristics}.} 
\textit{
    Under suitable parameter settings and operator choices, PENCIL can realize a broad class of classical path-based link prediction scores and graph algorithms, including Katz index, Personalized PageRank, SPD, widest path, and most reliable path.
}

\begin{proof}
    By Proposition~\ref{prop:PENCIL-degenerates-nbfnet} and Lemma~\ref{lem:residual-simulates-nonresidual}, PENCIL can simulate the corresponding non-residual source-conditioned MPNN. Instantiating its boundary condition, message operator, aggregation operator, and edge weights with those of the generalized Bellman--Ford algorithm therefore makes the PENCIL iterates coincide with the generalized Bellman--Ford iterates. \citet{zhu_neural_bellman-ford_2021} show that the listed heuristics are recovered under their respective semiring operators and edge-weight choices, which concludes the proof.
\end{proof}

\subsection{Formal statements of Remark~\ref{remark:dot-products}} \label{app:formal-theorems}

The following results are established in \cite{dong_pure_2024}; we restate them here with minor rephrasing to align with our notation and paper context.

\begin{theorem}{\cite{dong_pure_2024}} \label{thm:common-neighbor-estimation}
    Let $\mathbf{h}_v^{(0)} \in \mathbb{R}^d$ be initial node representations where each node vector is a zero-mean unit-norm vector in expectation. Under a single layer of sum-aggregation message passing, the inner product of any two node embeddings is an unbiased estimator of their common neighbor count:
    \begin{equation}
        \mathbb{E}\!\left[\mathbf{h}_u^{(1)} \cdot \mathbf{h}_v^{(1)}\right] = |\mathcal{N}(u) \cap \mathcal{N}(v)|.
    \end{equation}
\end{theorem}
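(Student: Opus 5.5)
The plan is to expand the one-layer sum aggregation and take expectations term by term. With no self-loops, a single sum-aggregation layer gives $\mathbf{h}_u^{(1)}=\sum_{a\in\mathcal{N}(u)}\mathbf{h}_a^{(0)}$, and likewise for $v$. Bilinearity of the inner product then gives
\begin{equation*}
\mathbf{h}_u^{(1)}\cdot\mathbf{h}_v^{(1)}=\sum_{a\in\mathcal{N}(u)}\sum_{b\in\mathcal{N}(v)}\mathbf{h}_a^{(0)}\cdot\mathbf{h}_b^{(0)} .
\end{equation*}
Linearity of expectation lets me handle each pair $(a,b)$ separately. I split the double sum into diagonal pairs $a=b$, which requires $a\in\mathcal{N}(u)\cap\mathcal{N}(v)$, and off-diagonal pairs $a\neq b$.

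For diagonal pairs, the hypothesis "unit-norm in expectation" gives $\mathbb{E}\|\mathbf{h}_a^{(0)}\|^2=1$. Summed over the common neighbours, these pairs contribute exactly $|\mathcal{N}(u)\cap\mathcal{N}(v)|$.

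For off-diagonal pairs, I need $\mathbb{E}[\mathbf{h}_a^{(0)}\cdot\mathbf{h}_b^{(0)}]=0$. This is where the hypotheses have to be read carefully, and I expect it to be the main obstacle. "Zero-mean" by itself is not enough: it must be combined with independence (or at least uncorrelatedness) of distinct node vectors. I will make this explicit, as in \cite{dong_pure_2024}, by assuming the $\mathbf{h}_a^{(0)}$ are independent across nodes. Then
\begin{equation*}
\mathbb{E}[\mathbf{h}_a^{(0)}\cdot\mathbf{h}_b^{(0)}]=\mathbb{E}[\mathbf{h}_a^{(0)}]\cdot\mathbb{E}[\mathbf{h}_b^{(0)}]=0 ,
\end{equation*}
so every off-diagonal pair vanishes. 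Summing the two cases yields the claimed identity. I also note that the argument only uses first- and second-moment conditions, not any specific distribution.

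Finally, I will check that this model really is the one PENCIL instantiates (this is needed for Proposition~\ref{prop:local-heuristics}, but I mention it here). Two caveats apply:
\begin{itemize}
\item Self-loops must be excluded. Otherwise extra terms $\mathbf{h}_u^{(0)}\cdot\mathbf{h}_v^{(0)}$ and cross terms between $u$ or $v$ and the neighbours appear. These also have zero expectation whenever $u\neq v$ and the endpoints are not themselves common neighbours, but the clean statement uses the loop-free $\tilde{\mathbf{A}}$, as the paper already stipulates.
\item If the vectors come from a fixed orthogonal projection rather than random draws, the off-diagonal terms are exactly zero. In that case the identity holds deterministically, not just in expectation.
\end{itemize}
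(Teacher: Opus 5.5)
Your proposal is correct and is the same argument as the result the paper cites from \cite{dong_pure_2024}; the paper itself gives no proof beyond that citation. Expanding $\mathbf{h}_u^{(1)}\cdot\mathbf{h}_v^{(1)}$ bilinearly over $\mathcal{N}(u)\times\mathcal{N}(v)$ gives $\mathbb{E}\|\mathbf{h}_a^{(0)}\|^2=1$ per common neighbour, and the off-diagonal terms vanish by independence plus zero mean, an assumption you rightly make explicit since the restated hypothesis omits it. (Small wording fix in your self-loop caveat: the extra terms vanish in expectation when $u\neq v$ and $u,v$ are not adjacent to each other, rather than when they are ``not common neighbours.'')
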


\begin{theorem}{\cite{dong_pure_2024}} \label{thm:common-neighbor-estimation-general}
    Under the same conditions as Theorem~\ref{thm:common-neighbor-estimation}, given $p$ and $q$ iterations of sum-aggregation message passing, the expected inner product between
    the resulting node embeddings satisfies
    \begin{equation}
        \mathbb{E}\!\left[\mathbf{h}_u^{(p)} \cdot \mathbf{h}_v^{(q)}\right]
        \;=\;
        \sum_{k\in V} \, \big|\mathrm{walks}^{(p)}(k,u)\big| \, \big|\mathrm{walks}^{(q)}(k,v)\big|,
    \end{equation}
    where $\big|\mathrm{walks}^{(\ell)}(a,b)\big|$ denotes the number of length-$\ell$ walks between nodes $a$ and $b$.
\end{theorem}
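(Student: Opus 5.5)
The plan is to make the message-passing map explicit as a linear operator and then take expectations term by term. I read ``sum-aggregation message passing'' in Theorem~\ref{thm:common-neighbor-estimation-general} as the linear, loop-free update $\mathbf{h}_v^{(t)}=\sum_{x\in\mathcal N(v)}\mathbf{h}_x^{(t-1)}$, with no nonlinearity and no self term; this matches the loop-free $\tilde{\mathbf A}$ convention fixed after Lemma~\ref{lem:residual-simulates-nonresidual}. In matrix form this reads $\mathbf H^{(t)}=\mathbf A\mathbf H^{(t-1)}$, so unrolling gives $\mathbf H^{(p)}=\mathbf A^{p}\mathbf H^{(0)}$. Row-wise,
\[
\mathbf h_u^{(p)}=\sum_{k\in V}(\mathbf A^{p})_{uk}\,\mathbf h_k^{(0)} .
\]
By the standard induction on $p$ (or the path-expansion of matrix powers), $(\mathbf A^{p})_{uk}=|\mathrm{walks}^{(p)}(k,u)|$. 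Since $\mathbf A$ is symmetric for an undirected graph, the orientation of the walk does not matter.

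Next I expand the inner product bilinearly:
\[
\mathbf h_u^{(p)}\cdot\mathbf h_v^{(q)}=\sum_{k,l\in V}(\mathbf A^{p})_{uk}(\mathbf A^{q})_{vl}\,\mathbf h_k^{(0)}\cdot\mathbf h_l^{(0)} .
\]
The coefficients are deterministic, so linearity of expectation reduces everything to the Gram expectations $\mathbb E[\mathbf h_k^{(0)}\cdot\mathbf h_l^{(0)}]$. For $k=l$ this equals $\mathbb E\|\mathbf h_k^{(0)}\|^2=1$ by the unit-norm-in-expectation hypothesis. For $k\neq l$, I would use independence of the initial vectors across nodes together with zero mean, which gives $\mathbb E[\mathbf h_k^{(0)}\cdot\mathbf h_l^{(0)}]=\mathbb E[\mathbf h_k^{(0)}]\cdot\mathbb E[\mathbf h_l^{(0)}]=0$.

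Hence $\mathbb E[\mathbf h_k^{(0)}\cdot\mathbf h_l^{(0)}]=\delta_{kl}$. The double sum then collapses to the diagonal,
\[
\sum_{k}(\mathbf A^{p})_{uk}(\mathbf A^{q})_{vk}=\sum_k|\mathrm{walks}^{(p)}(k,u)|\,|\mathrm{walks}^{(q)}(k,v)|,
\]
which is the claim. Theorem~\ref{thm:common-neighbor-estimation} is the special case $p=q=1$, because $(\mathbf A)_{uk}(\mathbf A)_{vk}=1$ exactly when $k\in\mathcal N(u)\cap\mathcal N(v)$.

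The main obstacle is the hypothesis as stated: ``zero-mean unit-norm in expectation'' alone does not kill the cross terms. I would make explicit that the vectors are pairwise uncorrelated, for instance independent across nodes. A weaker sufficient condition is $\mathbb E[\mathbf h_k^{(0)\top}\mathbf h_l^{(0)}]=0$ for $k\neq l$, which holds exactly, with no randomness needed on the off-diagonal, under the orthogonal input projection of PENCIL. A secondary point is to confirm that the update has no self-loop or normalization. With self-loops the same argument goes through with $\mathbf A+\mathbf I$ in place of $\mathbf A$, which counts lazy walks; with row normalization the argument yields weighted walk counts.
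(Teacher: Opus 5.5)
Your argument is correct and is the standard one behind this result. The paper gives no proof of its own and defers to Dong et al., whose argument likewise unrolls linear sum aggregation to $\mathbf h_u^{(p)}=\sum_k(\mathbf A^{p})_{uk}\mathbf h_k^{(0)}$ and collapses the bilinear expansion via $\mathbb E[\mathbf h_k^{(0)}\cdot\mathbf h_l^{(0)}]=\delta_{kl}$.

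You are also right that the restated hypothesis needs independence (or at least uncorrelatedness) across nodes. That is exactly what the independent normalized-Rademacher sampling in the proof of Proposition~\ref{prop:local-heuristics} supplies.

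Your aside that exact orthogonality holds under PENCIL's default orthogonal projection needs a qualification. The identifier block must be isolated and $N_{\max}\le d$; otherwise the shared role-flag column and the adjacency columns add correlated components to the token embeddings.
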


\subsection{Proof of Proposition~\ref{prop:local-heuristics}} \label{app:proof_local-heuristics}

\textbf{Proposition~\ref{prop:local-heuristics}.} \textit{
    There exists a parameter setting of PENCIL under which it can estimate heuristics stated in Theorem~\ref{thm:common-neighbor-estimation} and Theorem~\ref{thm:common-neighbor-estimation-general}.
}

\begin{proof}
    Set the PENCIL hidden width to
    $D=(K+1)d$, where \(K \geq \max\{p, q\}\). We can choose the input projection such that, for each
    sampled token $v_i$, $i\in[N]$,
    \[
        \mathbf{h}_{v_i}^{(0)}
        =
        \left[
            \mathbf{e}_i
            \,\middle\|\,
            \mathbf{0}
            \,\middle\|\,
            \cdots
            \,\middle\|\,
            \mathbf{0}
        \right]
        \in\mathbb{R}^{D},
    \]
    where the vectors $\mathbf{e}_i\in\mathbb{R}^d$ are sampled independently from a normalized Rademacher distribution and then fixed. Setting $\mathbf{T}_k=\operatorname{Id}$ for all $k$ replaces the self-attention transformation by the identity map, and Eq.~\ref{eq:H-k} reduces to a sum-aggregation MPNN with a residual term. By choosing $\mathbf{P}_k$ as in Lemma~\ref{lem:residual-simulates-nonresidual}, the resulting residual MPNN exactly simulates the corresponding non-residual sum-aggregation MPNN and therefore meets the conditions of Theorem~\ref{thm:common-neighbor-estimation} and Theorem~\ref{thm:common-neighbor-estimation-general}. Assume the final link scorer function is a non-linear MLP. Hence, by the universal approximation theorem, a nonlinear MLP jointly applied to the endpoint representations can approximate their inner product, and therefore the stated heuristics.
\end{proof}

\subsection{Proof of Proposition~\ref{prop:monotonicity-of-welch-bound}} \label{app:proof_monotonicity-of-welch-bound-proposition}

\textbf{Proposition~\ref{prop:monotonicity-of-welch-bound}.} \textit{
    For a fixed dimension $d \ge 2$, the Welch bound $W(N,d)$ is strictly increasing as a function of the set size $N$ for all $N > d$.
}

\begin{proof}
    Fix \(d\ge 2\) and let \(M>N>d\). Since \(\sqrt{\cdot}\) is strictly increasing on \((0,\infty)\), it suffices to show
    \[
    \frac{M-d}{M-1}>\frac{N-d}{N-1}.
    \]
    Because \((M-1)(N-1)>0\), cross-multiplying yields
    \[
    \frac{M-d}{M-1}>\frac{N-d}{N-1}
    \;\Longleftrightarrow\;
    (M-d)(N-1)>(N-d)(M-1).
    \]
    Rearranging,
    \[
    (M-d)(N-1)-(N-d)(M-1)=(M-N)(d-1)>0,
    \]
    since \(M>N\) and \(d\ge 2\). Therefore \(W(M,d)>W(N,d)\), and \(W(N,d)\) is strictly increasing in \(N\) for all \(N>d\).
\end{proof}

\subsection{Expressive Power of PENCIL} \label{app:expressive-power-of-PENCIL}

\begin{table}[t]
    \centering
    \caption{Model formulations expressed within the $k_{\phi}$-$k_{\rho}$-$m$ framework. A `/' indicates that the corresponding component is not included in the model. Table is extracted from \cite{lachi2025bridgingtheorypracticelink}, where PENCIL is our contribution.}
    \label{tab:expressiveness-comparison}
    \resizebox{\textwidth}{!}{
    \renewcommand{\arraystretch}{1.5}
    \begin{tabular}{lcccccccc}
    \toprule
    Model & COMB & $g$ & $k_{\phi}$ & AGG & $\psi$ & $k_{\rho}$ & $m$ & $h$ \\
    \midrule
    Pure GNN & / & $\odot$ & 1-WL & / & / & / & / & / \\
    NCN & $\|$ & $\odot$ & 1-WL & $\sum$ & $\rho(i; G, \mathbf{X}^0)$ & 1-WL & 1 & $\mathbf{X}^0$ \\
    ELPH & $\|$ & $\|$ & 1-WL & $\sum$ & $\rho(i; G, \mathbf{X}^1) \cdot \prod_{r=1}^{m} \prod_{d=1}^{m} \mathbbm{1}_{dr}(i)$ & 1-WL & m & $\mathbf{x}_i^1 = 1$ \\
    Neo-GNN & $\|$ & $\|$ & 1-WL & $\sum$ & \begin{tabular}[c]{@{}c@{}}$b \cdot \rho(i; G, \mathbf{X}^0)$ with\\ $b = \sum_{r=1}^{m} \sum_{d=1}^{m} (\mathbf{A}^r)_{uv} \cdot (\mathbf{A}^d)_{uv}$\end{tabular} & 1-WL & m & $\mathbf{X}^0$ \\
    SEAL & / & / & / & $\sum$ & $\rho(i; G, \mathbf{X}^D)$ & $1-|N^m(u,v)|\text{-WL}$ & m & $\mathbf{x}_i^D = \mathbf{x}_i^0 \| \min_{u,v}(\delta(i, u), \delta(i, v)) + 1$ \\
    \midrule
    PENCIL & / & / & / & $\sum$ & \begin{tabular}[c]{@{}c@{}}$\psi_{\text{end}}\!\big(\rho(i;G,\tilde{\mathbf{X}}),u,v\big)$ with\\ $\psi_{\text{end}}(z_i,u,v)=\big[\mathbbm{1}(i=u)\,z_i \ \| \ \mathbbm{1}(i=v)\,z_i\big]$\end{tabular} & $1-|N^m(u,v)|\text{-WL}$ & m & $\tilde{\mathbf{x}}_i=\mathbf{r}_i$ \\
    \bottomrule
    \end{tabular}%
    }
\end{table}

We first briefly review the \(k_\phi\text{-}k_\rho\text{-}m\) framework proposed in \cite{lachi2025bridgingtheorypracticelink}, which is the framework where our expressive power analysis is conducted. Then, we align PENCIL with the framework.

\begin{definition}[$k_\phi$-$k_\rho$-$m$ framework]
    Let $G=(\mathcal{V},\mathcal{E})$ be a graph with initial node-feature matrix $\mathbf{X}^0$. For brevity, we suppress the explicit dependence of all functions on $G$ unless needed.

    A link-representation MPNN $M$ is said to belong to the $k_\phi$-$k_\rho$-$m$ framework if its output for a node pair $(u,v)$ can be written as
    \begin{gather}
    F\big((u,v),\mathbf{X}^0\big)
    = \mathrm{COMB}\Big(
        g\big(\phi(u,\mathbf{X}^0),\,\phi(v,\mathbf{X}^0)\big),\;
        \mathrm{AGG}\big(\{\, f(i,u,v,\mathbf{X}^0)\mid i \in \bigcup_{j=0}^{m} \mathcal{N}^{j}(u,v)\,\}\big)
    \Big), \\
    f(i,u,v,\mathbf{X}^0)
    = \psi\big(\rho(i,\,h(u,v,\mathbf{X}^0)),\,u,\,v\big),
    \end{gather}
    where:

    \begin{itemize}
        \item $\phi$ and $\rho$ are MPNNs with expressive power $k_\phi$ and $k_\rho$, respectively.
        \item For $m\ge 1$, $\mathcal{N}^m(v)$ denotes the set of nodes at exactly $m$ hops from $v$. For a pair $(u,v)$, define the joint $m$-hop neighborhood $\mathcal{N}^m(u,v) := \mathcal{N}^m(u)\cup \mathcal{N}^m(v)$.
        \item $h(u,v,\mathbf{X}^0)\in\mathbb{R}^{|\mathcal{V}|\times d}$ is a derived node-feature matrix computed from $\mathbf{X}^0$, optionally augmented with pair-specific information for $(u,v)$.
        \item $\psi$ applies a pair-dependent rescaling to message-passing representations, potentially using information specific to $(u,v)$.
        \item $g$ aggregates the node representations of endpoints $\phi(u,\mathbf{X}^0)$ and $\phi(v,\mathbf{X}^0)$ into an endpoints' representation.
        \item $\mathrm{AGG}$ is a permutation-invariant aggregation operator over the set of node representations in the selected neighborhood, and $\mathrm{COMB}$ merges the endpoints' representation and neighborhood representations.
    \end{itemize}
    
\end{definition}

Table~\ref{tab:expressiveness-comparison} aligns existing link predictors---ELPH \cite{chamberlain2023graph}, Neo-GNN \cite{yun_neo-gnns_2021}, NCN \cite{wang2024neural}, and SEAL \cite{zhang_link_2018}---with the $k_\phi$-$k_\rho$-$m$ framework \cite{lachi2025bridgingtheorypracticelink}. Our primary comparison is between PENCIL and SEAL; the remaining models are included for context (see \cite{lachi2025bridgingtheorypracticelink} for details).

PENCIL and SEAL fit the framework in a similar \emph{link-centric} manner: neither model uses a separate endpoint encoder $\phi$ nor an explicit endpoint combiner $(g,\mathrm{COMB})$. Instead, both compute node representations on the extracted enclosing subgraph and then aggregate them into a link representation. Consequently, the relevant components for distinguishing the two models are $\mathrm{AGG}$, $\psi$, $k_\rho$, $m$, and $h$.

Both methods operate on the $m$-hop enclosing neighborhood of $(u,v)$, so they share the same $m$ in Table~\ref{tab:expressiveness-comparison}. SEAL constructs a distance-aware feature matrix $\mathbf{X}^D$ using DRNL labels \cite{zhang_link_2018} and applies an MPNN $\rho$ on this subgraph. In the framework, this corresponds to $h(u,v,\mathbf{X}^0)=\mathbf{X}^D$ and a pair-independent identity mapping $\psi(z_i,u,v)=z_i$, so the neighborhood-level representation reduces to aggregation over all nodes in the $m$-hop enclosing neighborhood:
$\sum_{i \in \bigcup_{j=0}^m \mathcal{N}^j(u,v)} \rho(i;G,\mathbf{X}^D)$.

In contrast, PENCIL uses an index-dependent input embedding $\tilde{\mathbf{X}}$ for the sampled subgraph nodes (Section~\ref{sec:plain-transformers-as-link-predictors}), which in Table~\ref{tab:expressiveness-comparison} is captured by $h(u,v,\mathbf{X}^0)=\tilde{\mathbf{X}}$ with $\tilde{\mathbf{x}}_i=\mathbf{r}_i$, where $\mathbf{r}_i$ is a random vector induced by the input projection matrix. The final link representation is formed by extracting and concatenating the endpoint-specific components, which can be written as an $\mathrm{AGG}=\sum$ over a gated construction
$\psi_{\mathrm{end}}(z_i,u,v) = [\mathbbm{1}(i=u)\,z_i \ \| \ \mathbbm{1}(i=v)\,z_i]$.
This choice of $\psi$ is a convenient readout that isolates the endpoint representations. Next, we restate Theorem~\ref{thm:expressivity-of-PENCIL} and provide its proof.

\textbf{Theorem~\ref{thm:expressivity-of-PENCIL}.} \textit{
    Under the \(k_\phi\text{-}k_\rho\text{-}m\) framework, for suitable parameter settings, PENCIL with LRP is not less expressive than SEAL under the same sampling constraint.}
\begin{proof}
    We first note that both models are \emph{local} around the queried edge: for a fixed integer \(m\), the prediction for \((u,v)\) is a function only of the \(m\)-hop enclosing neighborhood \(N^m(u,v)\). Hence, the two models share the same locality parameter \(m\) in the \(k_\phi\text{-}k_\rho\text{-}m\) framework.
    
    Next, consider PENCIL under a parameter setting with \(\mathbf{T}_k=\mathrm{Id}\) for all \(k\) and an input projection such that
    \(\mathbf{h}_v^{(0)}=\mathbbm{1}(v=v_i)\mathbf{e}_i\) for learnable vectors \(\mathbf{e}_i\in\mathbb{R}^d\). Under this setting, PENCIL reduces to a 1-WL MPNN operating on the sampled subgraph (see Eq.~\ref{eq:H-k}). Define the endpoint-constrained permutation set over the sampled subgraph as
    \[
    \Gamma_{u,v} \;:=\; \{\pi\in S_{|N^m(u,v)|}:\ \pi(u)=0,\ \pi(v)=1\},
    \]
    where \(S_N\) is the symmetric group on \(N\) elements. With LRP, we aggregate the MPNN outputs over the \(|\Gamma_{u,v}|\) endpoint-preserving relabelings; by \cite{zhou_relational_2023}, this yields \(1\text{-}|N^m(u,v)|\text{-WL}\) power, i.e., the same \(k_\rho\) as SEAL on \(N^m(u,v)\) \cite{lachi2025bridgingtheorypracticelink}. Therefore, since two models have the same \(m\) and \(k_\rho\), the theorem follows according to Theorem~3.2 in \cite{lachi2025bridgingtheorypracticelink}.
\end{proof}

Under the expressivity ordering in \cite{lachi2025bridgingtheorypracticelink}, the above \(1\text{-}|N^m(u,v)|\text{-WL}\) guarantee is strictly stronger than the expressivity attributed to several heuristics-based predictors, including ELPH \cite{chamberlain2023graph}, NCN \cite{wang2024neural}, and Neo-GNN \cite{yun_neo-gnns_2021}. Importantly, this result should \emph{not} be interpreted as an upper bound on PENCIL: it only establishes that PENCIL can \emph{degenerate} to \(1\text{-}|N^m(u,v)|\text{-WL}\) expressivity under a particular parameter setting. Characterizing the expressive power of the full attention-enabled PENCIL remains an interesting direction for future work.

Complementarily, \cite{liang_can_2025} argues that SEAL's Double Radius Node Labeling (DRNL) may not reliably expose common-neighbor signals in practice, since subsequent message passing and aggregation can wash out this information. In contrast, PENCIL's input projection injects token-specific vectors that make such neighborhood identity information directly accessible, as discussed and empirically validated in previous sections. Finally, we do not equip PENCIL with LRP in our experiments; empirically, we find that using a single labeled subgraph already suffices, as shown by experiments covered in Section~\ref{sec:experimental-results} and Appendix~\ref{app:multiple_labeled_subgraphs}.

\begin{figure}[t]
    \centering
    \includegraphics[width=\linewidth]{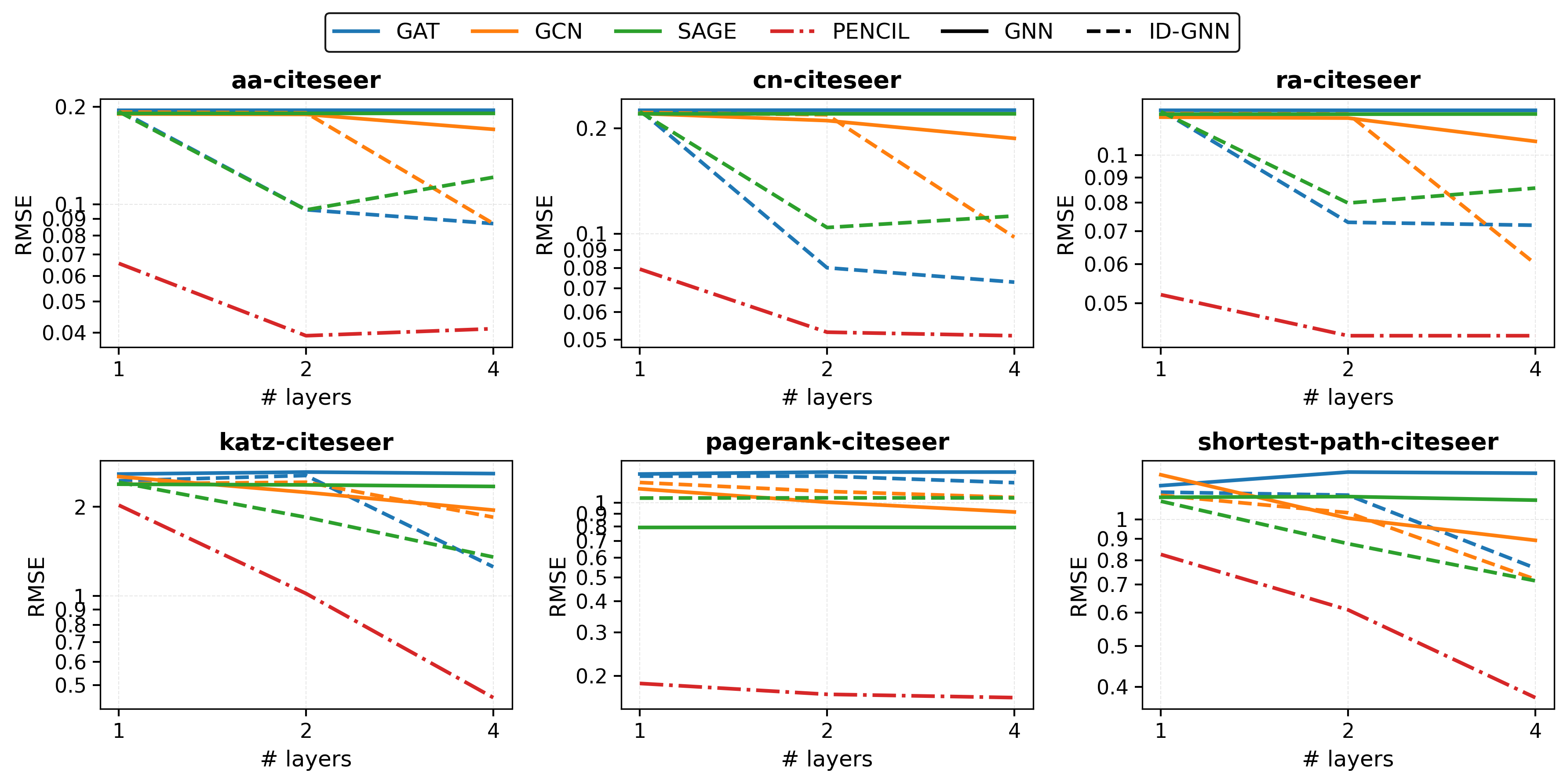}
    \caption{RMSE of PENCIL and other GNNs for estimating pairwise heuristics on the \texttt{citeseer} dataset.}
    \label{fig:citeseer_heuristic_rmse}
\end{figure}

\section{Experiments}
\label{app:experiments}

This section presents supplementary information regarding the experiments in Section~\ref{sec:experimental-results}, as well as further investigations into input projection matrix initialization and computational complexity. 

Regarding practical implementation, we employ the ShaDowKHop sampler \cite{zeng2021decoupling}, utilizing the implementation provided by GraphGPT \cite{zhao_graphgpt_2025}. This sampler requires two hyperparameters: depth $d$ and the number of neighbors per node $n$. Where necessary to distinguish configurations, we append the suffix $(d, n)$ to dataset names to denote the specific parameters used. For the Transformer layer, we employ the BERT implementation \cite{devlin-etal-2019-bert} provided by Hugging Face \cite{wolf-etal-2020-transformers}. All codes are written in Pytorch \cite{paszke2019pytorch} and Pytorch Geometric (PyG) \cite{fey2019pyg,fey2025pyg}.

\subsection{Pairwise Heuristic Estimation}
\label{app:pairwise-heuristic-estimation}

In this section, we describe the pairwise heuristics considered in our experiments, along with the normalization procedure applied to the raw heuristic scores so they can be used as regression targets, since many exhibit heavy-tailed distributions. A unified three-stage preprocessing protocol is used across all heuristics: (i) logarithmic transformation to reduce skewness; (ii) range mapping (normalization or standardization) to align scales; and (iii) robust clipping to prevent rare outliers from dominating the loss gradient. We mirror the link prediction pipeline for the heuristic regression task, including the sampling of one negative edge per positive training instance. However, we replace the binary classification target with a normalized heuristic score for regression. All experiments utilize the standard link prediction splits defined in \cite{li_evaluating_2023}. We compare PENCIL against standard GNN baselines---GCN, GAT, and GraphSAGE \cite{kipf2017semisupervised,veličković2018graph,hamilton2017inductive}---as well as identity-aware variants that augment message passing with explicit markers for the query nodes (e.g., source/target indicators) \cite{you2021identity,zhu_neural_bellman-ford_2021,zhang_labeling_2022}. These approaches share the common principle of injecting node identity information to facilitate reasoning about a specific pair, and have been shown to scale to large relational settings \cite{robinson2024relbench,yuan_contextgnn_2025}. Since we do not use node features, both GNNs and PENCIL rely solely on graph structure: GNNs take all-ones node inputs and learn representations through message passing, whereas PENCIL uses the structural encoding from Section~\ref{sec:preliminaries} and learns representations via self-attention over the encoded sequence. The results are shown in Figure~\ref{fig:cora_heuristic_rmse} and \ref{fig:citeseer_heuristic_rmse}. The hyperparameter configurations for GNNs and PENCIL in the pairwise heuristic estimation experiments are summarized in Table~\ref{tab:hyperparameters}. The hidden size of PENCIL was constrained to yield comparable parameter counts across all models. Models were trained for 200 epochs for CN, AA, RA, and PageRank, while the Katz index and SPD experiments required 700 epochs.

\begin{table}[t]
    \centering
    \caption{Hyperparameter configurations of GNNs and PENCIL for the pairwise heuristic estimation experiments. N/A means the hyperparameter is not applicable.}
    \label{tab:hyperparameters}
    \begin{tabular}{@{}lcc@{}}
    \toprule
    Hyperparameter & GNNs & PENCIL \\
    \midrule
    Sampling Configuration & (2, 20) & (2, 20) \\
    Hidden Size & 512 & 256 \\
    Intermediate Size & N/A & 512 \\
    \# Attention Heads & N/A & 4 \\
    Effective Batch Size & 2048 & 2048 \\
    Learning Rate & 2.0E-04 & 2.0E-04 \\
    Weight Decay & 0.01 & 0.01 \\
    \bottomrule
    \end{tabular}
\end{table}

\paragraph{Local Overlap Heuristics.}
Three local overlap scores based on the shared one-hop neighborhood of a candidate pair \((u,v)\) are considered:
\begin{align}
\mathrm{CN}(u,v) &:= \big|\mathcal{N}(u)\cap \mathcal{N}(v)\big|, \\
\mathrm{AA}(u,v) &:= \sum_{w\in \mathcal{N}(u)\cap \mathcal{N}(v)} \frac{1}{\log(\deg(w))}, \\
\mathrm{RA}(u,v) &:= \sum_{w\in \mathcal{N}(u)\cap \mathcal{N}(v)} \frac{1}{\deg(w)}.
\end{align}
These heuristics quantify structural overlap between \(u\) and \(v\): Common Neighbors (CN) \cite{newman_clustering_2001} counts the number of common neighbors, whereas Adamic--Adar (AA) \cite{adamic_friends_2003} and Resource Allocation (RA) \cite{zhou_predicting_2009} discount common neighbors \(w\) with large degree, thereby emphasizing rarer shared neighbors. Here, \(\mathcal{N}(u)\) denotes the set of one-hop neighbors of node \(u\), and \(\deg(w):=|\mathcal{N}(w)|\) denotes the degree of node \(w\). The summations range over the shared neighbors \(w\in \mathcal{N}(u)\cap \mathcal{N}(v)\).
Since these scores are non-negative and typically right-skewed, a log transformation \(s \mapsto \log(1+s)\) is applied, where \(s\in\{\mathrm{CN}(u,v),\mathrm{AA}(u,v),\mathrm{RA}(u,v)\}\). This is followed by min--max scaling that maps \([0, P_{99.99}]\) to \([0,1]\), where \(P_{99.99}\) denotes the \(99.99^{\text{th}}\) percentile of the log-transformed scores. Values exceeding this threshold are clipped to \(1\).

\paragraph{Katz Index.}
The Katz score for a pair \((u,v)\) is defined as a damped sum over walks of all lengths \cite{katz_new_1953}:
\begin{equation}
\mathrm{Katz}_{\beta}(u,v) \;:=\; \sum_{\ell=1}^{\infty} \beta^{\ell}\,(\mathbf{A}^{\ell})_{uv}.
\end{equation}
This global similarity measure aggregates multi-hop connectivity between \(u\) and \(v\) while exponentially down-weighting longer walks. Here, \(\mathbf{A}\in\{0,1\}^{|V|\times |V|}\) is the adjacency matrix, \((\mathbf{A}^{\ell})_{uv}\) is the number of length-\(\ell\) walks from \(u\) to \(v\), and \(\beta\in(0,1)\) is a damping factor (we use \(\beta=0.005\)) that emphasizes shorter walks. Katz scores are transformed as \(\log(k+\epsilon)\), where \(k=\mathrm{Katz}_{\beta}(u,v)\) and \(\epsilon=10^{-20}\), and then standardized to zero mean and unit variance. For robustness, the standardized scores are clipped in the upper tail at the \(99^{\text{th}}\) percentile.

\paragraph{Shortest Path Distance.}
The shortest-path distance between nodes \(u\) and \(v\) is defined as
\begin{equation}
\mathrm{SPD}(u,v) \;:=\; d(u,v) \;=\; \min_{\pi:u\rightsquigarrow v} |\pi|,
\end{equation}
where \(\pi\) ranges over all paths from \(u\) to \(v\) and \(|\pi|\) denotes the number of edges along the path. This heuristic measures how many steps are required to connect \(u\) and \(v\) via the shortest route; if \(u\) and \(v\) are disconnected, then \(d(u,v)=\infty\). Distances are normalized via the log transform \(d \mapsto \log(1+d)\). For disconnected pairs, a distinct penalty value \(\log(1+1.5\,d_{\max})\) is assigned, where \(d_{\max}\) is the maximum diameter observed among the connected components of the graph. No further percentile clipping is applied to SPD.

\paragraph{PageRank.}
Let \(\mathbf{p}\in\mathbb{R}^{|\mathcal{V}|}\), where \(\mathcal{V}\) is the node set, denote the PageRank stationary distribution \cite{brin_anatomy_1998}, defined as the solution to
\begin{equation}
\mathbf{p} \;=\; \alpha\,\mathbf{P}^{\top}\mathbf{p} \;+\; (1-\alpha)\,\mathbf{v},
\end{equation}
where \(\mathbf{P}\) is a random-walk transition matrix (e.g., \(\mathbf{P}=\mathbf{D}^{-1}\mathbf{A}\) with degree matrix \(\mathbf{D}\)), \(\alpha\in(0,1)\) is the damping factor, and \(\mathbf{v}\) is the teleportation distribution. PageRank quantifies node importance under this random-surfer model, and \(p_u\) denotes the stationary probability assigned to node \(u\). In our experiments, we use a uniform teleportation distribution and a damping factor of \(\alpha=0.85\). A symmetric pairwise score for \((u,v)\) is then given by their product
\(
\mathrm{PR}(u,v) \;:=\; p_u\,p_v.
\)
These pairwise scores can also be heavy-tailed, so the normalization protocol is applied similar to that for the Katz index: log-transformation \(\log(\mathrm{PR}(u,v)+\epsilon)\) with \(\epsilon=10^{-20}\), followed by z-score standardization, with upper-tail clipping at the \(99^{\text{th}}\) percentile.

\begin{table}[tbp]
    \centering
    \caption{Hyperparameter configurations of PENCIL for the original benchmark settings.}
    \label{tab:original_configs}
    \begin{tabular}{@{}lcccccc@{}}
    \toprule
    Hyperparameter         & \texttt{cora}    & \texttt{citeseer} & \texttt{pubmed}  & \texttt{ogbl-collab} & \texttt{ogbl-ppa} & \texttt{ogbl-citation2} \\ \midrule
    Sampling Configuration & (2, 20) & (2, 20)  & (2, 16) & (1, 75)     & (1, 150) & (1, 40)        \\
    Hidden Size            & 512     & 1024     & 512     & 512         & 512      & 512            \\
    Intermediate Size      & 2048    & 2048     & 2048    & 2048        & 2048     & 2048           \\
    \# Layers              & 8       & 2        & 8       & 8           & 4        & 3              \\
    \# Attention Heads     & 8       & 8        & 8       & 8           & 8        & 8              \\
    Effective Batch Size   & 2048    & 2048     & 4096    & 8192        & 16384    & 65536          \\
    Learning Rate          & 1.0E-04 & 1.0E-04  & 1.0E-04 & 1.0E-04     & 1.0E-04  & 1.0E-04        \\
    Weight Decay           & 0.01    & 0.01     & 0.01    & 0.01        & 0.01     & 0.01           \\
    \# Epochs              & 150     & 150      & 80      & 10          & 15       & 0.5            \\ \bottomrule
    \end{tabular}
\end{table}

\begin{table}[tbp]
    \centering
    \caption{Hyperparameter configurations of PENCIL for the HeaRT benchmark settings.}
    \label{tab:heart-config}
    \resizebox{\textwidth}{!}{%
    \begin{tabular}{@{}lccccccc@{}}
    \toprule
    Hyperparameter         & \texttt{cora}    & \texttt{citeseer} & \texttt{pubmed}  & \texttt{ogbl-collab} & \texttt{ogbl-ppa} & \texttt{ogbl-citation2} & \texttt{ogbl-ddi} \\ \midrule
    Sampling Configuration & (2, 20) & (2, 20)  & (2, 20) & (1, 75)     & (1, 150) & (1, 40)        & (1, 350) \\
    Hidden Size            & 512     & 1024     & 512     & 512         & 512      & 512            & 512      \\
    Intermediate Size      & 2048    & 2048     & 2048    & 2048        & 2048     & 2048           & 2048     \\
    \# Layers              & 4       & 2        & 4       & 8           & 4        & 3              & 8        \\
    \# Attention Heads     & 8       & 8        & 8       & 8           & 8        & 8              & 8        \\
    Effective Batch Size   & 2048    & 2048     & 2048    & 8192        & 16384    & 65536          & 4096     \\
    Learning Rate          & 1.0E-04 & 1.0E-04  & 1.0E-04 & 1.0E-04     & 1.0E-04  & 1.0E-04        & 1.00E-04 \\
    Weight Decay           & 0.01    & 0.01     & 0.01    & 0.01        & 0.01     & 0.01           & 0.01     \\
    \# Epochs              & 150     & 300      & 250     & 10          & 15       & 0.5            & 8        \\ \bottomrule
    \end{tabular}%
    }
\end{table}

\begin{table*}[t]
    \centering
    \caption{Results on the original benchmark datasets. 
    N/A means results are not available. Colored results indicate the \textcolor{firstbest}{\(1^{\text{st}}\)}, \textcolor{secondbest}{\(2^{\text{nd}}\)}, and \textcolor{thirdbest}{\(3^{\text{rd}}\)}-best performance in the corresponding metric. Category markers: 
    \raisebox{0.75ex}{\catdot{catgreen}} Pure Heuristic, 
    \raisebox{0.75ex}{\catdot{catblue}} Heuristic-informed, \raisebox{0.75ex}{\catdot{catorange}} ID-based, and \, \raisebox{0.75ex}{\catdot{catred}} Heuristic-agnostic + ID-free.
    }

    \label{tab:complete-original-setting}
    \small
    \resizebox{\textwidth}{!}{%
    \begin{tabular}{@{}clcccccc@{}}
    \toprule
    {\multirow{2}{*}{Type}} & \multirow{2}{*}{Model} & \texttt{cora}                                & \texttt{citeseer}                            & \texttt{pubmed}                              & \texttt{ogbl-collab}                         & \texttt{ogbl-ppa}                            & \texttt{ogbl-citation2}                      \\ \cmidrule(l){3-8} 
             &                               & MRR                                 & MRR                                 & MRR                                 & H@50                                & H@100                               & MRR                                 \\ \midrule
    \multirow{3}{*}{\raisebox{0.75ex}{\catdot{catgreen}}} 
             & CN                         & 20.99 ± 0.00                        & 28.34 ± 0.00                        & 14.02 ± 0.00                        & 61.37 ± 0.00                        & 27.65 ± 0.00                        & 74.30 ± 0.00                        \\
             & AA                         & 31.87 ± 0.00                        & 29.37 ± 0.00                        & 16.66 ± 0.00                        & 64.17 ± 0.00                        & 32.45 ± 0.00                        & 75.96 ± 0.00                        \\
             & RA                         & 30.79 ± 0.00                        & 27.61 ± 0.00                        & 15.63 ± 0.00                        & 63.81 ± 0.00                        & 49.33 ± 0.00                        & 76.04 ± 0.00                        \\ \midrule
    \multirow{6}{*}{\raisebox{0.75ex}{\catdot{catblue}}} 
             & SEAL                       & 26.69 ± 5.89                        & 39.36 ± 4.99                        & 38.06 ± 5.18                        & 63.37 ± 0.69                        & 48.80 ± 5.61                        & 86.93 ± 0.43                        \\
             & Neo-GNN                    & 22.65 ± 2.60                        & 53.97 ± 5.88                        & 31.45 ± 3.17                        & 66.13 ± 0.61                        & 48.45 ± 1.01                        & 83.54 ± 0.32                        \\
             & BUDDY                      & 26.40 ± 4.40                        & {\color{thirdbest} 59.48 ± 8.96}    & 23.98 ± 5.11                        & 64.59 ± 0.46                        & 47.33 ± 1.96                        & 87.86 ± 0.18                        \\
             & NCN                        & 32.93 ± 3.80                        & 54.97 ± 6.03                        & 35.65 ± 4.60                        & 63.86 ± 0.51                        & 62.63 ± 1.15                        & 89.27 ± 0.05                        \\
             & NCNC                       & 29.01 ± 3.83                        & {\color{secondbest} 64.03 ± 3.67}   & 25.70 ± 4.48                        & 65.97 ± 1.03                        & 62.61 ± 0.76                        & {\color{secondbest} 89.82 ± 0.43}   \\
             & LPFormer                   & {\color{secondbest} 39.42 ± 5.78}   & {\color{firstbest} 65.42 ± 4.65}    & {\color{secondbest} 40.17 ± 1.92}   & {\color{firstbest} 68.14 ± 0.51}    & 63.32 ± 0.63                        & {\color{thirdbest} 89.81 ± 0.13}    \\ \midrule
    \multirow{2}{*}{\raisebox{0.75ex}{\catdot{catorange}}} 
             & MPLP+                      & N/A                                 & N/A                                 & N/A                                 & {\color{secondbest} 66.99 ± 0.40}   & 65.24 ± 1.50                        & {\color{firstbest} 90.72 ± 0.12}    \\
             & Refined-GAE                & N/A                                 & N/A                                 & N/A                                 & 66.11 ± 0.35                        & {\color{secondbest} 78.41 ± 0.83}   & 88.74 ± 0.06                        \\ \midrule
    \multirow{5}{*}{\raisebox{0.75ex}{\catdot{catred}}}
             & GCN                        & 32.50 ± 6.87                        & 50.01 ± 6.04                        & 19.94 ± 4.24                        & 54.96 ± 3.18                        & 29.57 ± 2.90                        & 84.85 ± 0.07                        \\
             & SAGE                       & {\color{thirdbest} 37.83 ± 7.75}    & 47.84 ± 6.39                        & 22.74 ± 5.47                        & 59.44 ± 1.37                        & 41.02 ± 1.94                        & 83.06 ± 0.09                        \\
             & NBFNet                     & 37.69 ± 3.97                        & 38.17 ± 3.06                        & {\color{firstbest} 44.73 ± 2.12}    & N/A                                 & N/A                                 & N/A                                 \\
             & PENCIL w/o Features (Ours) & {\color{firstbest} 42.23 ± 1.98}    & 47.51 ± 3.09                        & 38.28 ± 2.59                        & {\color{thirdbest} 66.88 ± 0.34}    & {\color{thirdbest} 73.85 ± 0.40}    & 86.74 ± 0.26                        \\
             & PENCIL (Ours)             & 32.12 ± 3.04                        & 43.74 ± 5.47                        & {\color{thirdbest} 38.34 ± 5.14}    & 66.56 ± 0.19                        & {\color{firstbest} 79.54 ± 0.07}    & 86.86 ± 0.20                        \\ \bottomrule
    \end{tabular}%
    }
\end{table*}

\begin{table*}[t]
    \caption{Results under HeaRT. N/A means results are not available. Colored results indicate the \textcolor{firstbest}{\(1^{\text{st}}\)}, \textcolor{secondbest}{\(2^{\text{nd}}\)}, and \textcolor{thirdbest}{\(3^{\text{rd}}\)}-best performance in MRR. Category markers: 
    \raisebox{0.75ex}{\catdot{catgreen}} Pure Heuristic, 
    \raisebox{0.75ex}{\catdot{catblue}} Heuristic-informed, \raisebox{0.75ex}{\catdot{catorange}} ID-based, and \, \raisebox{0.75ex}{\catdot{catred}} Heuristic-agnostic + ID-free.}
    \label{tab:complete-heart}
    \resizebox{\textwidth}{!}{%
    \begin{tabular}{@{}clccccccc@{}}
    \toprule
    Type & Model              & \texttt{cora}                                & \texttt{citeseer}                            & \texttt{pubmed}                             & \texttt{ogbl-collab}                        & \texttt{ogbl-ppa}                            & \texttt{ogbl-ddi}                            & \texttt{ogbl-citation2}                      \\ \midrule
    \multirow{3}{*}{\raisebox{0.75ex}{\catdot{catgreen}}} 
             & CN                  & 9.78 ± 0.00                         & 8.42 ± 0.00                         & 2.28 ± 0.00                        & 4.20 ± 0.00                        & 25.70 ± 0.00                        & 6.71 ± 0.00                         & 17.11 ± 0.00                        \\
             & AA                  & 11.91 ± 0.00                        & 10.82 ± 0.00                        & 2.63 ± 0.00                        & 5.07 ± 0.00                        & 26.85 ± 0.00                        & 6.97 ± 0.00                         & 17.83 ± 0.00                        \\
             & RA                  & 11.81 ± 0.00                        & 10.84 ± 0.00                        & 2.47 ± 0.00                        & {6.29 ± 0.00} & 28.34 ± 0.00                        & 8.70 ± 0.00                         & 17.79 ± 0.00                        \\ \midrule
    \multirow{6}{*}{\raisebox{0.75ex}{\catdot{catblue}}} 
             & SEAL                & 10.67 ± 3.46                        & 13.16 ± 1.66                        & 5.88 ± 0.53                        & {\color{thirdbest} 6.43 ± 0.32} & 29.71 ± 0.71                        & 9.99 ± 0.90                         & 20.60 ± 1.28                        \\
             & Neo-GNN             & 13.95 ± 0.39                        & 17.34 ± 0.84                        & 7.74 ± 0.30                        & 5.23 ± 0.90                        & 21.68 ± 1.14                        & 10.86 ± 2.16                        & 16.12 ± 0.25                        \\
             & BUDDY               & 13.71 ± 0.59                        & 22.84 ± 0.36                        & 7.56 ± 0.18                        & 5.67 ± 0.36                        & 27.70 ± 0.33                        & 12.43 ± 0.50                        & 19.17 ± 0.20                        \\
             & NCN                 & 14.66 ± 0.95                        & {\color[HTML]{EA4335} 28.65 ± 1.21} & 5.84 ± 0.22                        & 5.09 ± 0.38                        & 35.06 ± 0.26                        & 12.86 ± 0.78                        & 23.35 ± 0.28                        \\
             & NCNC                & {\color[HTML]{4285F4} 14.98 ± 1.00} & {\color[HTML]{4285F4} 24.10 ± 0.65} & {8.58 ± 0.59} & 4.73 ± 0.86                        & 33.52 ± 0.26                        & N/A                                 & 19.61 ± 0.54                        \\
             & LPFormer            & {\color[HTML]{EA4335} 16.80 ± 0.52} & {\color[HTML]{34A853} 26.34 ± 0.67} & {\color[HTML]{EA4335} 9.99 ± 0.52} & {\color[HTML]{EA4335} 7.62 ± 0.26} & 40.25 ± 0.24 & {\color[HTML]{4285F4} 13.20 ± 0.54} & {\color[HTML]{EA4335} 24.70 ± 0.55} \\ \midrule
    \multirow{1}{*}{\raisebox{0.75ex}{\catdot{catorange}}} 
             & MPLP+               & N/A                                 & N/A                                 & N/A                                & {\color{secondbest} 6.79}                               & \color{thirdbest}{41.40}                               & N/A                                 & 23.11                               \\ \midrule
    \multirow{5}{*}{\raisebox{0.75ex}{\catdot{catred}}} 
             & GCN                 & {\color[HTML]{34A853} 16.61 ± 0.30} & 21.09 ± 0.88                        & 7.13 ± 0.27                        & 6.09 ± 0.38                        & 26.94 ± 0.48                        & {\color[HTML]{34A853} 13.46 ± 0.34} & 19.98 ± 0.35                        \\
             & SAGE                & 14.74 ± 0.69                        & 21.09 ± 1.15                        & {\color[HTML]{34A853} 9.40 ± 0.70} & 5.53 ± 0.50                        & 27.27 ± 0.30                        & 12.60 ± 0.72                        & 22.05 ± 0.12                        \\
             & NBFNet              & 13.56 ± 0.58                        & 14.29 ± 0.80                        & N/A                                & N/A                                & N/A                                 & N/A                                 & N/A                                 \\
             & PENCIL w/o Features (Ours) & 14.63 ± 0.52                        & 16.50 ± 0.31                        & 7.05 ± 0.17                        & 5.25 ± 0.01                        & {\color[HTML]{34A853} 44.57 ± 0.15} & {\color[HTML]{EA4335} 14.07 ± 0.24} & {\color[HTML]{4285F4} 23.36 ± 0.07} \\
             & PENCIL (Ours)              & 13.13 ± 0.53                        & 16.80 ± 1.43                        & {\color{thirdbest}8.88 ± 0.49}                        & 5.40 ± 0.05                        & {\color[HTML]{EA4335} 45.43 ± 0.31} & N/A                                 & {\color[HTML]{34A853} 23.43 ± 0.14} \\ \bottomrule
    \end{tabular}%
    }
\end{table*}

\subsection{Real-world Experiments} \label{app:real-world-experiments}

We provide additional training details for the results presented in Table~\ref{tab:original-setting} and Table~\ref{tab:heart}. For these experiments, we employed a minimal hyperparameter tuning strategy, focusing primarily on model depth and sampling configurations to ensure adequate capture of local neighborhood information. Additionally, we freeze the input projection matrix $\mathbf{W}_0$ across all runs, as we observed no significant impact on performance when it was learnable. Batch sizes were scaled according to hardware memory constraints. As detailed in Table~\ref{tab:original_configs} and Table~\ref{tab:heart-config}, configurations are largely consistent across datasets; the notable exception is \texttt{citeseer}, which utilizes a larger hidden dimension and a reduced depth of two layers. For a fair comparison with the original study~\cite{li_evaluating_2023}, we use a single negative link for each positive link for all datasets. For \texttt{ogbl-ppa}, the HeaRT evaluation protocol \cite{li_evaluating_2023} is computationally prohibitive due to the customized negative links per positive edge, as discussed in Section~\ref{sec:experimental-results}. Consequently, only for this dataset, we utilized the optimal checkpoint identified in the original benchmark setting for the final HeaRT evaluation, leveraging the fact that training splits are identical across settings. All experiments were parallelized across 4/8 GPUs, utilizing NVIDIA A5000/A6000 cards for Planetoid benchmarks \cite{yang_revisiting_2016} and A100 cards for OGB benchmarks \cite{hu2020ogb}. Table~\ref{tab:complete-original-setting} and Table~\ref{tab:complete-heart} supplement the experimental results provided in the main text, where CN, AA, and RA are added for completeness.

\subsection{Ablation Study on Multiplicative Residual Connection} \label{app:mr-ablation-study}

\begin{table*}[t]
    \caption{Ablation study on Multiplicative Residual (MR).}
    \label{tab:multplicative-residual}
    \resizebox{\textwidth}{!}{%
    \begin{tabular}{@{}cccccccccc@{}}
    \toprule
    \multirow{2}{*}{Model}   & \multicolumn{3}{c}{\texttt{cora (2, 20)}}           & \multicolumn{3}{c}{\texttt{pubmed (2, 16)}}         & \multicolumn{3}{c}{\texttt{ogbl-collab (1, 75)}}    \\ \cmidrule(l){2-10} 
                             & MRR          & H@3          & H@20         & MRR          & H@3          & H@20         & H@20         & H@50         & H@100        \\ \midrule
    PENCIL \textbackslash MR & 34.64 ± 1.99 & 35.98 ± 3.97 & 58.67 ± 2.57 & 21.49 ± 4.30 & 21.81 ± 2.81 & 31.81 ± 2.32 & 49.52 ± 3.01 & 53.43 ± 1.92 & 56.04 ± 1.24 \\
    PENCIL                   & 42.23 ± 1.98 & 43.34 ± 1.03 & 67.32 ± 3.56 & 38.28 ± 2.59 & 43.61 ± 1.48 & 57.49 ± 0.94 & 53.47 ± 0.85 & 66.88 ± 0.34 & 69.75 ± 0.45 \\ \midrule
    Performance Gain         & +7.59         & +7.36         & +8.65         & +16.79        & +21.80        & +25.68        & +3.95         & +13.45        & +13.71        \\ \bottomrule
    \end{tabular}%
    }
\end{table*}

We perform an ablation study to isolate the contribution of the \textbf{multiplicative residual} defined in Section~\ref{sec:plain-transformers-as-link-predictors}. Contemporary approaches often attempt to inject graph inductive biases into plain Transformers solely through PEs/SEs \cite{dwivedi2021generalization, sanford_understanding_2024, yehudai_depth-width_2025, ma_plain_2025}. However, Table~\ref{tab:multplicative-residual} reveals that relying on input encodings alone is sub-optimal; the inclusion of the multiplicative residual provides substantial performance improvements across all datasets, indicating that an explicit structural prior is necessary for every layer.

\subsection{Effects of Input Projection Matrix Initialization} \label{app:effects-of-input-projection-matrix-initialization}

\begin{figure}[t]
    \centering
    \includegraphics[width=\linewidth]{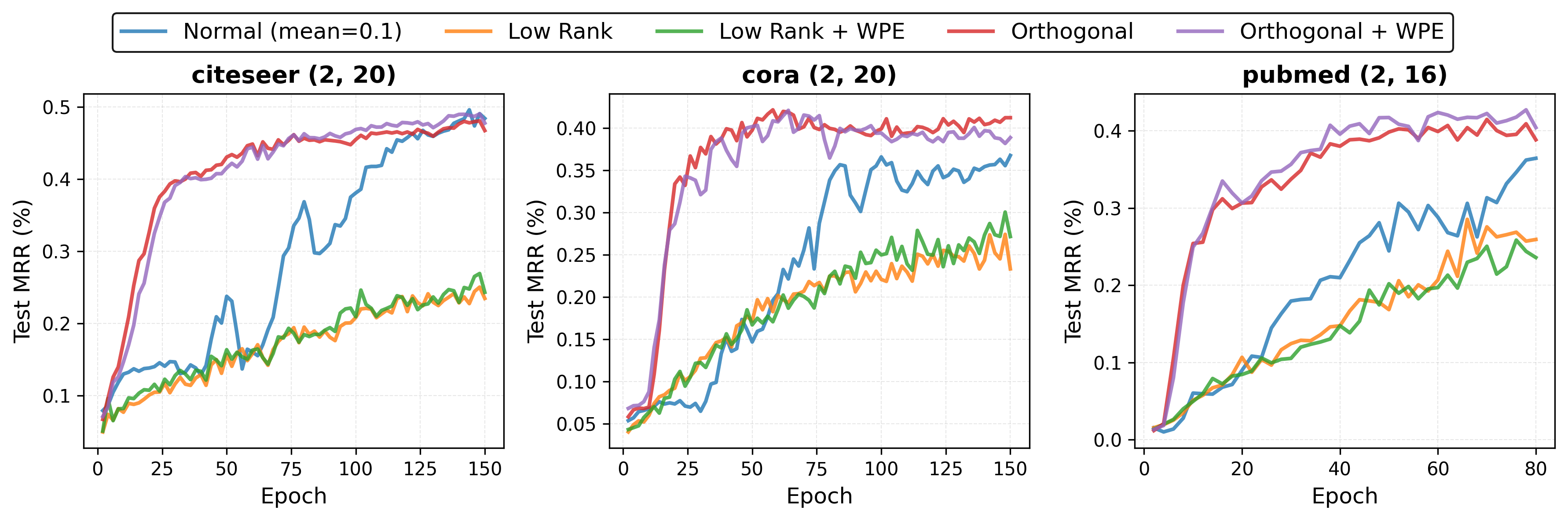}
    \caption{MRR averaged over 5 runs on the \texttt{citeseer}, \texttt{cora}, and \texttt{pubmed} datasets using different input embeddings' initializations.}
    \label{fig:averaged_test_mrr}
\end{figure}

In this section, we study how the initialization of the input projection matrix $\mathbf{W}_0$ influences optimization and performance. We compare (i) \textbf{orthogonal} initialization (our default), (ii) a \textbf{mean-shifted Gaussian} initialization with mean $0.1$ to probe sensitivity to non-zero-mean embeddings, and (iii) a \textbf{low-rank} initialization with rank $5$, which constrains the embedding space and limits the number of mutually orthogonal directions. Following the same implementation as discussed earlier, we freeze $\mathbf{W}_0$. This also isolates the initialization scheme as the sole variable affecting optimization. All experiments use the same hyperparameters as Table~\ref{tab:original_configs}. Figure~\ref{fig:averaged_test_mrr} reports test MRR averaged over 5 runs on \texttt{citeseer}, \texttt{cora}, and \texttt{pubmed}. Orthogonal initialization yields the fastest and most stable convergence and achieves the strongest final MRR across datasets, whereas low-rank initialization substantially degrades performance and converges to markedly lower MRR. The mean-shifted Gaussian initialization exhibits noticeably higher variance and slower, less stable training. We also evaluate adding learnable word position embeddings (WPE). Across all settings, WPE induces at most minor and dataset-dependent differences and does not consistently improve either convergence or final MRR, suggesting that---under this graph input encoding---sequential positional inductive biases are irrelevant for performance.

\subsection{Effect of Multiple Labeled Subgraphs}
\label{app:multiple_labeled_subgraphs}

We further examine whether using multiple labeled subgraphs per candidate link improves performance. Increasing the number of such subgraphs can substantially increase memory cost, since each additional subgraph requires an additional forward pass. In Table~\ref{tab:multiple_labeled_subgraphs}, we compare PENCIL with a variant that applies DeepSets-style pooling~\cite{zaheer_deep_2017} over three labeled subgraphs per sample following the graph encoding scheme in Section~\ref{sec:graph-encoding-scheme}. We do not observe clear performance gains from using multiple labeled subgraphs. Moreover, as noted by Zhou et al.~\cite{zhou_relational_2023}, selecting multiple labeled subgraphs often requires additional node-selection policies, introducing further computational overhead.

\begin{table}[h]
    \centering
    \caption{Effect of using multiple labeled subgraphs. We compare PENCIL with a variant that applies DeepSets-style pooling over three labeled subgraphs per sample. Higher values are better.}
    \label{tab:multiple_labeled_subgraphs}
    \begin{tabular}{lccc}
        \toprule
        \multirow{2}{*}{Model}
        & \texttt{cora (2, 20)}
        & \texttt{citeseer (2, 20)}
        & \texttt{ogbl-collab (1, 75)} \\
        \cmidrule(lr){2-4}
        & MRR & MRR & H@50 \\
        \midrule
        PENCIL & \(42.23 \pm 1.98\) & \(47.51 \pm 3.09\) & \(66.88 \pm 0.34\) \\
        PENCIL + LRP-3 & 39.18 & 42.17 & 67.05 \\
        \bottomrule
    \end{tabular}
\end{table}

\subsection{Link Prediction on Latent Space Graphs} \label{app:latent-space-graphs}

In this section, we further evaluate PENCIL and several simple baselines on synthetic latent-space graphs~\cite{hoff2002latent}. This controlled setting allows us to separately vary the strength of structural signal and node-feature signal for link prediction.

We generate a latent-space graph with \(N\) nodes and \(K\) clusters as follows. Each node \(i\) is first assigned a cluster label \(y_i\). Conditional on this label, its latent position is sampled from a Gaussian distribution:
\[
\mathbf{z}_i \mid y_i \sim \mathcal{N}(\mathbf{c}_{y_i}, \sigma_{\mathrm{noise}}^2 I),
\]
where the cluster centers \(\mathbf{c}_k \in \mathbb{R}^{d_{\mathrm{latent}}}\) are placed along coordinate axes with magnitude \(s\). Given the latent positions, each undirected edge is independently sampled according to
\[
A_{ij} \sim \mathrm{Bernoulli}
\left(
\sigma\left(\alpha - \beta \|\mathbf{z}_i-\mathbf{z}_j\|_2^2\right)
\right),
\]
where \(\alpha\) controls the overall graph density and \(\beta\) controls the sensitivity of edge formation to latent distance.

Node features are generated as noisy random projections of the latent positions. Let
\(\mathbf{W} \in \mathbb{R}^{d_{\mathrm{latent}} \times p}\) be a random projection matrix with entries \(W_{ab} \sim \mathcal{N}(0,1/p)\). The feature vector of node \(i\) is then given by
\[
\mathbf{x}_i
=
\sqrt{\mu}\,\mathbf{z}_i \mathbf{W}
+
\boldsymbol{\eta}_i,
\qquad
\boldsymbol{\eta}_i \sim \mathcal{N}(\mathbf{0}, I_p/p).
\]
Here, \(\mu\) controls the signal-to-noise ratio of the node features: larger values of \(\mu\) make the observed features more informative about the latent position.

Under this generative process, the cluster label \(y_i\) determines the distribution of the latent position \(\mathbf{z}_i\), and \(\mathbf{z}_i\) subsequently determines both the edge probabilities and the node features. Thus, structural information and node-feature information are both induced by the same latent geometry. The parameter \(s\) controls the separation among cluster centers, thereby modulating the strength of latent structural homophily, while \(\mu\) controls how strongly the latent positions are reflected in the observed node features.

In our experiments, we fix \(N = 5000\), \(K = d_{\mathrm{latent}} = 3\), \(p = 32\), \(\beta = 2\), and \(\sigma_{\mathrm{noise}} = 0.5\), and vary only \(s\) and \(\mu\). For each configuration, we dynamically adjust \(\alpha\) so that the generated graph has average degree approximately \(15\). We use \(90\%\) of observed positive edges for training and hold out the remaining \(10\%\) for testing. For evaluation, we construct a balanced link-prediction dataset by sampling an equal number of positive and negative node pairs. Negative pairs are selected from non-edges whose endpoints are far apart in latent space, yielding a controlled setting in which positive and negative examples are well separated by latent distance. Figures~\ref{fig:latent_position} and~\ref{fig:latent_all_edges} visualize the latent geometry and distance distributions under our experimental settings.

\begin{figure}[t]
    \centering
    \includegraphics[width=\linewidth]{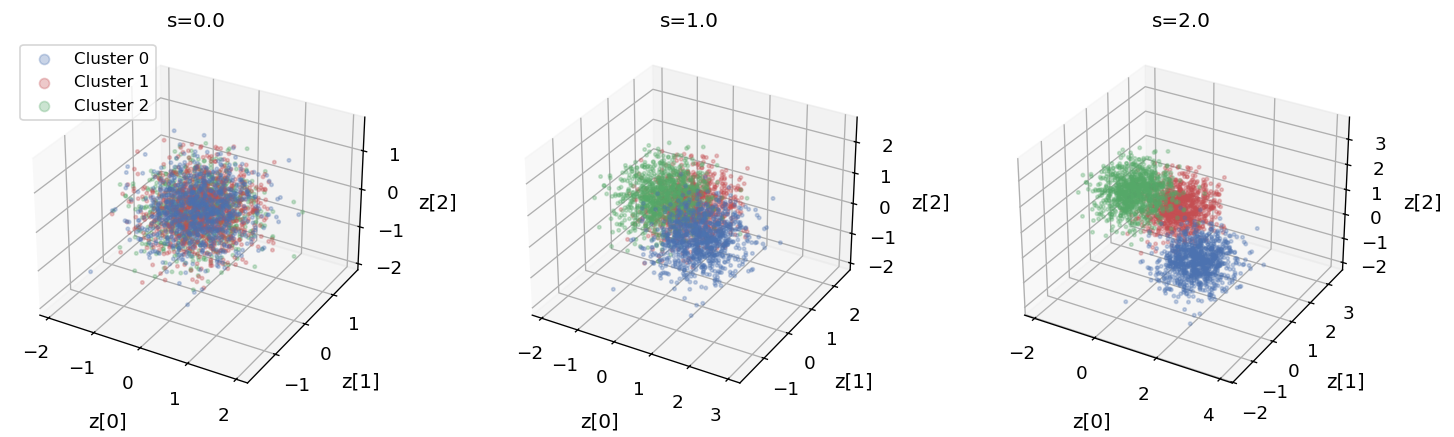}
    \caption{Latent geometry under varying cluster separation \(s\).}
    \label{fig:latent_position}
\end{figure}

\begin{figure}[t]
    \centering
    \includegraphics[width=\linewidth]{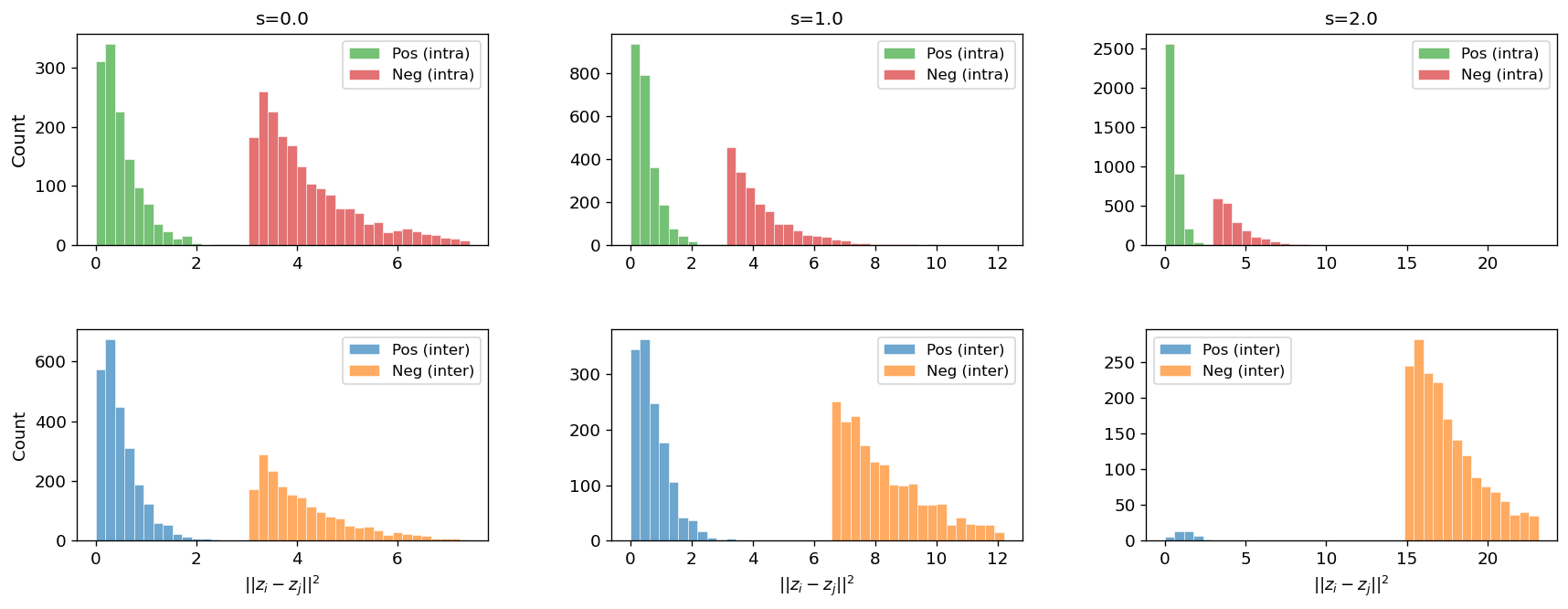}
    \caption{Squared latent-distance distributions for positive and negative intra-/inter-cluster pairs under varying \(s\).}
    \label{fig:latent_all_edges}
\end{figure}

We evaluate two experimental axes: varying the structural signal \(s\) while fixing \(\mu = 1\), and varying the node-feature signal \(\mu\) while fixing \(s = 1\). We additionally consider two edge cases, \((s = 2,\mu = 0)\) and \((s = 0,\mu = 8)\), corresponding to strong structural signal without informative node features and strong node-feature signal without cluster separation, respectively. Larger values of \(s\) and \(\mu\) indicate stronger structural and node-feature signals. All neural models use two layers with hidden dimension fixed to \(256\). The experiments are reported in Table~\ref{tab:latent_vary_s} and~\ref{tab:latent_vary_mu}.

\begin{table}[t]
    \centering
    \caption{Link-prediction performance on latent-space graphs when varying the structural signal \(s\) while fixing \(\mu=1\), with the additional edge case \((s,\mu)=(2,0)\). Reported metric is MRR.}
    \label{tab:latent_vary_s}
    \begin{tabular}{lcccc}
        \toprule
        Model & \((s=0, \mu=1)\) & \((s=1, \mu=1)\) & \((s=2, \mu=1)\) & \((s=2, \mu=0)\) \\
        \midrule
        CN     & 1.58  & 4.78  & 4.48  & 4.48 \\
        AA     & 2.85  & 5.43  & 7.37  & 7.37 \\
        GCN    & 75.48 & 70.08 & 83.68 & \textbf{40.67} \\
        GAT    & 71.44 & \textbf{81.29} & 82.11 & 6.27 \\
        PENCIL & \textbf{77.63} & 74.37 & \textbf{93.59} & 36.01 \\
        \bottomrule
    \end{tabular}
\end{table}

\begin{table}[t]
    \centering
    \caption{Link-prediction performance on latent-space graphs when varying the node-feature signal \(\mu\) while fixing \(s=1\), with the additional edge case \((s,\mu)=(0,8)\). Reported metric is MRR.}
    \label{tab:latent_vary_mu}
    \begin{tabular}{lcccc}
        \toprule
        Model & \((s=1, \mu=0)\) & \((s=1, \mu=2)\) & \((s=1, \mu=8)\) & \((s=0, \mu=8)\) \\
        \midrule
        CN     & 4.78  & 4.78  & 4.78  & 1.58 \\
        AA     & 5.43  & 5.43  & 5.43  & 2.85 \\
        GCN    & 10.09 & 71.04 & 73.85 & 77.05 \\
        GAT    & 5.36  & 83.57 & 83.04 & 76.01 \\
        PENCIL & \textbf{17.21} & \textbf{84.90} & \textbf{92.05} & \textbf{95.34} \\
        \bottomrule
    \end{tabular}
\end{table}

We make several observations. First, pairwise structural heuristics such as CN and AA perform poorly across all settings, indicating that local overlap-based scores alone are insufficient to recover the latent-distance signal. Second, neural models benefit substantially from informative node features, with performance generally improving as \(\mu\) increases. Third, although GAT and PENCIL are both attention-based models, their attention mechanisms use fundamentally different inputs. GAT computes attention weights for feature aggregation over local neighborhoods, so its attention scores are directly tied to node-feature quality; consequently, its performance drops sharply when the features are uninformative \((\mu=0)\). In contrast, PENCIL applies self-attention over a tokenized link-centric subgraph whose tokens explicitly encode the graph structure. This allows PENCIL to attend over pairwise relationships among nodes in the sampled subgraph, rather than relying only on feature-based neighbor weighting. Finally, PENCIL achieves the best performance in most settings, suggesting that it can effectively leverage node-feature cues when they are informative while retaining access to structural cues through its subgraph tokenization.

\subsection{Computational Complexity} \label{app:computational-complexity}

In this section, we analyze the computational complexity of PENCIL and GAT under three perspectives: batching, forward pass, and node labeling overhead. Our goal is to show that PENCIL remains computationally efficient across these dimensions, owing to its NLP-style batching scheme, efficient attention kernels, and labeling-free architecture.

\subsubsection{Batching Overhead}

We benchmark the overhead of constructing mini-batches of \emph{isolated} sampled subgraphs and the associated peak RAM usage, comparing a padding-and-stacking implementation based on native tensor operations against PyG collation routines \cite{fey2025pyg,fey2019pyg}. This setting differs from PyG's neighbor-sampling loaders (e.g., \texttt{NeighborLoader} and \texttt{LinkNeighborLoader}), which typically construct a single computation graph for a batch of seed nodes/edges and thus \emph{merge} overlapping neighborhoods into one (potentially larger) subgraph. In contrast, PENCIL operates on a fixed-budget collection of independent subgraph instances; even if two samples share underlying nodes in the original graph, they are treated as separate instances after sampling. Since this isolated subgraph list representation is not the typical output of PyG's neighbor-sampling loaders, we use PyG's \texttt{from\_data\_list} as a GNN-style batching baseline to collate a list of variable-sized \texttt{Data} objects extracted by the ShaDowKHop sampler \cite{zeng2021decoupling}, and compare it against a Transformer-style batching procedure that pads each subgraph to a fixed budget and stacks the resulting tensors. Figure~\ref{fig:batching_time_and_memory_comparison} reports batching time and peak RAM on \texttt{ogbl-collab} as a function of batch size. The padding-and-stacking approach yields substantially lower batching overhead and consistently low peak RAM, whereas \texttt{from\_data\_list} incurs significantly higher batching time and rapidly increasing memory consumption as batch size grows. We attribute this gap primarily to implementation-level batching overhead: padding-and-stacking produces contiguous dense tensors with a fixed shape, while \texttt{from\_data\_list} must concatenate many variable-sized graph objects (including index offsetting and allocation of batched sparse structures), leading to higher collation cost and memory footprints that scale with the total number of nodes/edges in the batch.

\subsubsection{Training/Inference Time}

We benchmark per-mini-batch training and inference time for PENCIL and GAT on \texttt{pubmed} and \texttt{ogbl-collab} under identical sampling and model configurations. Using the same sampling configuration isolates how runtime depends on graph sparsity: \texttt{ogbl-collab} exhibits higher connectivity than \texttt{pubmed} (e.g., higher average degree; see Table~7 in \cite{li_evaluating_2023}). Table~\ref{tab:training-inference-time} reports wall-clock time (seconds; mean $\pm$ std) when varying depth. On \texttt{pubmed}, GAT is consistently faster than PENCIL for both training and inference. On \texttt{ogbl-collab}, however, the two models have comparable per-batch costs, with the gap substantially reduced. This trend is consistent with the underlying computation: GAT computes attention only on sampled edges, so its cost grows with the number of edges present in sampled subgraphs, whereas PENCIL computes dense attention across all tokens with each sampled subgraph instance. Notably, despite having roughly $7$--$8\times$ more parameters than GAT in these settings and using dense self-attention, PENCIL remains competitive, benefiting from highly optimized Transformer implementations \cite{wolf-etal-2020-transformers,dao2022flashattention}. Finally, linear-attention variants could further reduce the compute and memory footprint of self-attention during both training and inference for large token budgets \cite{choromanski2021rethinking}; we leave this extension to future work.

\begin{figure}[t]
    \centering
    \includegraphics[width=0.9\linewidth]{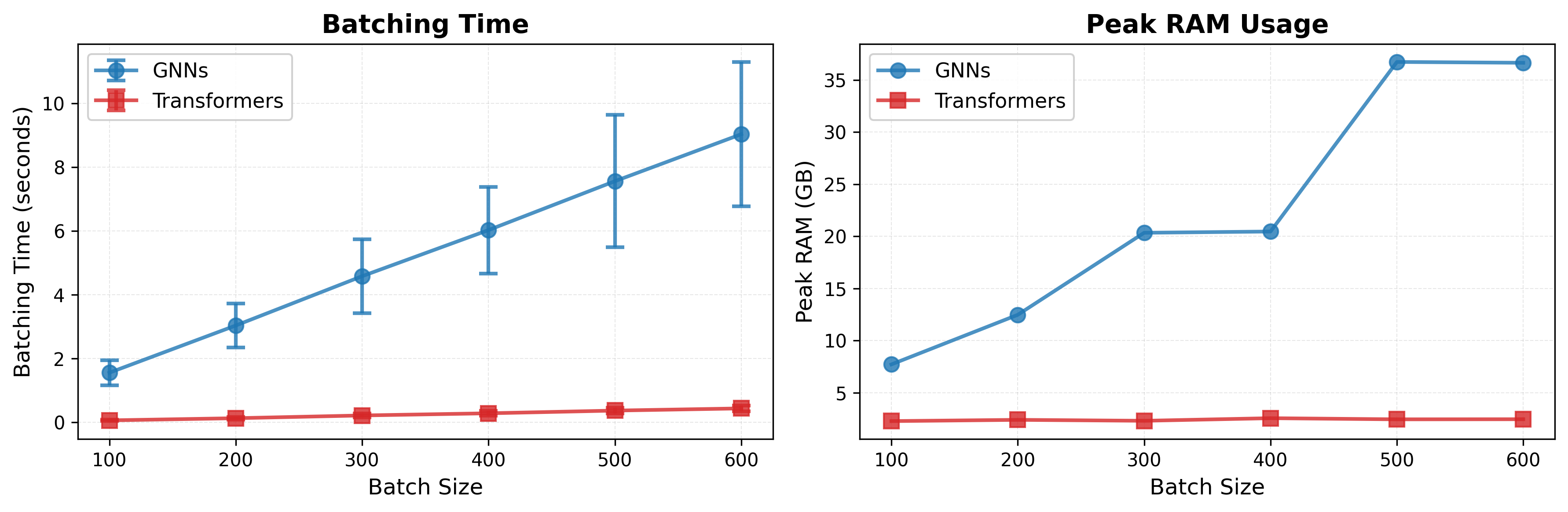}
    \caption{Batching time and memory comparison of Transformers and GNNs on the \texttt{ogbl-collab} dataset.}
    \label{fig:batching_time_and_memory_comparison}
\end{figure}

\begin{table}[tbp]
    \caption{Number of parameters and wall-clock time per mini-batch (seconds; mean $\pm$ std) for training and inference of PENCIL and GAT with 3 and 8 layers on \texttt{ogbl-collab} and \texttt{pubmed} under the $(1, 75)$ sampling configuration. All measurements are collected on the same hardware configuration, using batch size 100, and averaged over 50 batches.}

    \label{tab:training-inference-time}
    \resizebox{\textwidth}{!}{%
    \begin{tabular}{@{}cccccc@{}}
    \toprule
    \multirow{2}{*}{Model} & \multirow{2}{*}{\# Params} & \multicolumn{2}{c}{\texttt{ogbl-collab (1, 75)}}            & \multicolumn{2}{c}{\texttt{pubmed (1, 75)}}                 \\ \cmidrule(l){3-6} 
                           &                            & Training Time Per Batch & Inference Time Per Batch & Training Time Per Batch & Inference Time Per Batch \\ \midrule
    PENCIL-3L                & 10.2M                      & 0.019 ± 0.045           & 0.012 ± 0.039            & 0.016 ± 0.036           & 0.010 ± 0.029            \\
    PENCIL-8L                & 27.3M                      & 0.035 ± 0.049           & 0.020 ± 0.043            & 0.029 ± 0.038           & 0.016 ± 0.031            \\
    GAT-3L                 & 1.32M                      & 0.021 ± 0.078           & 0.015 ± 0.070            & 0.011 ± 0.033           & 0.007 ± 0.026            \\
    GAT-8L                 & 3.95M                      & 0.032 ± 0.079           & 0.017 ± 0.047            & 0.020 ± 0.034           & 0.011 ± 0.027            \\ \bottomrule
    \end{tabular}%
    }
\end{table}

\subsubsection{Node Labeling Overhead}
\label{app:subgraph_extraction_overhead}

We further measure the computational overhead of subgraph extraction under commonly used structural and positional encodings. Specifically, we evaluate extraction time and memory usage on a synthetic graph with \(10{,}000\) nodes and average degree \(15\), comparing unlabeled subgraphs with DRNL, Random Walk Positional Encoding (RWPE) of length \(8\), and Laplacian Positional Encoding (LapPE) with \(k=4\), which are commonly used in SEAL-style and graph Transformer methods. As shown in Table~\ref{tab:subgraph_extraction_overhead}, PEs introduce substantial overhead, and even DRNL requires approximately \(1.6\times\) to \(4\times\) more extraction time than the unlabeled subgraphs used by PENCIL.

\begin{table}[h]
    \centering
    \caption{Subgraph extraction overhead on a \(10{,}000\)-node graph with average degree \(15\). Time is reported in milliseconds and memory is reported in megabytes.}
    \label{tab:subgraph_extraction_overhead}
    \begin{tabular}{lcccccccccc}
        \toprule
        \multirow{2}{*}{Sampling config}
        & \multirow{2}{*}{Maximum \# nodes}
        & \multicolumn{2}{c}{No labeling}
        & \multicolumn{2}{c}{DRNL}
        & \multicolumn{2}{c}{RWPE}
        & \multicolumn{2}{c}{LapPE} \\
        \cmidrule(lr){3-4}
        \cmidrule(lr){5-6}
        \cmidrule(lr){7-8}
        \cmidrule(lr){9-10}
        & & Time & Mem. & Time & Mem. & Time & Mem. & Time & Mem. \\
        \midrule
        \((1,20)\) & 42   & 0.109 & 0.07 & 0.430 & 0.19 & 0.161 & 0.05 & 0.270 & 0.07 \\
        \((1,30)\) & 62   & 0.132 & 0.05 & 0.454 & 0.05 & 0.184 & 0.05 & 0.387 & 0.11 \\
        \((2,20)\) & 842  & 1.392 & 0.07 & 2.225 & 0.35 & 6.980 & 0.07 & 6.889 & 0.24 \\
        \((2,30)\) & 1862 & 4.155 & 0.17 & 5.676 & 1.10 & 60.371 & 0.17 & 20.846 & 0.50 \\
        \bottomrule
    \end{tabular}
\end{table}

\end{document}